\let\chapter\section
\newcommand{\Datalogpm}{Datalog$^\pm$\xspace}
\newcommand{\url}[1]{\texttt{#1}}
\DeclareMathAlphabet{\mathpzc}{OT1}{pzc}{b}{rm}
\newcommand{\rulelang}[1]{\textscale{1.1}{{\ensuremath{\mathpzc{#1}}}}}
\newcommand{\RgR}{\rulelang{g}\xspace}
\newcommand{\RfRgR}{\rulelang{fg}\xspace}
\newcommand{\RfRrRoR}{\rulelang{fr1}\xspace}
\newcommand{\RgRfRrRoR}{\rulelang{gfr1}\xspace}
\newcommand{\RwRgR}{\rulelang{wg}\xspace}
\newcommand{\RwRfRgR}{\rulelang{wfg}\xspace}
\newcommand{\RwRfRrRoR}{\rulelang{wfr1}\xspace}
\newcommand{\RwRgRfRrRoR}{\rulelang{wgfr1}\xspace}
\newcommand{\RfReRsR}{\rulelang{fes}\xspace}
\newcommand{\RfRuRsR}{\rulelang{fus}\xspace}
\newcommand{\RbRtRsR}{\rulelang{bts}\xspace}
\newcommand{\RgRbRtRsR}{\rulelang{gbts}\xspace}
\newcommand{\fun}[1]{\ensuremath{\textsl{#1}}}
\newcommand{\terms}[1]{\ensuremath{\fun{terms}(#1)}}
\newcommand{\vars}[1]{\ensuremath{\fun{vars}(#1)}}
\newcommand{\body}[1]{\ensuremath{\fun{body}(#1)}}
\newcommand{\bod}[1]{\ensuremath{\fun{body}(#1)}}
\newcommand{\head}[1]{\ensuremath{\fun{head}(#1)}}
\newcommand{\atoms}[1]{\ensuremath{\fun{atoms}(#1)}} 
\newcommand{\fr}[1]{\ensuremath{\fun{fr}(#1)}}
\newcommand{\NP}{{\sc{NP}}}
\newcommand{\PTime}{{\sc{PTime}}}
\newcommand{\ExpTime}{{\sc{ExpTime}}}
\newcommand{\ExpExpTime}{{\sc{2ExpTime}}}
\newtheorem{theorem}{Theorem}
\newtheorem{lemma}[theorem]{Lemma}
\newtheorem{property}[theorem]{Property}
\newtheorem{corollary}[theorem]{Corollary}
\newtheorem{definition}{Definition}
\newtheorem{example}{Example}
\newcommand{\citep}[1]{\cite{#1}}
\title{Worst-case Optimal Query Answering for Greedy Sets of Existential Rules and Their Subclasses}
\author{\name Sebastian Rudolph\thanks{This work has been partially realized while S. Rudolph was working at AIFB, KIT, Germany and M. Thomazo was a Ph.D. student at University Montpellier 2} \email sebastian.rudolph@tu-dresden.de \\
       \addr TU Dresden, Germany
        \AND
       \name Micha\"{e}l Thomazo$^*$ \email michael.thomazo@tu-dresden.de \\
       \addr TU Dresden, Germany
       \AND
       \name Jean-Fran\c{c}ois Baget \email baget@lirmm.fr \\
       \addr Inria, France
       \AND
       \name Marie-Laure Mugnier \email mugnier@lirmm.fr \\
       \addr University Montpellier 2, France}
\begin{document}

\maketitle

\begin{abstract}

The need for an ontological layer on top of data, associated with advanced reasoning mechanisms able to exploit the semantics encoded in ontologies, has been acknowledged both in the database and knowledge representation communities. We focus in this paper on the ontological query answering problem, which consists of querying data while taking ontological knowledge into account. More specifically, we establish complexities of the conjunctive query entailment problem for classes of \emph{existential rules} (also called tuple-generating dependencies, \Datalogpm rules, or $\forall\exists$-rules). Our contribution is twofold. First, we introduce the class of \emph{greedy bounded-treewidth sets} {(\RgRbRtRsR)} of rules, which covers guarded rules, and their most
well-known generalizations. 
We provide a generic algorithm for query entailment under \RgRbRtRsR, which is worst-case optimal for combined complexity with or without bounded predicate arity, as well as for data complexity and query complexity. Secondly, we classify several \RgRbRtRsR classes, whose complexity was unknown,
with respect to combined complexity (with both unbounded and bounded predicate arity) and data complexity to obtain a comprehensive picture of the complexity of existential rule fragments that are based on diverse guardedness notions. Upper bounds are provided by showing that the proposed algorithm is optimal for all of them.


\end{abstract}


\newpage


\newpage

\section{Introduction}

%
Intelligent methods for searching and managing large amounts of
data require rich and elaborate schematic ``ontological'' knowledge.
The need for an ontological layer on top of that data, associated
with advanced querying mechanisms able to exploit the semantics
encoded in ontologies\footnote{In this paper, we reserve the
term \emph{ontology} to general domain knowledge--sometimes also
called \emph{terminological} knowledge--in order to clearly distinguish it
from the factual data--or \emph{assertional} knowledge.},
has been widely
acknowledged both in the knowledge representation (KR) and database
communities.

Deductive databases and KR typically adopt two different perspectives
on how to add this ontological layer to the picture of plain query answering \cite<cf.>{DBLP:conf/amw/Rudolph14}.
In deductive databases, this knowledge is considered part of the query,
forming a so-called ontology-mediated query to be executed on the database.
According to the KR perspective, knowledge is encoded in an ontology,
and queries are asked to a knowledge base composed of the data and the ontology.
In this paper, we will focus on the KR perspective.

Indeed, ontologies, which typically encode general knowledge about the domain of interest,
can be used to infer data
 that are not explicitely stored, hence palliating incompleteness in databases \cite{pods-03-clr}.
 They can also be used to enrich the vocabulary of data sources,
 which allows a user to abstract from the specific way data are stored.
 Finally, when several data sources use different vocabularies,
 ontologies can be used to align these vocabularies.

Given a knowledge base (KB) composed of an ontology
and of factual knowledge, and a query, the ontology-based query answering problem
consists in computing the set of answers to the query on the KB,
while taking implicit knowledge represented in the ontology into
account. We make here the simplifying assumption that the ontology
and the database use the same vocabulary. Otherwise, mappings have to be defined
between both vocabularies,  as in the ontology-based data access framework \cite{jods-08-plcglr}.
As most work in this area, we focus on conjunctive queries (CQs),
the basic and most frequent querying formalism in databases.


In the Semantic Web area, one of the most prominent fields where KR
technology is practically applied, ontological knowledge is often
represented by means of formalisms based on description logics (DLs,
\cite{dlhandbook,DBLP:conf/rweb/Rudolph11}). However, DLs are
restricted in terms of expressivity in that they usually  support only unary
and binary predicates
and that terminological expressiveness is
essentially restricted to tree-like dependencies between the atoms
of a formula. Moreover, DL research has traditionally been focusing on so-called
standard reasoning tasks about the knowledge base, which are reducible to
knowledge base satisfiability, for instance classifying concepts;
querying tasks were essentially
restricted to ground atom entailment. Answering full conjunctive
queries (CQs) over DL knowledge bases has become a subject of
research only recently\footnote{CQ answering in the context of DLs
was first mentioned by \citeauthor{LeRo96a} \citeyear{LeRo96a}, with
the first publication focusing on that subject by \citeauthor{CaDL98a} \citeyear{CaDL98a}.}, turning out to be extremely complex (e.g.,
for the classical DL $\mathcal{ALCI}$, it is already
\ExpExpTime-complete, and still \NP-complete in the size of the
data). Consequently, conjunctive query answering has been
particularly studied on less expressive DLs, such as DL-Lite
\cite{dl-lite} and $\mathcal {EL}$ \cite{Baader03:lcs,lutz:09}. These
DLs are the basis of so-called tractable profiles OWL~2 QL and OWL~2 EL of the
Semantic Web language OWL~2 \cite{owl2-overview,owl2-profiles}.\footnote{Beside the profiles based on DL-Lite and $\mathcal{EL}$,
there is a third OWL~2 tractable profile, OWL~2 RL, which can be seen as a restriction of Datalog.}

On the other hand, querying large amounts of data is the fundamental
task in databases. Therefore, the  challenge in this domain is now
to access data while taking ontological knowledge into account. The
deductive database language Datalog allows to express some
ontological knowledge.
However, in Datalog rules, variables are
range-restricted, i.e., all variables in the rule are universally
quantified, which does not allow to infer the existence of initially
unknown domain individuals (a capability called \emph{value
invention} in databases \citep{Alice}). Yet, this feature has been
recognized as crucial in an open-world perspective, where it cannot
be assumed that all individuals are known in advance.

To accommodate the requirements sketched above -- value invention
and complex rela\-tion\-ships -- we consider here an extension of
first-order function-free Horn rules that allows for existentially
quantified variables in the rule heads and thus features value
invention. More precisely, these extended rules are of the form
$\mathit{Body} \rightarrow \mathit{Head}$, where $\mathit{Body}$ and
$\mathit{Head}$ are conjunctions of atoms, and variables occurring
only in the $\mathit{Head}$ are existentially quantified, hence
their name ``existential rules'' in \citep{bmrt:11,kr:11}.

\begin{example} \label{ex-human}
Consider the existential rule  $$R = \forall x
\Big(\mathit{human}(x) \rightarrow \exists
y\big(\mathit{hasParent}(x,y) \wedge \mathit{human}(y)\big)\Big)$$
and a fact $F = \mathit{human(a)}$, where $a$ is a constant. The
application of $R$ to $F$ produces new factual knowledge, namely
$$\exists y_0 \big(\mathit{hasParent}(a,y_0) \wedge
\mathit{human}(y_0)\big),$$ where $y_0$ is a variable denoting an
unknown individual. Note that $R$ could be applied again to
$\mathit{human}(y_0)$, which would lead to create another
existentially quantified variable, and so on.
\end{example}

Such rules are well-known in databases as  Tuple-Generating
Dependencies (TGDs) \citep{beeri-vardi:84} and have been extensively
used, e.g., for data exchange \citep{fagin-kolaitis-al:05}. Recently,
the corresponding logical fragment has gained new interest in the
context of ontology-based query answering. It has been
introduced as the \Datalogpm framework in
\citep{cali-gottlob-kifer:08,Cali2009,cali-glmp:10}, and
independently, stemming from graph-based knowledge representation
formalisms \citep{chein-mugnier:09}, as $\forall\exists$ rules
\citep{blms:09,blm:10}.

 This rule-based framework generalizes the core of the lightweight description logics
mentioned above, namely DL-Lite and $\mathcal {EL}$.\footnote{The DL
constructor called existential restriction ($\exists R.C$) is
fundamental in these DLs. The logical encoding  of an axiom
that contains $\exists R.C$ in its right-hand side requires an existentially
quantified variable in the corresponding rule head. For instance,
the rule from Example~\ref{ex-human} can be seen as the logical
translation of the DL axiom $Human \sqsubseteq
\exists hasParent.Human$.} Moreover, in the case of the DL-Lite family
\citep{dl-lite}, it has been shown that this covering by a
\Datalogpm fragment is done without increasing complexity
\citep{Cali2009}.

Several fundamental decision problems can be associated
with conjunctive query answering under existential rules.
In this paper, we consider the entailment problem of a Boolean conjunctive query under existential rules,
 which we are now able to define formally. A Boolean conjunctive query is an existentially closed
 conjunction of (function-free) atoms. A set of facts has the same form.
 A knowledge base is composed of a set of facts and a set of existential rules. The entailment problem
 takes as input a knowledge base and a Boolean conjunctive query and asks if this query is entailed
 by the knowledge base.

The presence of existentially quantified variables in rule heads, associated with
arbitrarily complex conjunctions of atoms, makes the entailment problem undecidable
\citep{beeri-vardi:81,chandra-lewis-makowsky:81}.
Since the birth of
TGDs, and recently within the \Datalogpm and $\forall \exists$ rule
frameworks, various conditions of decidability have been exhibited.
Three ``abstract'' classes have been introduced in \citep{blm:10} to
describe known decidable behaviors: an obvious condition of
decidability is the finiteness of the forward chaining (known as the
\emph{chase} in the TGD framework
\citep{mms:79,johnson-klug:84,beeri-vardi:84});
sets of rules ensuring this condition are called \emph{finite
expansion sets} (\RfReRsR); a more general condition introduced in
\citep{cali-gottlob-kifer:08} accepts infinite forward chaining
provided that the facts generated have a bounded treewidth (when
seen as graphs); such sets of rules are called \emph{bounded-treewidth
sets} (\RbRtRsR); then decidability follows from the decidability of
first-order logic (FOL) classes with the bounded-treewidth model
property \citep{courcelle:90}. The third condition, giving rise to
\emph{finite unification sets} (\RfRuRsR), relies on the finiteness
of (a kind of) backward chaining mechanism, this condition is also
known as \emph{first-order rewritability}. None of these three
abstract classes is recognizable, i.e., the problem of deciding
whether a given set of rules is \RfReRsR, \RbRtRsR, or \RfRuRsR is
undecidable \citep{blm:10}.

In this paper, we focus on the \RbRtRsR paradigm and its main
``concrete'' classes. (Pure) \emph{Datalog} rules (i.e.,
without existential variables) are \RfReRsR (thus \RbRtRsR).
\emph{Guarded} (\RgR) rules \citep{cali-gottlob-kifer:08} are
inspired by the guarded fragment of FOL
\citep{andreka-benthem-nemeti:96,andreka-nemeti-benthem-98}. Their
body has an atom, called a guard, that contains all variables from the
body. Guarded rules are \RbRtRsR (and not \RfReRsR). They are
generalized by \emph{weakly guarded rules} (\RwRgR), in which the
guarding condition is relaxed: only so-called ``affected'' variables
need to be guarded; intuitively, affected variables are variables
that are possibly mapped, during the forward chaining process, to
newly created variables  \citep{cali-gottlob-kifer:08}. \RwRgR rules
include Datalog rules (in which there are no affected variables).
Other decidable classes rely on the notion of the \emph{frontier} of
a rule (the set of variables shared between the body and the head of
a rule). In a \emph{frontier-one} rule {(\RfRrRoR)}, the frontier is
restricted to a single variable \citep{blms:09}. In a
frontier-guarded rule {(\RfRgR)}, an atom in the body  guards the
frontier \citep{blm:10}. Hence, \RfRgR rules generalize both
\emph{guarded} rules and \RfRrRoR rules. When requiring only
affected variables from the frontier to be guarded, we obtain the
still decidable class of \emph{weakly frontier-guarded rules
}{(\RwRfRgR)}, which generalizes both \RfRgR and \RwRgR classes
\citep{blm:10}. Not considered until now were rule sets obtained by
straightforward combinations of the above properties: \emph{guarded
frontier-one rules} (\RgRfRrRoR) as well as \emph{weak frontier-one
rules} (\RwRfRrRoR) and \emph{weakly guarded frontier-one rules}
(\RwRgRfRrRoR). Table~\ref{table:complexity} summarizes the
considered existential rule fragments with respect to their
constraints on frontier variables and guardedness.


\begin{table}[th] \label{table:complexity}
\begin{center}
\begin{tabular}{|l|l|l|l|l|}
        \hline
         \hfill\textbf{syntactic\ \ \ }     & non-affected  & non-frontier & at most one\\
         \hfill\textbf{properties\ \ \ }     & variables     & variables    & frontier\\
                                & must be       & must be      & variable that\\
         \textbf{class (abbrv)} & guarded       & guarded      & needs guarding \\
         \hline\hline
         guarded frontier-one rules (\RgRfRrRoR) & yes & yes & yes \\
         \hline
         guarded rules (\RgR) & yes & yes & no \\
         \hline
         frontier-one rules (\RfRrRoR) & yes & no & yes \\
         \hline
         frontier-guarded rules (\RfRgR) & yes & no & no \\
         \hline
         weakly guarded frontier-one rules (\RwRgRfRrRoR) & no & yes & yes \\
         \hline
         weakly guarded rules (\RwRgR) & no & yes & no \\
         \hline
         weakly frontier-one rules (\RwRfRrRoR) & no & no & yes \\
         \hline
         weakly frontier-guarded rules (\RwRfRgR) & no & no & no \\
         \hline
\end{tabular}
\end{center}
\caption{Considered classes with syntactic properties.}
\end{table}


\begin{example}  We consider the following relations, where the subscripts indicate the arity of the relation:  \emph{project$_{/3}$},
 \emph{projectField$_{/2}$},  \emph{projectDpt$_{/2}$}, \emph{hasManager$_{/2}$}, \emph{memberOf$_{/2}$}, \emph{isSensitiveField$_{/1}$} and \emph{isCriticalManager$_{/1}$}. Intuitively, \emph{project($x,d,z$)} means that $x$ is a project in department $d$ and is about field $z$; relations \emph{projectField} and \emph{projectDpt} are projections of the \emph{project } relation; \emph{hasManager($x,y$)} and \emph{member($y,d$)} respectively mean that $x$ is managed by $y$ and $y$ is member of $d$; relations \emph{isSensitiveField} and \emph{isCriticalManager} respectively apply to fields and managers.
 Let $\mathcal R$ be the following set of existential rules built on this vocabulary:
\begin{itemize}
\item \emph{Decomposition of the relation project into two binary relations}\\
$R_0 = \forall x \forall d \forall z (project(x,d,z) \rightarrow projectDpt(x,d) \wedge projectField(x,z)$
\item \emph{``Every project has a manager''}\\
$ R_1 =\forall x \forall z (projectField(x,z) \rightarrow \exists y ~hasManager(x,y))$
\item \emph{``Every managed project has some field''}\\
$R_2 =\forall x  \forall y (hasManager(x,y) \rightarrow \exists z ~projectField(x,z))$
\item \emph{``The manager of a project is a member of the department that owns the project''}\\
$R_3 =\forall x \forall y \forall d ( hasManager(x,y) \wedge projectDpt(x,d) \rightarrow memberOf(y,d))$
\item \emph{``If a manager manages a project in a sensitive field, then (s)he is a critical manager''}\\
$R_4 =\forall x \forall y \forall z(hasManager(x,y)~\wedge~projectField(x,z)~\wedge~isSensitiveField(z) \rightarrow isCriticalManager(y))$
\item \emph{``Every critical manager manages a project in a sensible field''}\\
$R_5 =\forall  y (isCriticalManager(y) \rightarrow \exists x \exists z (hasManager(x,y) \wedge projectField(x,z) \wedge isSensitiveField(z)))$
\end{itemize}
Note that rules $R_0$, $R_3$ and $R_4$ do not introduce any existential variable.
Rules $R_0$, $R_1$, $R_2$ and $R_5$ have an atomic body, hence they are trivially guarded.
Rule $R_4$ is not guarded, but it is frontier-one.
Hence, if we exclude $R_3$, all rules are frontier-guarded.
$R_3$ is not frontier-guarded, since no atom from the body contains both frontier variables $y$ and $d$;
however, we remark that variable $d$ is not affected, i.e., it can never be mapped to a newly created variable
(indeed, the only rule able to produce an atom with predicate \emph{projectDpt }is $R_0$;
the variables in this atom come from the body atom with predicate \emph{project}, which never appears in a rule head,
hence can only be mapped to initially present data).
Affected frontier-variables are guarded in all rules, hence $\mathcal R$ is weakly frontier-guarded.
$\mathcal R$ is even weakly frontier-one, since for each rule the frontier contains at most one affected variable.
\end{example}

Contrarily to \RfReRsR and \RfRuRsR, the definition of \RbRtRsR is
not based on a constructive entailment procedure. The complexity of the subclasses \RgR and \RwRgR is known and an algorithm for the corresponding entailment problem has been provided \citep{cali-gottlob-kifer:08,Cali2009}. However, this is not
the case for the classes \RgRfRrRoR, \RfRrRoR, \RfRgR, \RwRgRfRrRoR,
\RwRfRrRoR, \RwRfRgR, and \RgRbRtRsR. The aim of this paper is to
solve these algorithmic and complexity issues.
%

Our contribution is threefold. First, by imposing a restriction on the
allowed forward-chaining derivation sequences, we define a subclass
of \RbRtRsR, namely \emph{greedy bounded-treewidth sets} of rules
{(\RgRbRtRsR)}, which have the nice property of covering the
\RwRfRgR class. \RgRbRtRsR are defined by a rather simple condition:
when such a set is processed in forward chaining and a rule $R$ is
applied, all frontier variables of $R$ which are not mapped to terms
from the initial data set must be uniformly mapped to terms
introduced by one \emph{single} previous rule application. The
fundamental property satisfied thanks to this condition is that any
derivation can be naturally associated with a bounded-width
tree decomposition of the derived facts, which can be built in a ``greedy
manner'', that is, on the fly during the forward chaining process.
We also prove that \RwRfRgR and \RgRbRtRsR have essentially the same expressivity.


Secondly, we provide a generic algorithm for the  \RgRbRtRsR class,
which is worst-case
optimal for data complexity, for combined complexity (with or
without bound on the arity of involved predicates), and for query
complexity.
We furthermore show that this algorithm can be slightly adapted to
be worse-case optimal for subclasses with smaller complexities.

Thirdly, we classify \RgRfRrRoR, \RfRrRoR, \RfRgR,
\RwRgRfRrRoR, \RwRfRrRoR, \RwRfRgR, and \RgRbRtRsR with respect to
both combined (with and without predicate arity bound) and data
complexities. We also consider the case of rules with an acyclic
(more precisely, hypergraph-acyclic) body  and point out that
body-acyclic \RfRgR rules coincide with guarded rules from an
expressivity and complexity perspective.

\begin{figure}[th]
\begin{center}
\includegraphics[width=10cm]{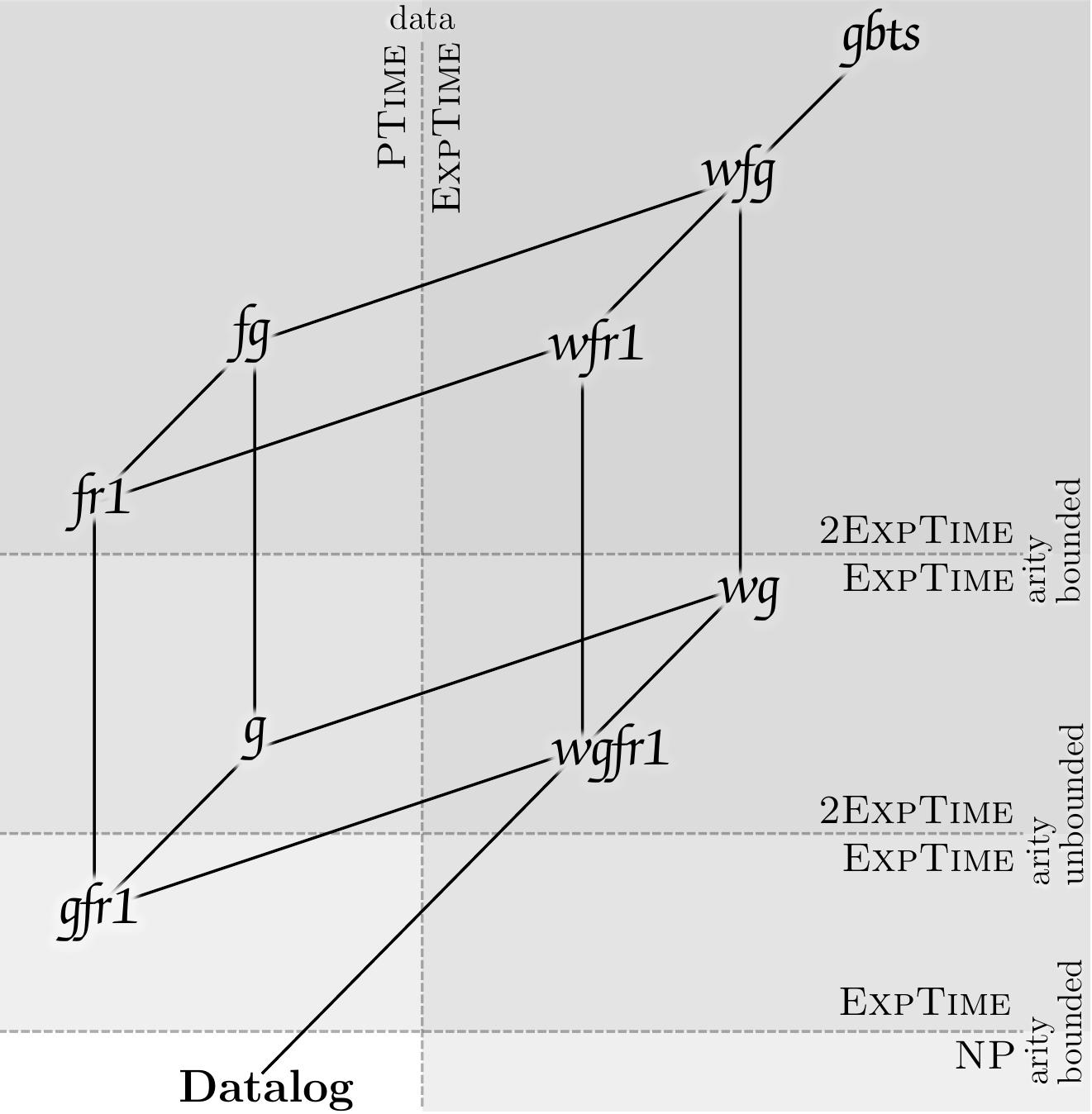}

\caption{Existential rule fragments, their relative expressiveness,
and complexities.} \label{complexities-min}
\end{center}
\end{figure}

Fig.~\ref{complexities-min} shows the complexity lines for these
classes of rules with three complexity measures, namely combined
complexity without or with bound on the predicate arity, and data
complexity. Notice in particular that the two direct extensions of  guarded rules,
i.e., weakly guarded and frontier-guarded rules,  do not behave
in the same way with respect to the different complexity measures:
for data complexity, \RfRgR rules remain in \PTime{},
while \RwRgR rules  are in \ExpTime{}; on the other hand,
for bounded-arity combined complexity,  \RfRgR rules are in  \ExpExpTime{},
while \RwRgR rules remain in \ExpTime{}.
%
Precise complexity results obtained are given in
Tab.~\ref{table:complexity}. All complexities are complete for their class.
New results are indicated by a star.



\begin{table}[th] \label{table:complexity}
\begin{center}
\begin{tabular}{|l|l|l|l|}
        \hline
         \textbf{Class}   &\textbf{arity}& \textbf{arity} & \textbf{Data }\\
         \textbf{ }   &\textbf{ unbounded }& \textbf{ bounded} & \textbf{Complexity}\\
       \hline\hline
       {\bf Datalog}   & \ExpTime{}  & \NP{} & \PTime{}\\
       \hline
       {\bf \RgRfRrRoR}    &  \ExpTime{}$^{\star}$  & \ExpTime{}$^{\star}$ & \PTime{}$^{\star}$\\
       \hline
       {\bf \RgR} &   \ExpExpTime{}   & \ExpTime{} & \PTime{} \\
       \hline
       {\bf \RfRrRoR}    &  \ExpExpTime{}$^{\star}$  & \ExpExpTime{}$^{\star}$ & \PTime{}$^{\star}$\\
       \hline
       {\bf \RfRgR}   & \ExpExpTime{}$^{\star}$ & \ExpExpTime{}$^{\star}$ & \PTime{}$^{\star}$\\
       \hline
       {\bf \RwRgRfRrRoR}   & \ExpExpTime{}$^{\star}$  & \ExpTime{}$^{\star}$ & \ExpTime{}$^{\star}$\\
       \hline
       {\bf \RwRgR}   & \ExpExpTime{}  & \ExpTime{} & \ExpTime{}\\
       \hline
       {\bf \RwRfRrRoR}   & \ExpExpTime{}$^{\star}$  & \ExpExpTime{}$^{\star}$ & \ExpTime{}$^{\star}$\\
       \hline
       {\bf  \RwRfRgR}   &  \ExpExpTime{}$^{\star}$ & \ExpExpTime{}$^{\star}$ & \ExpTime{}$^{\star}$\\
       \hline
       {\bf  \RgRbRtRsR}  & \ExpExpTime{}$^{\star}$ & \ExpExpTime{}$^{\star}$ & \ExpTime{}$^{\star}$\\
       \hline
\end{tabular}
\end{center}
\caption{Combined and Data Complexities}
\end{table}

\paragraph{Paper Organization}
%
In Section \ref{sec:prelim}, basic definitions and results about existential
rules are recalled. Section \ref{sec:gbts} introduces the \RgRbRtRsR class and 
specifies its relationships with the \RwRfRgR class.
Section \ref{sec:gbts-algo} is devoted to an algorithm for
entailment with \RgRbRtRsR and to the associated complexity results.
The next sections consider increasingly simpler classes, namely
\RfRgR and \RfRrRoR (Section \ref{sec:fg}) and body-acyclic
\RfRgR/\RfRrRoR rules (Section \ref{sec:ba}); several reductions are
provided, which provide tight lower bounds and allow to completely
classify these classes for data and combined complexities. Related work is reviewed in Section \ref{sec-related}. 

This article is an extended version of two papers published at IJCAI 2011 \cite{bmrt:11}  and KR 2012 \cite{tbmr:12}, respectively.
It provides detailed proofs of the results presented in these conference papers and benefits from further clarifications concerning the \RgRbRtRsR algorithm, stemming from Micha\"{e}l Thomazo's PhD thesis \cite{thomazo:13}. Furthermore, it contains complexity results for new classes of rules which complement the picture, namely \RgRfRrRoR, \RwRgRfRrRoR and \RwRfRrRoR.

 %

\section{Preliminaries}\label{sec:prelim}

We assume the reader to be familiar with syntax and semantics of first-order logic.
We do not consider functional symbols except constants, hence a term is simply a 
variable or a constant. An \emph{atom} is thus of the form $p(t_1, \ldots, t_k)$ where $p$ is a predicate with arity $k$, and the $t_i$ are terms. If not otherwise specified, a conjunction is a finite conjunction of atoms. We denote it by $C[\bold{x}]$, where $\bold{x}$ is the set of variables occurring in $C$.


A \emph{fact} is the existential closure of a conjunction.\footnote{Note that hereby we generalize the classical notion of a (ground) fact in order to take existential variables into account. This is in line with the notion of a \emph{database instance} in database theory, where the existentially quantified variables are referred to as \emph{labeled nulls}.} A Boolean \emph{conjunctive query} (CQ) has the same form as a fact. We may also represent conjunctions of atoms, facts, and CQs as sets of atoms. Given an atom or a set of atoms $A$, we denote by $\vars{A}$ the set of variables, and by $\terms{A}$ the set of terms, occurring in $A$. Given conjunctions $F$ and $Q$, a \emph{homomorphism}  $\pi$ from $Q$ to $F$ is a substitution of $\vars{Q}$ by $\terms{F}$ such that $\pi(Q) \subseteq F$ (we say that $Q$ \emph{maps} to $F$ by $\pi$). 
For convenience, we often assume the domain of $\pi$ extended to $\terms{Q}$, by mapping constants to themselves.
Given an atom $ a = p(t_1, \ldots, t_n)$, we let $\pi(a) = p(\pi(t_1), \ldots, \pi(t_n))$ and similarly for a set of atoms.
 First-order semantic entailment is denoted by  $\models$.
  It is well-known that, given two facts or CQs $F$ and $Q$, $F \models Q$ iff there is a homomorphism from $Q$ to $F$.

\begin{definition}[Existential Rule] An \emph{existential rule} (or simply \emph{rule} when not ambiguous) is a first-order formula:

 $$R = \forall \bold{x} \forall \bold{y}\big(B[\bold{x},\bold{y}] \rightarrow \exists \bold{z} H[\bold{y},\bold{z}]\big),$$ 
 
 where $B$ is a conjunction, called the \emph{body} of $R$ (also denoted by $\bod{R}$), and $H$ is a conjunction called the \emph{head} of $R$ (denoted by $\head{R}$). The \emph{frontier} of $R$, denoted by $\fr{R}$, is the set of variables $\bold{y} = \vars{B} \cap \vars{H}$ occurring both in the rule's body and head.
\end{definition}

Note that an existential rule could be equivalently defined as the formula $\forall \bold{y} (\exists \bold{x} B[\bold{x},\bold{y}] \rightarrow \exists \bold{z} H[\bold{y},\bold{z}])$.  In the following, we will omit quantifiers since there is no ambiguity.


A \emph{knowledge base} (KB) $\mathcal K = (F, \mathcal R)$ is composed of a fact (in database terms: a database instance) $F$ and a finite set of rules (in database terms: a TGD set) $\mathcal R$. W.l.o.g. we assume that the rules have pairwise \emph{disjoint} sets of variables. We denote by $\mathcal C$  the set of constants occurring in $(F, \mathcal R)$ and by $T_0$ (called the set of ``initial terms'') the set $\vars{F} \cup \mathcal C$, i.e., $T_0$ includes not only the terms from $F$ but also the constants occurring in rules.
Next, we formally define the problem considered by us.


\begin{definition}[\textsc{BCQ-Entailment}]
The decision problem of \emph{entailment of Boolean conjunctive queries under existential rules} is defined as follows:
\begin{itemize}
\item \textsc{Name: BCQ-Entailment}
\item \textsc{Input}: A knowledge base $\mathcal{K}=(F,\mathcal{R})$ and a Boolean conjunctive query $Q$.
\item \textsc{Output}: YES iff $\mathcal{K} \models Q$, NO otherwise.
\end{itemize}
\end{definition}

Depending on which parts of the input are assumed to be fixed, we distinguish the following three complexity notions when investigating \textsc{BCQ-Entailment}:

\begin{itemize}
\item When considering \emph{data complexity}, we assume $\mathcal{R}$ and $Q$ to be fixed, only $F$ (the data) can vary.
\item When investigating \emph{query complexity}, $F$ and $\mathcal{R}$ are fixed and $Q$ may vary.
\item In case of \emph{combined complexity}, $F$, $\mathcal{R}$ and $Q$ can all change arbitrarily.
\end{itemize}

We now define the fundamental notions of rule application and $\mathcal R$-derivation, which we relate to the \emph{chase} procedure in databases. 

\begin{definition}[Application of a Rule] A rule $R$ is \emph{applicable} to a fact $F$ if there is a homomorphism $\pi$ from $\bod{R}$ to $F$; the result of the \emph{application of $R$ to $F$ w.r.t. $\pi$} is a fact $\alpha(F, R, \pi) = F \cup \pi^\mathrm{safe}(\head{R})$ where $\pi^\mathrm{safe}$ is a substitution of $\head{R}$, which replaces each $x \in \fr{R}$ with $\pi(x)$, and each other variable with a fresh variable. As $\alpha(F,R,\pi)$ does not depend on the whole $\pi$, but only on $\pi_{|\fr{R}}$ (the restriction of $\pi$ to $\fr{R}$), we also write $\alpha(F, R, \pi_{|\fr{R}})$.
\end{definition}

\begin{example}\label{ex-DT}    
Let $F = \{r(a,b), r(c,d),p(d)\}$
and $R = r(x,y) \rightarrow r(y,z)$. 
There are two applications of $R$ to $F$, respectively by $\pi_1 = \{x{\mapsto}a, y{\mapsto}b)\}$ and $\pi_2 = \{x{\mapsto}c,y{\mapsto}d\}$. We obtain $F_1 = \alpha(F, R, \pi_1) = F \cup \{r(b, z_1)\}$ and $F_2 = \alpha(F, R, \pi_2) = F \cup \{r(d, z_2)\}$.
\end{example}

%
\begin{definition}[$\mathcal R$-Derivation] Let $F$ be a fact and $\mathcal R$ be a set of rules. An $\mathcal R$-derivation (from $F$ to $F_k$) is a finite sequence $(F_0=F),(R_0, \pi_0, F_1), \ldots, (R_{k-1}, \pi_{k-1}, F_k)$
such that  for all $0 \leq i < k$, $R_i \in \mathcal R$ and $\pi_i$ is a homomorphism from $\bod{R_i}$ to $F_i$ such that  $F_{i+1} = \alpha(F_i, R_i, \pi_i)$. When only the successive facts are needed, we note $(F_0=F),F_1, \ldots, F_k$.
\end{definition}

The following theorem essentially stems from earlier results on conceptual graph rules \cite{salvat-mugnier:96}.

\begin{theorem}[Soundness and Completeness of $\mathcal R$-Derivations] Let $\mathcal{K}=(F,\mathcal{R})$ be a KB and $Q$ be a CQ. Then $F, \mathcal R \models Q$ iff there exists an $\mathcal R$-derivation from $F$ to some $F_k$ such that $F_k \models Q$.
\end{theorem}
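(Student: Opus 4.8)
The plan is to prove both directions of the biconditional separately, treating the $\mathcal R$-derivation as a concrete, finitary witness for semantic entailment.

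\medskip

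\noindent\textbf{Soundness (the ``if'' direction).}
First I would assume that there exists an $\mathcal R$-derivation from $F$ to some $F_k$ with $F_k \models Q$, and show $F, \mathcal R \models Q$. The key observation is that a single rule application is a \emph{sound} inference: if $R = B[\bold{x},\bold{y}] \rightarrow \exists \bold{z}\, H[\bold{y},\bold{z}]$ and $\pi$ is a homomorphism from $\bod{R}$ to some fact $G$, then $G \wedge R \models \alpha(G, R, \pi)$. Indeed, the atoms added by $\pi^{\mathrm{safe}}(\head{R})$ use fresh variables for the existentials of $R$, so in any model $\mathcal M$ of $G \wedge R$ one can interpret these fresh variables by the witnesses guaranteed by the rule $R$ applied to the images of the frontier under $\pi$. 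I would then prove by induction on $i$ that $F, \mathcal R \models F_i$ for every $0 \le i \le k$: the base case $i = 0$ is immediate since $F_0 = F$, and the inductive step uses the single-step soundness just sketched together with the fact that $R_i \in \mathcal R$. Since $F_k \models Q$ means there is a homomorphism from $Q$ to $F_k$ (by the characterization of $\models$ recalled in the preliminaries), transitivity of entailment gives $F, \mathcal R \models Q$.

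\medskip

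\noindent\textbf{Completeness (the ``only if'' direction).}
For the converse I would assume $F, \mathcal R \models Q$ and construct a suitable derivation. The natural strategy is to appeal to the \emph{chase}: the (possibly infinite) fair forward-chaining sequence built by applying all applicable rules produces a fact $F_\infty$ (a so-called \emph{universal model} or canonical instance) that is a model of $F \wedge \mathcal R$ and that maps by homomorphism into every other model. Because $F, \mathcal R \models Q$ and $F_\infty$ is itself a model, we get $F_\infty \models Q$, i.e.\ there is a homomorphism $\pi$ from $Q$ to $F_\infty$. Since $Q$ is a \emph{finite} conjunction and every atom of $F_\infty$ is introduced at some finite stage of the chase, the image $\pi(Q)$ lies within some finite prefix $F_k$ of the chase. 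That finite prefix is exactly an $\mathcal R$-derivation from $F$ to $F_k$, and $F_k \models Q$ via $\pi$, which is what we want.

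\medskip

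\noindent\textbf{Main obstacle.}
The soundness direction is routine model-theoretic bookkeeping; the real subtlety is in completeness, specifically justifying that the forward chaining yields a structure into which \emph{every} model embeds (universality) and that a fair derivation exists. The delicate point is \emph{fairness}: one must order rule applications so that every rule application that becomes applicable is eventually performed (otherwise the limit need not be a model of $\mathcal R$), and simultaneously argue that the query's finitely many atoms are all produced after finitely many steps. I would handle this by invoking the standard fairness argument for the chase and the compactness-style observation that a homomorphism from a finite query can only touch finitely many atoms, each of finite ``birth stage''; taking the maximum stage over these finitely many atoms yields the required finite $k$. I expect that the cleanest route, rather than reproving universality from scratch, is to cite the soundness and completeness of forward chaining for conceptual graph rules, as the excerpt indicates this theorem ``essentially stems from'' that earlier work, and merely to reconcile the labeled-null fact representation used here with the models produced by the chase.
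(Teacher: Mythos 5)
Your proposal is correct, but note that the paper itself gives no proof of this theorem: it only remarks that the result ``essentially stems from earlier results on conceptual graph rules'' and cites \citep{salvat-mugnier:96}, so there is no in-paper argument to compare with line by line. What you wrote is exactly the standard argument that this citation outsources, and it is consistent with what the paper recalls immediately after the theorem statement, namely that the chase yields a universal (canonical) model and that $(F,\mathcal R)\models Q$ iff $Q$ maps by homomorphism to $\alpha_\infty(F,\mathcal R)$. Two small points deserve explicit care if you write the completeness direction out in full. First, the criterion ``entailment iff homomorphism'' is stated in the preliminaries for facts and CQs, which are finite, whereas $\alpha_\infty(F,\mathcal R)$ may be infinite; you should extract the homomorphism $\pi$ from the satisfaction of the existentially closed $Q$ in the canonical model whose domain is $\terms{\alpha_\infty(F,\mathcal R)}$ and whose relations are read off the atoms, rather than quoting the finite-fact criterion verbatim --- this is precisely where universality and fairness do their work, as you anticipated. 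Second, a ``finite prefix'' of the breadth-first chase is a finite sequence of saturation steps, each of which bundles several rule applications; to obtain an $\mathcal R$-derivation in the paper's sense you must unfold each saturation step into its individual applications, and you should note that this unfolding is finite because $F$ and $\mathcal R$ are finite, so only finitely many new homomorphisms arise at each stage. With those two remarks made explicit, your soundness induction (single-step soundness of $\alpha(G,R,\pi)$ with fresh variables interpreted by the witnesses the rule guarantees, then induction on $i$) and your birth-stage argument for completeness are exactly the intended proof.
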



It follows that a breadth-first forward chaining mechanism yields a positive answer in finite time when $\mathcal K \models Q$. Let $F_0=F$ be the initial fact. Each step is as follows: (1) check if $Q$ maps by homomorphism to the current fact, say $F_{i-1}$ at step $i$ ($i > 0$): if it is the case, $Q$ has a positive answer; (2) otherwise, produce a fact $F_{i}$ from $F_{i-1}$, by computing all new homomorphisms from each rule body to $F_{i-1}$, then performing all corresponding rule applications. A homomorphism to $F_{i-1}$ is said to be \emph{new} if it has not been already computed at a previous step, i.e., it uses at least an atom added at step $i-1$. The fact $F_k$ obtained at the end of step $k$ is called the\emph{ $k$-saturation} of $F$ and is denoted by $\alpha_k(F, \mathcal R)$; we define the \emph{saturation} of $F$ by $\mathcal R$ as 
$\alpha_\infty(F, \mathcal R) = \cup_{k \geq 0} \alpha_k(F, \mathcal R)$. 



Preceding notions are closely related to classical database notions. Forward  chaining (with existential rules) is known as the \emph{chase} (with TGDs) \cite{mms:79,abu:79}. Hence, the notion of an $\mathcal R$-derivation corresponds to a chase sequence. The chase is seen as a tool for computing the saturation of a database with respect to a set of TGDs. Several variants of the chase are known, which all produce a result homomorphically equivalent to $\alpha_\infty(F, \mathcal R)$. The chase yields a \emph{canonical model } of $(F, \mathcal R)$, which is isomorphic to the output of the chase, and has the property of being \emph{universal}, which means that it maps by homomorphism to any model of $(F, \mathcal R)$. It follows that $(F, \mathcal R) \models Q$ if and only if $Q$ maps by homomorphism to $\alpha_\infty(F, \mathcal R)$ \cite{deutsch:08} (and \cite{blms:11} in the setting of existential rules).

We now formally specify some other notions that we have already introduced informally. A fact can naturally be seen as a hypergraph whose nodes are the terms in the fact and whose hyperedges encode the atoms. The \emph{primal graph} of this hypergraph has the same set of nodes and there is an edge between two nodes if they belong to the same hyperedge. The \emph{treewidth} of a fact is defined as the treewidth of its associated primal graph. Given a fact $F_i$, a derivation $S$ yielding $F_i$, or a tree decomposition $\mathfrak{T}$ of $F_i$, we let $\atoms{S}=\atoms{\mathfrak{T}} = F_i$.

\begin{definition}[Tree Decomposition and Treewidth of a Fact] Let $F$ be a (possibly infinite) fact. A \emph{tree decomposition} of $F$ is a (possibly infinite) tree $\mathfrak{T}$, with set of nodes $\mathcal B = \{B_0, \ldots, B_k, \ldots\}$, and two functions $\fun{terms}: \mathcal{B} \to 2^{\terms{F}}$ and $\fun{atoms}: \mathcal{B} \to 2^{{F}}$, where:
\begin{enumerate}

    \item $\bigcup_{i} \terms{B_i} =\terms{F}$;

    \item $\bigcup_{i} \atoms{B_i} = F$;

    \item  For each $B_i \in \mathcal B$ holds $\terms{\atoms{B_i}}\subseteq \terms{B_i}$;

\item For each term $e$ in $F$, the subgraph of $\mathfrak{T}$ induced by the nodes $B_i$ with $e \in \terms{B_i}$ is connected (``Running intersection property'').
\end{enumerate}

The \emph{width} of a tree decomposition $\mathfrak{T}$ is the size of the largest node of $\mathfrak{T}$, minus 1. The \emph{treewidth} of a fact $F$ is the minimal width among all its possible tree decompositions.
\end{definition}

A set of rules $\mathcal R$ is called a \emph{bounded-treewidth set (\RbRtRsR)} if for any fact $F$ there exists an integer $b$ such that the treewidth of any fact $F'$ that can be $\mathcal R$-derived from $F$ is less or equal to $b$. 
The \textsc{entailment} problem is decidable when $\mathcal R$ is \RbRtRsR \cite{cali-gottlob-kifer:08,blms:11}. The main argument of the proof, introduced by \citeauthor{cali-gottlob-kifer:08}, relies on the observation that $\mathcal K \wedge \neg Q$ enjoys the bounded-treewidth model property, i.e,  has a model with bounded treewidth when it is satisfiable, i.e., when  $\mathcal K \not \models Q$. Decidability follows from the decidability of the satisfiability problem for classes of  first-order formulas having the bounded-treewidth property, a result from Courcelle \cite{courcelle:90}. However, the proof of this latter result does not lead (or at least not directly) to an algorithm for \textsc{BCQ-Entailment} under  \RbRtRsR rules. We now focus on ``concrete'' subclasses of  \RbRtRsR. 


A rule $R$ is \emph{guarded} {(\RgR)} if there is an atom $a \in \body{R}$ with $\vars{\body{R}} \subseteq \vars{a}$. We call $a$ a guard of the rule. $R$ is \emph{ weakly guarded} {(\RwRgR)} if there is $a \in \body{R}$ (called a weak guard) that contains all affected variables from $\body{R}$. The notion of affected variable is relative to the rule set: a variable is affected if it occurs only in affected predicate positions, which are positions that may contain an existential variable generated by forward chaining \citep{fagin-kolaitis-al:05}. More precisely, the set of affected positions w.r.t. $\mathcal R$ is the smallest set that satisfies the following conditions: (1)  if there is a rule head containing an atom with predicate $p$ and an existentially quantified variable in position $i$, then position $(p,i)$ is affected; (2) if a rule body contains a variable $x$ appearing in affected positions only and $x$ appears in the head of this rule in position $(q,j)$ then $(q,j)$ is affected. The important point is that a rule application necessarily maps non-affected variables to terms from the initial fact (and more generally to $T_0$ in the case where rules may add constants). The \RgR and \RwRgR rule classes were described and their complexity was analyzed by \citeauthor{cali-gottlob-kifer:08} \citeyear{cali-gottlob-kifer:08,DBLP:journals/jair/CaliGK13}.

$R$ is \emph{frontier-one} {(\RfRrRoR)} if $|\fr{R}|=1$ and it is \emph{guarded frontier-one} (\RgRfRrRoR) if it is both \RgR and \RfRrRoR. $R$ is \emph{frontier-guarded }{(\RfRgR)} if there is  $a \in \body{R}$  with $\vars{\fr{R}} \subseteq \vars{a}$. The weak versions of these classes---\emph{weakly frontier-one} {(\RwRfRrRoR)}, \emph{weakly guarded frontier-one} (\RwRgRfRrRoR) and \emph{weakly frontier-guarded} {(\RwRfRgR)} rules---are obtained by relaxing the above criteria so that they only need to be satisfied by the affected variables. The syntactic inclusions holding between these \RbRtRsR subclasses are displayed in Fig.~\ref{complexities-min}. %

\section{Greedy Bounded-Treewidth Sets of Rules} \label{sec:gbts}%

This section introduces greedy bounded-treewidth sets of rules (\RgRbRtRsR). It is pointed out that \RgRbRtRsR strictly contains the \RwRfRgR class. However, in some sense, \RgRbRtRsR is not more expressive than \RwRfRgR: indeed, we exhibit a polynomial translation $\tau$ from any KB $\mathcal K = (F, \mathcal R)$ to another KB $\tau (\mathcal K) = (\tau(F), \tau(\mathcal R))$ with $\tau(\mathcal R)$ being \RwRfRgR, which satisfies the following property:
 if $\mathcal R$ is \RgRbRtRsR, then $\mathcal  K$ and $\tau(\mathcal K)$ are equivalent.  This translation can thus be seen as a polynomial reduction from \textsc{BCQ-Entailment} under \RgRbRtRsR  to \textsc{BCQ-Entailment} under \RwRfRgR.


\subsection{Definition of the \RgRbRtRsR Class}

In a greedy derivation, every rule application maps the frontier of the rule in a special way: all the frontier variables that are mapped to terms introduced by rule applications are jointly mapped to variables added by one \emph{single} previous rule application.



\begin{definition}[Greedy Derivation]\label{def-greedy}
     An $\mathcal R$-der\-i\-va\-tion $(F_0=F), \ldots, F_k$ is said to be \emph{greedy} if, for all $i$ with $0 < i < k$,  there is $j < i$ such that $\pi_i(\fun{fr}(R_i)) \subseteq \fun{vars}(A_j) \cup \fun{vars}(F_0) \cup \mathcal C$, where $A_j = \pi_j^\mathrm{safe}(\fun{head}(R_j))$.
\end{definition}


Note that, in the above definition, any $j < i$ can be chosen if $\fr{R_i}$ is mapped to $\fun{vars}(F_0) \cup \mathcal C$.

\begin{example}[Non-Greedy Derivation]\label{ex-non-greedy} Let $\mathcal{R} = \{R_0, R_1\}$ where: \\
$$
\begin{array}{rll}
R_0 & = & r_1(x,y) \rightarrow r_2(y,z) \mbox{ and}\\
R_1 & = & r_1(x,y) \wedge r_2(x,z) \wedge r_2(y,t) \rightarrow r_2(z,t)\\
\end{array}
$$
Let $F_0 = \{r_1(a,b)\wedge r_1(b,c)\}$ and $S= F_0,\ldots, F_3$ with:\\
$$\begin{array}{rll@{\ \ \ }l}
F_1 & = & \alpha(F_0,R_0,\{(y{\mapsto}b)\}) & \mbox{with } A_0 = \{r_2(b,x_1)\},\\
F_2 & = & \alpha(F_1,R_0,\{(y{\mapsto}c)\}) & \mbox{with } A_1 = \{r_2(c,x_2)\},\\
F_3 & = & \alpha(F_2,R_1,\pi_2) & \mbox{with }{\pi_2}_{\mid \fr{R_1}} = \{z{\mapsto}x_1,t{\mapsto}x_2\}\\
\end{array}$$
Then $\fr{R_1} = \{z,t\}$ is mapped to newly introduced variables in $F_3$, however, there is no $A_j$ such that $\{\pi_2(z),\pi_2(t)\} \subseteq \vars{A_j}$. Thus $S$ is not greedy.
\end{example}

Any greedy derivation can be associated with a so-called derivation tree, formally defined below. Intuitively, the root of the tree corresponds to the initial fact $F_0$, and each other node corresponds to a rule application of the derivation. Each node is labeled by a set of terms and a set of atoms.
The set of terms assigned to the root is $T_0$, i.e., it includes the constants that are mentioned in rule heads. Moreover, $T_0$ is included in the set of terms of all nodes. This ensures that the derivation tree is a decomposition tree of the associated derived fact.

\begin{definition}[Derivation Tree]\label{def-dt} Let $S = (F_0=F), \ldots, F_k$ be a greedy derivation. The \emph{derivation tree} assigned to $S$, denoted by $\mathit{DT}(S)$, is a tree $\mathfrak{T}$ with nodes $\mathcal B = \{B_0, \ldots, B_k, \ldots\}$ and two functions $\fun{terms}: \mathcal{B} \to 2^{\terms{F_k}}$ and $\fun{atoms}: \mathcal{B} \to 2^{{F_k}}$, defined as follows:

\begin{enumerate}
    \item Let $T_0=\vars{F} \cup \mathcal C$. The root of the tree is $B_0$ with $\fun{terms}(B_0) = T_0$ and $\fun{atoms}(B_0) = \fun{atoms}(F)$.

    \item For $0 < i \leq k$, let $R_{i-1}$ be the rule applied according to homomorphism $\pi_{i-1}$ to produce $F_i$; then $\fun{terms}(B_{i}) = \fun{vars}(A_{i-1}) \cup T_0$ and $\fun{atoms}(B_{i}) = \fun{atoms}(A_{i-1})$. The parent of $B_i$ is the node $B_j$ for which $j$ is the smallest integer such that $\pi_{i-1}(\fr{R_{i-1}}) \subseteq \terms{B_j}$.
\end{enumerate}
The nodes of $\mathit{DT}(S)$ are also called \emph{bags}.
\end{definition}

\begin{example}[Example \ref{ex-DT} contd.]\label{ex-DT2}
 We consider  $F = \{r(a,b), r(c,d),p(d)\}$
and $R = r(x,y) \rightarrow r(y,z)$. We build $\mathit{DT}(S)$ for $S = (F_0=F), (R,\pi_1,F_1), (R,\pi_2,F_2)$ as depicted in Figure~\ref{DT}.
 Let $B_0$ be the root of $\mathit{DT}(S)$. $(R,\pi_1)$ yields a bag $B_1$ child of $B_0$, with $\atoms{B_1} = \{r(b, z_1)\}$ and $\terms{B_1}= \{a,b,c,d,z_1\}$. $(R,\pi_2)$ yields a bag $B_2$ with $\atoms{B_2} = \{r(d, z_2)\}$ and $\terms{B_2}= \{a,b,c,d,z_2\}$. $\fr{R_0} = \{y\}$ and $\pi_2(y) = d$, which is both in $\terms{B_0}$ and $\terms{B_1}$, $B_2$ is thus added as a child of the highest bag, i.e., $B_0$.
$R$ can be applied again, with homomorphisms $\pi_3 = \{x{\mapsto}b, y{\mapsto}z_1\}$ and $\pi_4 = \{x{\mapsto}d, y{\mapsto}z_2\}$, which leads to create two bags, $B_3$ and $B_4$, under $B_1$ and $B_2$ respectively. Clearly, applications of $R$ can be repeated indefinitely.
\end{example}

\begin{figure}
\begin{center}
\begin{tikzpicture}[line cap=round,line join=round,>=triangle 45,x=1.0cm,y=1.0cm]
\clip(-4.3,-3.45) rectangle (5.56,5.07);
\draw [rotate around={0:(0.76,3.62)}] (0.76,3.62) ellipse (1.52cm and 0.82cm);
\draw [rotate around={0:(-1.68,1.04)}] (-1.68,1.04) ellipse (1.52cm and 0.82cm);
\draw [rotate around={0:(3.53,1.05)}] (3.53,1.05) ellipse (1.52cm and 0.82cm);
\draw [rotate around={0:(-1.78,-2.02)}] (-1.78,-2.02) ellipse (1.52cm and 0.82cm);
\draw [rotate around={0:(3.53,-2.02)}] (3.53,-2.02) ellipse (1.52cm and 0.82cm);
\draw (-0.26,4.36) node[anchor=north west] {$a$};
\draw (-0.64,3.85) node[anchor=north west] {$r(a,b)$};
\draw (0.38,3.85) node[anchor=north west] {$r(c,d)$};
\draw (1.41,3.87) node[anchor=north west] {$p(d)$};
\draw (0.21,4.47) node[anchor=north west] {$b$};
\draw (0.64,4.36) node[anchor=north west] {$c$};
\draw (1.02,4.45) node[anchor=north west] {$d$};
\draw (-2.87,1.72) node[anchor=north west] {$a$};
\draw (-2.45,1.81) node[anchor=north west] {$b$};
\draw (-1.98,1.72) node[anchor=north west] {$c$};
\draw (-1.55,1.81) node[anchor=north west] {$d$};
\draw (-1.11,1.72) node[anchor=north west] {$z_1$};
\draw (-3.04,-1.38) node[anchor=north west] {$a$};
\draw (-2.62,-1.3) node[anchor=north west] {$b$};
\draw (-2.19,-1.38) node[anchor=north west] {$c$};
\draw (-1.83,-1.3) node[anchor=north west] {$d$};
\draw (2.26,1.66) node[anchor=north west] {$a$};
\draw (2.66,1.75) node[anchor=north west] {$b$};
\draw (3.17,1.66) node[anchor=north west] {$c$};
\draw (3.64,1.75) node[anchor=north west] {$d$};
\draw (2.34,-1.43) node[anchor=north west] {$a$};
\draw (2.68,-1.34) node[anchor=north west] {$b$};
\draw (3.05,-1.43) node[anchor=north west] {$c$};
\draw (3.41,-1.34) node[anchor=north west] {$d$};
\draw (4.15,1.64) node[anchor=north west] {$z_2$};
\draw (-1.45,-1.38) node[anchor=north west] {$z_1$};
\draw (-1,-1.41) node[anchor=north west] {$z_3$};
\draw (3.81,-1.43) node[anchor=north west] {$z_2$};
\draw (4.32,-1.45) node[anchor=north west] {$z_4$};
\draw (-2.45,1.23) node[anchor=north west] {$r(b,z_1)$};
\draw (2.79,1.19) node[anchor=north west] {$r(d,z_2)$};
\draw (-2.58,-1.96) node[anchor=north west] {$r(z_1,z_3)$};
\draw (2.81,-1.98) node[anchor=north west] {$r(z_2,z_4)$};
\draw (-1.06,5.05) node[anchor=north west] {$B_0$};
\draw (-3.34,2.32) node[anchor=north west] {$B_1$};
\draw (1.64,2.11) node[anchor=north west] {$B_2$};
\draw (-3.43,-0.75) node[anchor=north west] {$B_3$};
\draw (2.04,-0.68) node[anchor=north west] {$B_4$};
\draw (0.69,2.8)-- (-1.81,1.86);
\draw (0.69,2.8)-- (3.59,1.87);
\draw (3.59,0.23)-- (3.59,-1.2);
\draw (-1.83,0.22)-- (-1.85,-1.2);
\draw (-2.11,2.98) node[anchor=north west] {$(R_0,b)$};
\draw (2.85,2.92) node[anchor=north west] {$(R_0,d)$};
\draw (-1.45,-0.17) node[anchor=north west] {$(R_0,z_1)$};
\draw (4,-0.21) node[anchor=north west] {$(R_0,z_2)$};
\end{tikzpicture}
\end{center}

\caption{ Derivation tree of Example \ref{ex-DT2}. Only the image of the single frontier variable from $R_0$ is mentioned in edge labels.}\label{DT}
\end{figure}
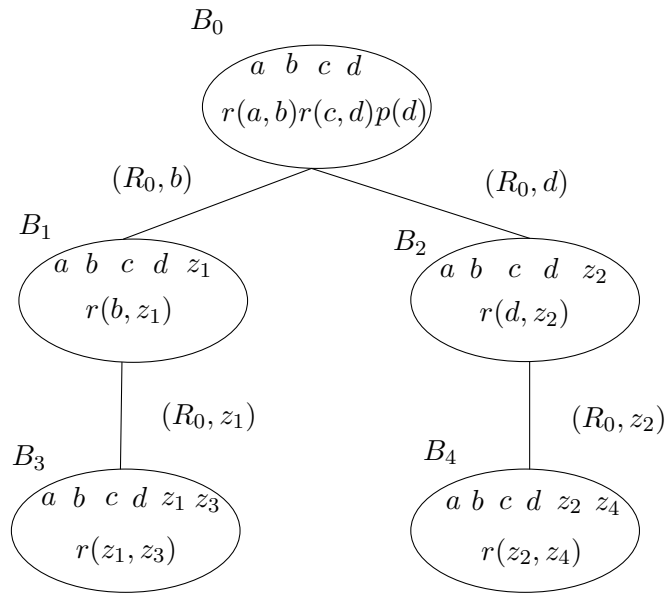

Note that, in the second point of the definition of a derivation tree, there is at least one $j$ with $\pi_{i-1}(\fr{R_{i-1}}) \subseteq \terms{B_j}$ because $S$ is greedy. The following property is easily checked, noticing that $T_0$ occurs in each bag, which ensures that the running intersection property is satisfied.

\begin{property} Let $S = F_0 \ldots, F_k$ be a greedy derivation. Then $\mathit{DT}(S)$ is a tree decomposition of $F_k$ of width bounded by
$|\vars{F}| + |\mathcal C| + \max(|\vars{\head{R}}|_{R \in \mathcal R})$.
\end{property}

\begin{definition}[Greedy Bounded-Treewidth Set of Rules (\RgRbRtRsR)]\label{def-gbts}
$\mathcal R$ is said to be a \emph{greedy} \emph{bounded-treewidth set} (\RgRbRtRsR) if (for any fact $F$) any $\mathcal R$-derivation (of $F$) is greedy.
\end{definition}

The class \RgRbRtRsR is a strict subclass of \RbRtRsR and does not contain \RfReRsR (e.g., in Example \ref{ex-non-greedy}: $\mathcal R$ is \RfReRsR but not \RgRbRtRsR). It is nevertheless an expressive subclass of \RbRtRsR since it contains \RwRfRgR:

\begin{property}
Any set of \RwRfRgR rules is \RgRbRtRsR.
\end{property}

\begin{proof}
 Let $\mathcal{R}$ be a \RwRfRgR rule set. Given any $\mathcal R$-derivation, consider the application of a rule $R_i$, with weak frontier-guard $g$. Let $a =\pi_i(g)$. Either $a \in F$ or $a \in A_j$ for some $j \leq i$. In the first case, $\pi_i(\fun{fr}(R_i)) \subseteq \terms{F} \subseteq \fun{vars}(F_0) \cup \mathcal C$; in the second case, $\pi_i(\fun{fr}(R_i)) \subseteq \terms{A_j} \subseteq  \fun{vars}(A_j) \cup \mathcal C$. We conclude that $\mathcal{R}$ is \RgRbRtRsR.
 \end{proof}

The obtained inclusion is strict since there are \RgRbRtRsR rule sets which are not \RwRfRgR as shown in the following example.

\begin{example}[\RgRbRtRsR but not \RwRfRgR]\label{ex-gbts-not-wfg}
Let $ R = r_1(x,y)\wedge r_2(y,z) \rightarrow r(x,x')\wedge r(y,y')\wedge r(z,z')\wedge r_1(x',y')\wedge r_2(y',z')$.
$\{R\}$ is \RgRbRtRsR, but not \RwRfRgR  (nor \RfReRsR). First, let us notice that all positions of $r_1$ and $r_2$ are affected, and that $x,y$ and $z$ belong to the frontier of $R$. Thus, $\{R\}$ is not \RwRfRgR. Moreover, let us consider $F = \{r_1(a,b), r_1(b,c)\}$. $R$ is applicable to $F$, which leads to create $r(a,x_1),r(b,y_1),r(c,z_1),r_1(x_1,y_1),$ and $r_2(y_1,z_1)$, as shown in Fig.~\ref{fig-gbts-not-wfg}. $R$ is thus newly applicable, mapping its frontier to $x_1$, $y_1$, and $z_1$. This can be repeated infinitely often, therefore $\{R\}$ is not \RfReRsR. Last, the only way to map the body of $R$ to terms that do not belong to an arbitrary initial fact is to map the frontier of $R$ to terms that have been created in the same bag (for instance, to the atoms in $A_1$ in Fig.~\ref{fig-gbts-not-wfg}), thus ensuring that $\{R\}$ is \RgRbRtRsR.
\end{example}

\begin{figure}
\begin{center}\begin{tikzpicture}[line cap=round,line join=round,>=triangle 45,x=1.0cm,y=1.0cm]
\draw [->] (-1.96,0.62) -- (-0.12,0.62); 
\draw [->] (-0.12,0.62) -- (1.62,0.62); 
\draw [rotate around={0:(-0.02,0.64)}] (-0.02,0.64) ellipse (3.03cm and 0.8cm); 
\draw (-1.56,0.64) node[anchor=north west] {$r_1$}; 
\draw (0.26,0.62) node[anchor=north west] {$r_2$}; 
\draw [->] (-1.96,3.08) -- (-0.12,3.08); 
\draw [->] (-0.12,3.08) -- (1.62,3.08); 
\draw [rotate around={0:(-0.02,3.1)}] (-0.02,3.1) ellipse (3.03cm and 0.8cm); 
\draw (-1.56,3.1) node[anchor=north west] {$r_1$}; 
\draw (0.26,3.08) node[anchor=north west] {$r_2$}; 
\draw [->] (-1.96,0.62) -- (-1.96,3.08); 
\draw [->] (-0.12,0.62) -- (-0.12,3.08); 
\draw [->] (1.62,0.62) -- (1.62,3.08); 
\draw (-2.64,2.2) node[anchor=north west] {$r$}; 
\draw (-0.68,2.22) node[anchor=north west] {$r$}; 
\draw (1.04,2.22) node[anchor=north west] {$r$}; 
\draw (-2.38,0.74) node[anchor=north west] {$a$}; 
\draw (-0.54,0.64) node[anchor=north west] {$b$}; 
\draw (1.36,0.62) node[anchor=north west] {$c$}; 
\draw (-2.42,3.7) node[anchor=north west] {$x_1$}; 
\draw (-0.62,3.78) node[anchor=north west] {$y_1$}; 
\draw (1.16,3.74) node[anchor=north west] {$z_1$};
\draw (3.4,3.4) node[anchor=north west] {$A_1$};
\draw (3.4,0.9) node[anchor=north west] {$F$};
\begin{scriptsize}
\fill [color=black] (-1.96,0.62) circle (1.5pt); 
\fill [color=black] (-0.12,0.62) circle (1.5pt); 
\fill [color=black] (1.62,0.62) circle (1.5pt); 
\fill [color=black] (-1.96,3.08) circle (1.5pt); 
\fill [color=black] (-0.12,3.08) circle (1.5pt); 
\fill [color=black] (1.62,3.08) circle (1.5pt);
\end{scriptsize}
\end{tikzpicture}\end{center}
\caption{\label{fig-gbts-not-wfg} Illustration of Example \ref{ex-gbts-not-wfg}}
\end{figure}
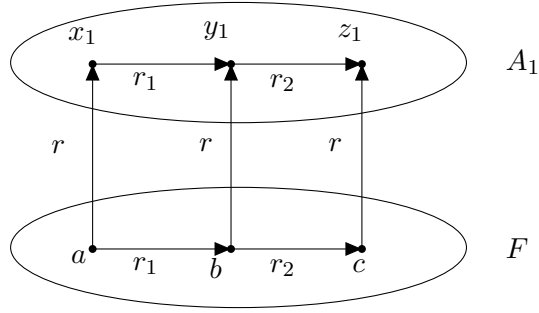


\subsection{A Translation into Weakly Frontier-Guarded Rules}

Next we will present a translation applicable to any set of existential rules. This translation can be computed in polynomial time, its result is always \RwRfRgR and it is guaranteed to preserve query answers if the input is \RgRbRtRsR.

The translation introduces two new predicates: a unary predicate $\mathit{initial}$ and a predicate \emph{samebag} of higher arity. Intuitively, $\mathit{initial}$ will mark terms from the initial fact $F$, as well as constants added by rule applications, and $\mathit{samebag}$ will gather terms that are ``in the same bag''.

\begin{definition}[\RwRfRgR Translation]
Let $\mathcal K = (F, \mathcal R)$ be a KB. The \emph{\RwRfRgR translation} of $\mathcal K$ is the KB $\tau(\mathcal K) = (\tau(F), \tau(\mathcal R))$ where
$\tau(F) = F \cup \{\mathit{initial}(t) | t \in \terms{F}\}$
and 
$\tau(\mathcal R) =\mathcal{R}^\mathrm{same} \cup \mathcal{R}^\mathrm{trans}$, where $\mathcal{R}^\mathrm{same}$ and $\mathcal{R}^\mathrm{trans}$ are defined as follows (where $\mathit{initial}$ is a fresh unary predicate and \emph{samebag} is fresh predicate with arity $q = \max (|\terms{\head{R}}|_{R \in \mathcal R}) + |T_0|$):
\begin{itemize}
\item $\mathcal{R}^\mathrm{same}$ contains the following rules:
\begin{enumerate}
\item[] $R^\mathrm{same}_1 = \mathit{initial}(x) \rightarrow \mathit{samebag}(x, \ldots, x)$,
\item[] $R^\mathrm{same}_2 = \mathit{samebag}(x_1, x_2, \ldots, x_q) \wedge \mathit{initial}(x)  \rightarrow \mathit{samebag}(x, x_2 \ldots, x_q)$,
\item[] one rule of the following type for each $1 \leq i \leq q$:\\
       $R^\mathrm{same}_{3i} = \mathit{samebag}(x_1, \ldots, x_i, \ldots, x_q)  \rightarrow \mathit{samebag}(x_i, \ldots, x_1, \ldots x_q)$, and
\item[] $R^\mathrm{same}_4 = \mathit{samebag}(x_1, \ldots, x_{q-1}, x_q)  \rightarrow \mathit{samebag}(x_1, \ldots, x_{q-1}, x_1)$.
\end{enumerate}
\item $\mathcal{R}^\mathrm{trans}$ contains one translated rule $\tau(R)$ for every rule $R$ from $\mathcal R$:
 for some rule $R = B[\bold{x},\bold{y}] \rightarrow H[\bold{y},\bold{z}]$ with $c_1, \ldots, c_k$ being the constants occurring in $H$, we let
  $\tau(R) = B[\bold{x},\bold{y}] \wedge \mathit{samebag}(\bold{y}, \bold{v}) \rightarrow H[\bold{y},\bold{z}] \wedge \mathit{samebag}(\bold{y},\bold{z}, \bold{w}) \wedge_{i:1, \ldots,k} \mathit{initial}(c_i)$,\\
  where $\bold{w}\subseteq\bold{v}$ and $\bold{v}$ is a set of fresh variables.
  \end{itemize}
\end{definition}


 Intuitively, Rules $R^\mathrm{same}_1$ and $R^\mathrm{same}_2$ express that the initial terms (as well as constants added by rule applications) are in all bags;  rules $R^\mathrm{same}_{3i}$ and rule $R^\mathrm{same}_4$ respectively allow any permutation and any duplication of arguments in an  atom with predicate  \emph{samebag}. In the translation of the rules from $\mathcal R$, the sets of variables $\bold{v}$ and  $\bold{w}$ are used to fill the atoms with predicate  \emph{samebag} to obtain arity $q$.

\begin{property}
For any set $\mathcal{R}$ of existential rules, $\tau(\mathcal{R})$ is \RwRfRgR.
\end{property}

\begin{proof}
  $\mathcal{R}^\mathrm{same} \setminus \{R^\mathrm{same}_2\}$ is \emph{guarded}. $R^\mathrm{same}_2$ is \RfRgR.  No rule affects the position in the unary  predicate $\mathit{initial}$, thus all affected variables in $R^\mathrm{same}_2$ are guarded by the atom with predicate \emph{samebag}, hence $\tau(\mathcal R)$ is \RwRfRgR.
\end{proof}

We next establish that, assuming we do not consider consequences involving $\mathit{initial}$ or $\mathit{samebag}$, $\tau(\mathcal R)$ is sound with respect to $\mathcal R$ and it is even complete in case $\mathcal{R}$ is \RgRbRtRsR.
 
 \begin{property}
 For any Boolean CQ $Q$ over the initial vocabulary, if $\tau(\mathcal K) \models Q$ then $\mathcal K \models Q$. Moreover, if $\mathcal R$ is \RgRbRtRsR, then the reciprocal holds, i.e., $\tau(\mathcal K)$ and $\mathcal K$  are equivalent (w.r.t. the initial vocabulary).
 \end{property}

 \begin{proof}
 $\Rightarrow$: Any $\tau(\mathcal R)$-derivation $\mathcal S'$ from $\tau(F)$ can be turned into an $\mathcal R$-derivation $\mathcal S$ from $F$ by simply ignoring the applications of rules from $\mathcal{R}^\mathrm{same}$ and replacing each application of a rule $\tau(R_i)$ by an application of the rule $R_i$ with ignoring the atoms with predicate \emph{samebag}. Moreover, the facts respectively obtained by both derivations are equal on the initial vocabulary  (i.e., when considering only the atoms with predicate in the initial vocabulary, and up to a variable renaming).
\\
 $\Leftarrow$: We assume that $\mathcal R$ is \RgRbRtRsR. We show that any $\mathcal R$-derivation $\mathcal S= (F_0=F), F_1, \ldots, F_k$ can be turned into a $\tau(\mathcal R)$-derivation $\mathcal S' = (F'_0 =\tau(F)), \ldots, F'_1, \ldots F'_k$ that satisfies: (a) for all $i$ such that $0 \leq i \leq k$, $F_i$ and $F'_i$ are equal on the initial vocabulary; and, (b) for all $i$ such that  $0 \leq i < k$, $F'_{i+1}$ is obtained by applying $\tau(R_i)$ with a homomorphism $\pi'_i$ that extends $\pi_i$. The proof is  by induction on the length $\ell$ of $\mathcal S$. The property is true for $\ell=0$. Assume it is true for $\ell = n$. Consider the application of $R_n$ with homomorphism $\pi_n$ from $\body{R_n}$ to $F_n$. We note $\fr{R_n}= \{y_1 \ldots y_p\}$ such that $\body{\tau(R_n)}$ contains the atom $\mathit{samebag}(y_1, \ldots, y_p, \ldots)$. Since $\mathcal R$ is \RgRbRtRsR, there is an $A_j$ such that some variables from $\fr{R_n}$, say $y_{i_1} \ldots y_{i_m}$ are mapped to $\fun{vars}(A_j)$, and the remaining variables from $\fr{R_n}$, say  $y_{i_{m+1}} \ldots y_{i_p}$ are mapped to $T_0 = \vars{F} \cup \mathcal C$. The application of $\tau(R_j)$ in $\mathcal S'$ has produced a \emph{samebag} atom $s_1$ that contains $\pi_n(y_{i_1}) \ldots \pi_n(y_{i_m})$ (by induction hypothesis (b)). By applying Rules $R^\mathrm{same}_{3i}$ and Rule $R^\mathrm{same}_{4}$, we permute, and duplicate if needed (i.e., if some $y_{i_1} \ldots y_{i_m}$ have the same image by $\pi$), the arguments in $s_1$ to obtain the atom $s_2 = \mathit{samebag}(\pi_n(y_{i_1}), \ldots, \pi_n(y_{i_m}), \ldots)$. Then, with Rule $R^\mathrm{same}_2$, we add each $\pi_n(y_{i_j})$ for $m < j \leq p$ (note that $F'_n$ necessarily contains $\mathit{initial}(\pi_n(y_{i_j})))$ and build the atom $s_3 = \mathit{samebag}(\pi_n(y_{i_{m+1}}), \ldots, \pi_n(y_{i_p}), \pi_n(y_{i_1}) \ldots \pi_n(y_{i_m}), \ldots)$. Finally, with Rules $R^\mathrm{same}_{3i}$, we permute the $p$ first arguments in $s_3$ to obtain $s_4 = \mathit{samebag}(\pi_n(y_{1}), \ldots, \pi_n(y_{p}), \ldots)$. Since $F_n$ and $F'_n$ are equal on the initial vocabulary by induction hypothesis (a), the fact obtained from $F'_n$ after application of the previous rules from $\mathcal{R}^\mathrm{trans}$ is still equal to $F_n$ on the initial vocabulary. We build $\pi'_n$ by extending  $\pi_n$ such that the atom with predicate \emph{samebag} in $\body{\tau(R_n)}$ is mapped to $s_4$. Parts (a) and (b) of the induction property are thus satisfied for $\ell= n+1$.
\end{proof}

\section{An Algorithm for \RgRbRtRsR: PatSat} \label{sec:gbts-algo} %




We give here an informal high-level description of the PatSat algorithm (for pattern saturation). Due to the existentially quantified variables in rule heads, a forward chaining mechanism does not halt in general. However, as we have seen in the preceding section, for \RgRbRtRsR, each sequence of rule applications gives rise to a so-called \emph{derivation tree}, which is a decomposition tree of the derived fact; moreover, this tree can be built in a \emph{greedy} way: each rule application produces a new tree node $B$ (called a \emph{bag}), which contains the atoms created by the rule application, such that the derived fact is the union of all bag atoms from this tree. The derived fact is potentially infinite, but thanks to its tree-like structure, the forward chaining process can be stopped after a finite number of rule applications as some periodic behavior will eventually occur.

The PatSat algorithm proceeds in two steps: first, it computes a finite tree, called a \emph{(full) blocked tree}, which finitely represents all possible derivation trees; second, it evaluates a query against this blocked tree. Building a blocked tree relies on the following notions:
\begin{itemize}
\item \emph{bag patterns}: Each bag $B$ is associated with a \emph{pattern} $P$, which stores all ways of mapping any (subset of any) rule body to the current fact (that is: the intermediate fact associated with the tree at the current stage of the construction process), while using some terms from $\terms{B}$. It follows that a rule is applicable to the current fact if and only if one of the bag patterns contains a mapping of its entire rule body. Then, the forward chaining can be performed ``on the bag-level'' by forgetting about the underlying facts and considering solely the derivation tree decorated with patterns. At each step, patterns are maintained and kept up-to-date by a propagation procedure based on a \emph{join} operation between the patterns of adjacent bags.
\item an \emph{equivalence} relation on bags: Thanks to patterns, an equivalence relation can be defined on bags, so that two bags are equivalent if and only if the ``same'' derivation subtrees can be built under them. The algorithm develops (that is: adds children nodes to) only one node per equivalence class, while the other nodes of that class are \emph{blocked} (note, however, that equivalence classes evolve during the computation, thus a blocked node can later become non-blocked, and vice-versa). This tree grows until no new rule application can be performed to non-blocked bags: the \emph{full blocked tree} is then obtained.
\item \emph{creation} and \emph{evolution rules}: The equivalence relation that we propose is however not directly computable: the ``natural'' way to compute would require to have already computed the greedy tree decomposition of the canonical model. In order to compute a full blocked tree, we make use of \emph{creation rules} and \emph{evolution rules}. These rules are meant to describe the patterns that may appear in the tree decomposition of the canonical model, as well as the relationships between patterns. For instance, creation rules intuitively state that any bag of pattern $P$ that appears in the tree decomposition of the canonical model has a child of pattern $P'$. We propose such rules, and show how to infer new rules in order to get a complete -- but finite -- description of the tree decomposition of the canonical model.
\end{itemize}

A first way to perform query answering is then to consider the query as a rule with a head reduced to a nullary prediate. In that case, it is enough to check if one pattern contains the entire body of this added rule. If one do not want to consider the query as a rule, one has to be more cautious. Indeed, the evaluation of a Boolean conjunctive query against a blocked tree cannot be performed by a simple homomorphism test. Instead, we define the notion of an APT-mapping, which can be seen as a homomorphism to an ``unfolding'' or ``development'' of this blocked tree. As the length of the developed paths that is relevant for query answering is bounded with an exponent that depends only on the rule set (more precisely, the exponent is the maximal number of variables shared by the body and the head of a rule), checking if there is an APT-mapping from a conjunctive query to a blocked tree is time polynomial in data complexity and nondeterministically time polynomial in query complexity.

In order to illustrate the numerous definitions of this section, we will employ a running example. This example has been designed with the following requirements in mind. First, it should be easy enough to understand. Second, it should illustrate every aspect of our approach, and explain why simpler approaches we could think of are not sufficient. Last, it should not be expressible by means of description logics.

\begin{example}[Running Example]
\label{ex-running}
Let us consider $\mathcal{R}^\mathrm{ex}=\{R_1^\mathrm{ex},\ldots,R_7^\mathrm{ex}\}$ defined as follows:
\begin{itemize}
\item $R^\mathrm{ex}_1 = q_1(x_1,y_1,z_1) \rightarrow s(y_1,t_1) \wedge r(z_1,t_1) \wedge q_2(t_1,u_1,v_1)$;
\item $R^\mathrm{ex}_2 = q_2(x_2,y_2,z_2) \rightarrow s(y_2,t_2) \wedge r(z_2,t_2) \wedge q_3(t_2,u_2,v_2)$;
\item $R^\mathrm{ex}_3 = q_3(t_3,u_3,v_3) \rightarrow h(t_3)$;
\item $R^\mathrm{ex}_4 = q_2(x_4,y_4,z_4) \wedge s(y_4,t_4) \wedge r(z_4,t_4) \wedge h(t_4) \rightarrow h(x_4) \wedge p_1(y_4) \wedge p_2(z_4)$;
\item $R^\mathrm{ex}_5 = q_1(x_5,y_5,z_5) \wedge s(y_5,t_5) \wedge r(z_5,t_5) \wedge h(t_5) \rightarrow p_1(y_5) \wedge p_2(z_5)$;
\item $R^\mathrm{ex}_6 = p_1(x_p) \wedge i(x_p) \rightarrow r(x_p,y_p) \wedge p_2(y_p) \wedge i(y_p)$;
\item $R^\mathrm{ex}_7 = p_2(x_q) \wedge i(x_q) \rightarrow s(x_q,y_q) \wedge p_1(y_q) \wedge i(y_q)$.
\end{itemize}

The initial fact will be: $$F^\mathrm{ex} = q_1(a,b,c) \wedge q_1(d,c,e) \wedge q_1(f,g,g) \wedge i(c) \wedge i(g).$$

\end{example}

The subset $\{R^\mathrm{ex}_1,R^\mathrm{ex}_2,R^\mathrm{ex}_3\}$ is a finite expansion set\footnote{Because, for example, their graph of rule dependency is acyclic \cite{blms:09}}. Applying these rules will create some existentially quantified variables. A first interesting phenomenon is that these existential variables allow to infer some new information about the initial terms. Last, $R^\mathrm{ex}_4$ and $R^\mathrm{ex}_5$ will generate infinitely many fresh existential variables, which will allow us to illustrate both the blocking procedure and the querying operation.
While it can be argued that these rules are slightly complicated, it will allow to illustrate why we cannot block nodes without being careful.

Let us illustrate this rule set with an example of greedy derivation of $F^\mathrm{ex}$ under $\mathcal{R}^\mathrm{ex}$.

\begin{example}
\label{ex-greedy-derivation} Let us consider the following sequence of rule applications:
\begin{itemize}
\item $R^\mathrm{ex}_1$ is applied to $F^\mathrm{ex}$ by $\pi_1 = \{x_1{\mapsto}a, y_1{\mapsto}b, z_1{\mapsto} c\}$, creating $\{s(b,t_1^1),$ $r(c,t_1^1),$ $q_2(t_1^1,u_1^1,v_1^1)\}$.
\item $R^\mathrm{ex}_2$ is applied to $\alpha(F^\mathrm{ex},R^\mathrm{ex}_1,\pi_1)$ by $\pi_2 =\{x_2{\mapsto} t_1^1, y_2{\mapsto} u_1^1,z_2{\mapsto} v_1^1\}$, creating $\{s(u_1^1,t_2^1),$ $r(v_1^1,t_2^1),$ $q_3(t_2^1,u_2^1,v_2^1)\}$
\item $R^\mathrm{ex}_3$ is applied on the resulting fact by $\pi_3 = \{x_3{\mapsto}t_1^2, y_3{\mapsto}u_1^2, z_3{\mapsto}v_1^2\}$, creating a single new atom $h(t_2^1)$.
\end{itemize}

This derivation is greedy, and its derivation tree is represented in Figure \ref{fig-DT}.
\end{example}

\sloppypar{More generally, let us take a look at $k$-saturations of $F^\mathrm{ex}$ with respect to $\mathcal R^\mathrm{ex}$. On $F^\mathrm{ex}$, only $R^\mathrm{ex}_1$ is applicable by three homomorphisms, creating three sets of three new atoms: $\{s(b,t_1^1),r(c,t_1^1),q_2(t_1^1,u_1^1,v_1^1)\}, \{s(c,t_1^2),r(e,t_1^2),q_2(t_1^2,u_1^2,v_1^2)\}$ and $\{s(g,t_1^3),r(g,t_1^3),q_2(t_1^3,u_1^3,v_1^3)\}$. $\alpha_1(F^\mathrm{ex},\mathcal R^\mathrm{ex})$ is equal to the union of $F^\mathrm{ex}$ and these three sets of atoms. On $\alpha_1(F^\mathrm{ex},\mathcal R^\mathrm{ex})$, three new rule applications are possible, each of them mapping the body of $R^\mathrm{ex}_2$ to one of the atoms with predicate $q_2$. Again, three new sets of atoms are introduced, which are $\{s(u_1^1,t_2^1),r(v_1^1,t_2^1),q_3(t_2^1,u_2^1,v_2^1)\}, \{s(u_1^2,t_2^2),r(v_1^2,t_2^2),q_3(t_2^2,u_2^2,v_2^2)\}$ and $\{s(u_1^3,t_2^3),r(v_1^3,t_2^3),q_3(t_2^3,u_2^3,v_2^3)\}$. This yields $\alpha_2(F^\mathrm{ex},\mathcal R^\mathrm{ex})$. On this fact, three new rule applications of $R^\mathrm{ex}_3$ are possible, which introduce $h(t_2^1),h(t_2^2),h(t_2^3)$. The introduction of these atoms triggers new applications of $R^\mathrm{ex}_4$, creating $h(t_1^1),h(t_1^2),h(t_1^3), p_1(u_1^1),p_1(u_1^2),p_1(u_1^3), p_2(v_1^1),p_2(v_1^2),p_2(v_1^3)$. $R^\mathrm{ex}_5$ is now triggered, creating $p_1(b),p_2(c),p_1(c),p_2(e),p_1(f),p_2(f)$. The union of all atoms considered so far is equal to $\alpha_5(F^\mathrm{ex},\mathcal R^\mathrm{ex})$.
 $R^\mathrm{ex}_{6}$ and $R^\mathrm{ex}_{7}$ are now applicable, both mapping their frontier to $c$ and $g$. They will create infinite branches.

\begin{figure}
\begin{center}
\begin{tikzpicture}[line cap=round,line join=round,>=triangle 45,x=1.0cm,y=1.0cm]
\clip(-2.57,-5.8) rectangle (4.75,5.45); \draw (-1.06,4.92) node[anchor=north west] {\parbox{1.75 cm}{$q_1(a,b,c), q_1(d,c,e)\\ \\ q_1(f,g,g), i(c), i(g)$}}; \draw (-0.77,2.32) node[anchor=north west] {\parbox{1.83 cm}{$s(b,t_1^1),r(c,t_1^1)\\ \\ q_2(t_1^1,u_1^1,v_1^1)$}}; \draw (-0.81,-0.49) node[anchor=north west] {\parbox{2.07 cm}{$s(u_1^1,t_2^1),r(v_1^1,t_2^1)\\ \\ q_3(t_2^1,u_2^1,v_2^1)$}}; \draw [rotate around={-0.23:(0.83,4.05)}] (0.83,4.05) ellipse (2.89cm and 1.05cm); \draw [rotate around={-0.23:(0.72,1.41)}] (0.72,1.41) ellipse (2.89cm and 1.05cm); \draw [rotate around={-0.23:(0.85,-1.34)}] (0.85,-1.34) ellipse (2.89cm and 1.05cm); \draw (0.7,3)-- (0.68,2.46); \draw (0.64,0.36)-- (0.64,-0.29); \draw (4.05,4.52) node[anchor=north west] {$B_0$}; \draw (4.07,1.77) node[anchor=north west] {$B_1$}; \draw (4.05,-0.8) node[anchor=north west] {$B_2$}; \draw (-0.07,-3.88) node[anchor=north west] {$h(t_2^1)$}; \draw [rotate around={-0.23:(0.89,-4.33)}] (0.89,-4.33) ellipse (2.89cm and 1.05cm); \draw (0.57,-2.39)-- (0.57,-3.29); \draw (4.07,-3.86) node[anchor=north west] {$B_3$};
\end{tikzpicture}
\end{center}
\caption{The derivation tree associated with Example \ref{ex-greedy-derivation}} \label{fig-DT}
\end{figure}
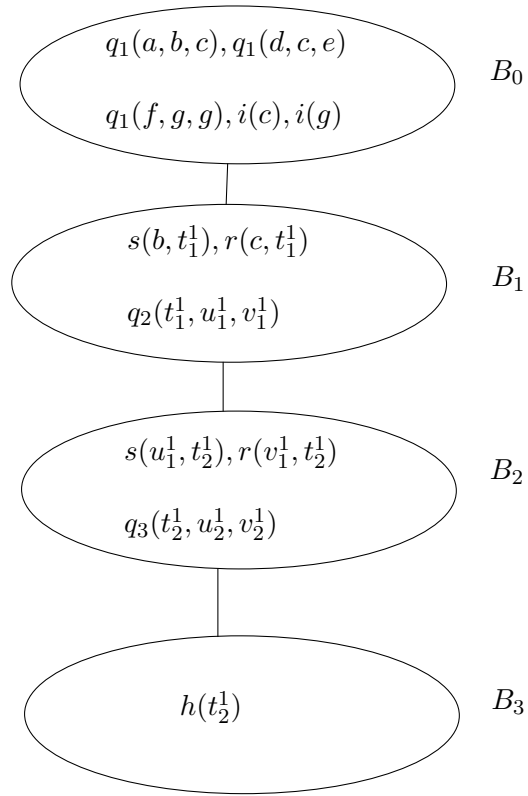

\subsection{Patterned Forward Chaining}

This section focuses on \emph{bag patterns}. 
For all following considerations, we assume an arbitrary but fixed rule set $\mathcal{R}$ which is \RgRbRtRsR.
We first show that forward chaining can be performed by considering solely the derivation tree endowed with bag patterns. Then we define \emph{joins} on patterns in order to update them incrementally after each rule application. We last explain why patterns are interesting: they allow to formally capture some notion of ``regularity'' in a derivation tree, which will be exploited in the next section to finitely represent potentially infinite derivation trees.
\begin{definition}[Pattern, Patterned Derivation Tree]
\index{pattern} A \emph{pattern} of a bag $B$
is a set of pairs $(G, \pi)$, where $G$ is a conjunction of atoms and $\pi$ is a partial mapping from $\terms{G}$ to $\terms{B}$. $G$ and $\pi$ are possibly empty.

For any $\mathcal{R}$-derivation $S$ with derivation tree $\mathit{DT}(S)$, we obtain a \emph{patterned derivation tree}, noted $(\mathit{DT}(S),P)$, where $P$ is a function assigning a pattern $P(B)$ to each bag $B$ of $\mathit{DT}(S)$.
\end{definition}


The patterns that we consider are subsets of the rule bodies in $\mathcal{R}$.

\begin{definition}[Pattern Soundness and Completeness] Let $F_k$ be a fact obtained via a derivation $S$ and let $B$ be a bag in $(\mathit{DT}(S),P)$. $P(B)$ is said to be \emph{sound} w.r.t.\ $F_k$ if for all $(G,\pi) \in P(B)$, $\pi$ is extendable to a homomorphism from $G$ to $F_k$.  $P(B)$ is said to be \emph{complete} w.r.t. $F_k$ (and $\mathcal R$), if for any $R \in \mathcal R$, any $sb_R \subseteq \fun{body}(R)$ and any homomorphism $\pi$ from $sb_R$ to $F_k$, $P(B)$ contains $(sb_R, \pi')$, where $\pi'$ is the restriction of $\pi$ to the inverse image of the terms of $B$, i.e., $\pi'=\pi_{\mid \pi^{-1}(\terms{B})}$. Finally, $(\mathit{DT}(S),P)$ is said to be \emph{sound} and \emph{complete} w.r.t.\ $F_k$ if for all its bags $B$, $P(B)$ is sound and complete w.r.t.\ $F_k$.
\end{definition}


Provided that $(\mathit{DT}(S),P)$ is sound and complete w.r.t.\ $F_k$, a rule $R$ is applicable to $F_k$ iff there is a bag in $(\mathit{DT}(S),P)$ whose pattern contains a pair $(\fun{body}(R), -)$; then, the bag created by a rule application $(R,\pi)$ to $F_k$ has parent $B_j$  in $\mathit{DT}(S)$ iff $B_j$ is the bag in $(\mathit{DT}(S),P)$ with the smallest $j$ such that $P(B_j)$ contains $(\fun{body}(R), \pi')$ for some $\pi'$ which coincides with $\pi$ on $\fr{R}$, i.e., $\pi_{\mid\fr{R}}=\pi'_{\mid\fr{R}}$. Patterns are managed as follows: (1) The pattern of $B_0$ is the maximal sound and complete pattern with respect to $F$; (2) after each addition of a bag $B_i$, the patterns of all bags are updated to ensure their soundness and completeness with respect to $F_i$. It follows that we can define a \emph{patterned} derivation, where rule applicability is checked on patterns, and the associated sound and complete patterned derivation tree, which can be shown to be isomorphic to the derivation tree associated with the (regular) derivation.

Remember that our final rationale is to avoid computations on the ``fact level''. We will instead incrementally maintain sound and complete patterns by a propagation mechanism on patterns. This is why we need to consider patterns with subsets of rule bodies and not just full rule bodies. We recall that the rules have pairwise disjoint sets of variables.

\begin{definition}[Elementary Join]
Let $B_1$ and $B_2$ be two bags, $e_1= (sb^1_R, \pi_1) \in P(B_1)$ and $e_2=(sb^2_R, \pi_2) \in P(B_2)$  where $sb^1_R$ and $sb^2_R$ are subsets of $\fun{body}(R)$ for some rule $R$.
 Let $V = \vars{sb^1_R} \cap \vars{sb^2_R}$. The \emph{(elementary) join} of $e_1$ with $e_2$, noted $J(e_1, e_2)$, is defined if for all  $x \in V$, $\pi_1(x)$ and $\pi_2(x)$ are both defined and $\pi_1(x) = \pi_2(x)$. Then $J(e_1,e_2) = (sb_R, \pi)$, where $sb_R = sb^1_R \cup sb^2_R$ and $\pi = \pi_1 \cup \pi'_2$, where $\pi'_2$ is the restriction of $\pi_2$ to the inverse image of $\terms{B_1}$ (i.e., the domain of $\pi'_2$ is the set of terms with image in  $\terms{B_1}$).
\end{definition}

Note that $V$ may be empty. The elementary join is not a symmetrical operation since the range of the obtained mapping is included in $\terms{B_1}$.

\begin{example}
Let us consider the bags $B_1$ and $B_2$ in Figure \ref{fig-DT}. Let $e_1 = (\{q_2(x_4,y_4,z_4)\}, \pi=\{x_4{\mapsto}t_1^1, y_4{\mapsto}u_1^1, z_4{\mapsto}v_1^1\})$ be in the pattern of $B_1$, and $e_2 = (\{s(y_4,t_4),r(z_4,t_4),h(t_4)\},\pi'=\{y_4{\mapsto}u_1^1, z_4{\mapsto}v_1^1, t_4{\mapsto}t_2^1\})$ be in the pattern of $B_2$. The elementary join of $e_1$ with $e_2$ is $(\{q_2(x_4,y_4,z_4),s(y_4,t_4),r(z_4,t_4),h(t_4)\},\pi)$.
\end{example}
\begin{definition}[Join]\label{def-join}
\index{join} Let $B_1$ and $B_2$ be two bags with respective patterns $P(B_1)=P_1$ and $P(B_2)=P_2$. The join of $P_1$ with $P_2$, denoted $J(P_1, P_2)$, is the set of all defined $J(e_1,e_2)$, where $e_1= (sb^1_R, \pi_1) \in P_1$, $e_2=(sb^2_R, \pi_2) \in P_2$.
\end{definition}

Note that $P_1 \subseteq J(P_1, P_2)$ since each pair from $P_1$ can be obtained by an elementary join with $(\emptyset,\emptyset)$. Similarly, $J(P_1, P_2)$ contains all pairs $(G,\pi)$ obtained from $(G,\pi_2) \in P_2$ by restricting $\pi_2$ to the inverse image of $\terms{B_1}$. Note that join preserves soundness, as stated in the following property.

%
\begin{property} If $P_1$ and $P_2$ are sound w.r.t. $F_{i}$ then $J(P_1,P_2)$ is sound w.r.t. $F_i$.
\end{property}

\begin{proof} Follows from the definitions: for all $(G,\pi) \in J(P_1,P_2)$, either $(G,\pi) \in P_1$, or is obtained by restricting an element of $P_2$, or is equal to $J(e_1,e_2)$ for some $e_1= (sb^1_R, \pi_1) \in P_1$ and $e_2=(sb^2_R, \pi_2) \in P_2$. In the latter case, let us consider two homomorphisms, $h_1$ and $h_2$ with co-domain $F_i$, which respectively extend $\pi_1$ and $\pi_2$. The union of $h_1$ and $h_2$ is a mapping from $terms(G)$ to $F_i$ (remember that $h_1$ and $h_2$ are equal on the intersection of their domains). Moreover, it is a homomorphism, because every atom in $G$ is mapped to an atom in $F_i$ by $h_1$ or by $h_2$.
\end{proof}

We consider the step from $F_{i-1}$ to $F_{i}$ in a (patterned) derivation sequence: let $B_c$ be the bag created in this step and let $B_p$ be its parent in $(\mathit{DT}(S),P)$.

\begin{definition}[Initial Pattern]
\index{pattern!initial pattern} The \emph{initial pattern} of a bag $B_c$, denoted by $P_\mathrm{init}(B_c)$, is the set of pairs $(G,\pi)$ such that $G$ is a subset of some rule body of $\mathcal{R}$ and $\pi$ is a \emph{homomorphism} from $G$ to $\atoms{B_c}$.
\end{definition}

\begin{example}[Initial Pattern]
\label{ex-initial-pattern} Let us consider the initial pattern of $B_2$ in Figure \ref{fig-DT}. The atoms of $B_2$ are:

$$\{s(u_1^1,t_2^1),r(v_1^1,t_2^1),q_3(t_2^1,u_2^1,v_2^1)\}.$$

For rules $R^\mathrm{ex}_1,R^\mathrm{ex}_2,R^\mathrm{ex}_{6}$ and $R^\mathrm{ex}_{7}$, no subset of a rule body maps to the atoms of $B_2$. Thus, they do not contribute to the initial pattern of $B_2$. There is one homomorphism from the body of $R^\mathrm{ex}_3$ to atoms of $B_2$, and thus its initial pattern contains:

$$(\{q_3(t_3,u_3,v_3)\}, \{t_3{\mapsto}t_2^1, u_3{\mapsto}u_2^1, v_3{\mapsto}v_2^1).$$

As for subsets of the body of $R^\mathrm{ex}_4$, there are three elements added to the initial pattern of $B_2$:
\begin{itemize}
\item $(\{s(y_4,t_4)\},\{y_4{\mapsto}u_1^1,t_4{\mapsto}t_2^1\}),$
\item $(\{r(z_4,t_4)\},\{t_4{\mapsto}t_2^1,z_4{\mapsto}v_1^1\}),$
\item $(\{s(y_4,t_4),r(z_4,t_4)\},\{y_4{\mapsto}u_1^1,t_4{\mapsto}t_2^1,z_4{\mapsto}v_1^1\}).$
\end{itemize}

Similar elements are added by taking subsets of the body of $R^\mathrm{ex}_5$.
\end{example}

\begin{property}[Soundness of Initial Pattern of $B_c$ w.r.t. $F_i$]
The initial pattern of $B_c$ is \emph{sound} with respect to $F_i$.
\end{property}

\begin{proof}
For any $(G, \pi)\in P_\mathrm{init}(B_c)$, $\pi$ is a homomorphism from $G$ to $\fun{atoms}(B_c) \subseteq F_{i}$.
\end{proof}

Obviously, if a pattern is sound w.r.t. $F_{i-1}$ then it is sound w.r.t. $F_i$. The following property focus on completeness.
%

\begin{property}[Completeness of $J(P(B_c), P(B_p))$ w.r.t. $F_i$]\label{prop-complete-Bc}
Assume that $P(B_p)$ is complete w.r.t. $F_{i-1}$ and $\mathcal R$. Then $J(P_{init}(B_c), P(B_p))$ is complete w.r.t. $F_{i}$.
\end{property}

\begin{proof}
Let $\pi$ be a homomorphism from $sb_R \subseteq \fun{body}(R)$ to $F_i$, for some rule $R$. We show that $J(P_\mathrm{init}(B_c), P(B_p))$ contains $(sb_R, \pi')$, where $\pi'$ is the restriction of $\pi$ to the inverse image of $\terms{B_c}$. Let us partition $sb_R$ into $b_{i-1}$, the subset of atoms mapped by $\pi$ to $F_{i-1}$, and $b_i$ the other atoms from $sb_R$, which are necessarily mapped by $\pi$ to $F_i \setminus F_{i-1}$, i.e., $\atoms{B_c}$. If $b_i$ is not empty, by definition of the initial pattern, $P_\mathrm{init}(B_c)$ contains $(b_i, \pi_c)$, where $\pi_c$ is the restriction of $\pi$ to $\terms{b_i}$.  If $b_{i-1}$ is not empty, by hypothesis (completeness of $P(B_p)$ w.r.t. $F_{i-1}$), $P_p$ contains $(b_{i-1}, \pi_p)$, where $\pi_p$ is the restriction of $\pi_{|b_{i-1}}$ to the inverse image of $\terms{B_p}$. If  $b_{i-1}$ or $b_{i}$ is empty, $(sb_R, \pi')$ belongs to $J(P_\mathrm{init}(B_c), P(B_p))$. 
Otherwise, consider $J((b_i, \pi_c),(b_{i-1}, \pi_p))$: it is equal to $(sb_R, \pi')$. 
\end{proof}


\begin{property}[Completeness of Join-Based Propagation]
\label{prop-pattern-update} Assume that $(\mathit{DT}(S),P)$ is complete w.r.t. $F_{i-1}$, and $P(B_c)$ is computed by $J(P_\mathrm{init}(B_c), P(B_p))$. Let $d(B)$ denote the distance of a bag $B$ to $B_c$ in $(\mathit{DT}(S),P)$. Updating a bag $B$ consists in performing $J(P(B), P(B'))$, where $B'$ is the neighbor of $B$ s.t. $d(B') < d(B)$. Let $(\mathit{DT}(S),P')$ be obtained from $(\mathit{DT}(S),P)$ by updating all patterns by increasing value of $d$ of the corresponding bags.
Then $(\mathit{DT}(S),P')$ is complete w.r.t. $F_i$.
\end{property}


\begin{proof}
From Property~\ref{prop-complete-Bc}, we know that $P'(B_c)$ is complete w.r.t. $F_i$.
 It remains to prove the following property: let $P'(B)$ be obtained by computing $J(P(B), P'(B'))$; if $P'(B')$ is complete w.r.t. $F_i$, then $J(P(B), P'(B'))$ is complete w.r.t. $F_i$.
We partition $sb_R$ in the same way as in the proof of Property~\ref{prop-complete-Bc}. If one of the subsets is empty, we are done. Otherwise, the partition allows to select an element $e_1$ from $P(B)$ and an element $e_2$ from $P'(B')$, and $J(e_1,e_2)$ is the element we want to find. The crucial point is that if $\pi$ maps an atom $a$ of $sb_R$ to an atom $b$ of $F_i \setminus F_{i-1}$, and $b$ shares a term $e$ with $B$, then $e \in \terms{B_c}$, hence, thanks to the running intersection property of a decomposition tree, $e \in \terms{B'}$, thus $(e,\pi(e))$ will be propagated to $P'(B)$.
\end{proof}

It follows that the following steps performed at each bag creation (where $B_c$ is introduced as a child of $B_p$) allow to maintain the soundness and completeness of the patterned derivation tree throughout the derivation:
\begin{enumerate}
\item initialize: compute $P_\mathrm{init}(B_c)$ for the newly created pattern $B_c$;
\item update: $P'(B_c) = J(P_\mathrm{init}(B_c), P(B_p))$;
\item propagate: first, propagate from $P(B_c)$ to $P(B_p)$, i.e., $P'(B_p)=J(P(B_p), P'(B_c))$; then, for each bag $B$ updated from a bag $B'$, update its children $B_i$ (for $B_i \neq B')$ by $P'(B_i)=J(P(B_i), P'(B))$ and its parent $B_j$ by $P'(B_j)=J(P(B_j), P'(B))$. Iterate this step until every pattern is updated (i.e., $P'(B)$ is determined for every bag $B$ of the current derivation tree).
\end{enumerate}

\subsection{Bag Equivalence}

We now show how bag patterns allow us to identify a certain kind of regularity in a derivation tree. We first need some technical, but nonetheless natural definitions. We start with the notion of a \emph{fusion} of the frontier induced by a rule application: given a rule application, it summarizes which frontier terms are mapped to the same term, and if they are mapped to a term of $T_0$ (that is, an initial term or a constant).

\begin{definition}[Fusion of the Frontier Induced by $\pi$]
\index{fusion of the frontier} Let $R$ be a rule and $V$ be a set of variables with $V \cap T_0 = \emptyset$. Let $\pi$ be a substitution of $\fun{fr}(R)$ by $T_0 \cup V$. The \emph{fusion} of $\fun{fr}(R)$ induced by $\pi$, denoted by $\sigma_\pi$, is the substitution of $\fun{fr}(R)$ by $\fun{fr}(R) \cup T_0$ such that for every variable $x \in \fun{fr}(R)$, if $\pi(x) \in V$ then $\sigma_\pi(x)$ is the smallest\footnote{We assume variables to be totally ordered (for instance by lexicographic order).} variable $y$ of $\fun{fr}(R)$ such that $\pi(x) = \pi(y)$; otherwise $\sigma_\pi(x) = \pi(x) \in T_0$.
\end{definition}

\begin{example}
Let us consider $R^\mathrm{ex}_2 = q_2(x_2,y_2,z_2) \rightarrow s(y_2,t_2) \wedge r(z_2,t_2) \wedge q_3(t_2,u_2,v_2)$. Let $\pi_1=\{y_2{\mapsto}y_0,z_2{\mapsto}y_0\}$. The substitution of the frontier of $R_2$ induced by $\pi_1$ is defined by $\sigma_{\pi_1}=\{y_2{\mapsto}y_2,z_2{\mapsto}y_2\}$. Let $b$ be a constant, and $\pi_2$ be a substitution of the frontier of $R_1$ defined by $\pi_2=\{y_2{\mapsto}b,z_2{\mapsto}b\}$. The fusion of the frontier induced by $\pi_2$ is defined by $\sigma_{\pi_2}=\{y_2{\mapsto}b,z_2{\mapsto}b\}$. Last, if $\pi_3$ maps $y_2$ and $z_2$ to two different existentially quantified variables, then $\sigma_{\pi_3}$ is the identity on the frontier of $R_2$.
\end{example}

This notion of fusion is the main tool to define \emph{structural equivalence}, which is an equivalence relation on the bags of a derivation tree.

\begin{definition}[Structural Equivalence]
\index{equivalence!structural} Let $B$ and $B'$ be two bags
created by applications $(R, \pi_i)$ and $(R, \pi_j)$, respectively, of the same rule $R$. $B$ and $B'$ are \emph{structurally equivalent}, written $B \simeq B'$ if the fusions of $\fun{fr}(R)$ induced by the restrictions of $\pi_i$ and $\pi_j$ to $\fun{fr}(R)$ are equal.
\end{definition}

We will see later that structural equivalence is not sufficient to formalize regularity in a derivation tree. However, there is already a strong similarity between structurally equivalent bags: the purpose of Definition \ref{def-structural-equivalence} is to formalize it.

\begin{definition}[Natural Bijection]
\index{natural bijection} \label{def-structural-equivalence} Let $B$ and $B'$ be two structurally equivalent bags created by applications $(R,\pi_i)$ and $(R,\pi_j)$. The \emph{natural bijection} from $\fun{terms}(B)$ to $\fun{terms}(B')$ (in short from $B$ to $B'$), denoted  $\psi_{B\rightarrow B'}$,  is defined as follows:
\begin{itemize}
\item if $x \in T_0$, let $\psi_{B\rightarrow B'}(x) = x$
\item otherwise, let $\fun{orig}(x) = \{u\in \fun{vars}(\head{R}) | \pi_i^{\mathrm{safe}}(u) = x\}$. Since $B$ and $B'$ are structurally equivalent, $\forall u,u' \in \fun{orig}(x), \pi_j^{\mathrm{safe}}(u) = \pi_j^{\mathrm{safe}}(u')$. We define $\psi_{B \rightarrow B'}(x) =\pi_j^{\mathrm{safe}}(u)$.
\end{itemize}
\end{definition}

The natural bijection is thus an isomorphism between two bags. This natural bijection between structurally equivalent bags gives us a way to partially order patterns, by ensuring that the ranges of partial applications are on the same set of terms.

\begin{definition}[Pattern Inclusion, Pattern Equivalence]
\index{equivalence!pattern} \label{def-pattern-inclusion} Let $B$ and $B'$ be two bags,
with respective patterns $P(B)$ and $P(B')$. We say that $P(B')$ \emph{includes} $P(B)$, denoted by $P(B) \sqsubseteq P(B')$, if :
\begin{itemize}
\item $B$ and $B'$ are structurally equivalent, i.e., $B\simeq B'$,
\item $P(B')$ contains all elements from $P(B)$, up to a variable renaming given by the natural bijection: $(G,\pi) \in P(B) \Rightarrow (G,\psi_{B \rightarrow B'}\circ \pi) \in P(B')$.
\end{itemize}
We say that $P(B)$ and $P(B')$ are equivalent, denoted $P(B)\sim P(B')$, if $P(B') \sqsubseteq P(B)$ and $P(B) \sqsubseteq P(B')$. By extension, two bags are said to be equivalent if their patterns are equivalent.
\end{definition}

Property \ref{prop-pattern-equivalence} helps to understand why Definition \ref{def-pattern-inclusion} provides us with a good notion of pattern equivalence, by linking the equivalence of patterns to the applicability of rules on bags having these patterns. Let us note that this property does not hold if we put structural equivalence in place of pattern equivalence.

\begin{property}
\label{prop-pattern-equivalence} Let $S$ be a derivation, and $B$ and $B'$ two bags of $(\fun{DT}(S),P)$ such that $P(B) \sim P(B')$. If a rule $R$ is applicable to $B$ by $\pi$, then $R$ is applicable to $B'$ by  $\psi_{B\rightarrow B'}\circ\pi$.
\end{property}

\begin{proof}
Since $R$ is applicable to $B$ by $\pi$, $(\fun{body}(R), \pi_{\mid \fun{fr}(R)})$ belongs to $P(B)$. By definition of the equivalence of patterns, $(\fun{body}(R), \psi_{B \rightarrow B'} \circ \pi_{\mid \fun{fr}(R)})$ belongs to $P(B')$, which implies that $R$ is applicable to $B'$.
\end{proof}

We now present how this equivalence relation will be used to finitely represent the (potentially infinite) set of derived facts. Intuitively, a blocked tree $\mathfrak{T}_b$ is a subtree (with the same root) of a patterned derivation tree $(\fun{DT}(S),P)$ of a sufficiently large derivation sequence $S$. Additionally every bag in $\mathfrak{T}_b$ is marked by either ``blocked'' or ``non-blocked''. Assuming that we know which length of derivation is enough, $\mathfrak{T}_b$ is constructed such that it has the following properties:
\begin{itemize}
\item for each equivalence class appearing in $(\fun{DT}(S),P)$, there is exactly one non-blocked node of $\mathfrak{T}_b$ of that class;
\item if a bag $B$ is blocked in $\mathfrak{T}_b$, it is a leaf, i.e., it has no child in $\mathfrak{T}_b$ (although it may have children in $(\fun{DT}(S),P)$);
\item if a bag is non-blocked in $\mathfrak{T}_b$, all children of $B$ in $(\fun{DT}(S),P)$ are present in $\mathfrak{T}_b$.
\end{itemize}

\begin{definition}[Blocked Tree]
\index{blocked tree} A blocked tree is a structure $(\mathfrak{T}_b,\sim)$, where $\mathfrak{T}_b$ is an initial segment of a patterned derivation tree and $\sim$ is the equivalence relation on the bags of $\mathfrak{T}_b$ such that for each $\sim$-class, all but one bag are said to be \emph{blocked}; this non-blocked bag is called the \emph{representative} of its class and is the only one that may have children.
\end{definition}


A blocked tree $\mathfrak{T}_b$ can be associated with a possibly infinite set of  decomposition trees obtained by iteratively copying its bags.
We first define the bag copy operation:

\begin{definition}[Bag Copy]
\index{bag copy} Let $B_1$ and $B_2$ be structurally equivalent bags with natural bijection $\psi_{B_1\rightarrow B_2}$. Let $B'_1$ be a child of $B_1$. \emph{Copying $B'_1$ under $B_2$} (according to $\psi_{B_1\rightarrow B_2}$) is performed by adding a child $B'_2$ to $B_2$, such that $\fun{terms}(B'_2)=\{\psi_{B'_1\rightarrow B'_2}(t) \mid t \in \fun{terms}(B'_1)\}$ and $\fun{atoms}(B'_2) = \{\psi_{B'_1\rightarrow B'_2}(a) \mid a\in \fun{atoms}(B'_1)\}$, where $\psi_{B'_1\rightarrow B'_2}$ is defined as follows: for all $x \in \fun{terms}(B'_1)$, if $x \in  \fun{terms}(B_1)$ then $\psi_{B'_1\rightarrow B'_2}(x) =  \psi_{B_1\rightarrow B_2}(x)$, otherwise $\psi_{B'_1\rightarrow B'_2}(x)$ is a fresh variable. \label{def-bag-copy}
\end{definition}


Assume that, in the previous definition, the bag  $B'_1$ has been created by $(R, \pi)$. Then $B'_2$ can be seen as obtained by the fusion of $\fr{R}$ induced by the potential application of $R$ to $B_2$ with the homomorphism $\psi_{B_1\rightarrow B_2} \circ \pi$. Since the fusions of $\fr{R}$ induced by $\pi$ and $\psi_{B_1\rightarrow B_2} \circ \pi$ are equal, $B'_1$ and $B'_2$ are structurally equivalent, which justifies the use of $\psi_{B'_1\rightarrow B'_2}$ for the bijection.

Starting from a blocked tree $\mathfrak{T}_b$ and using iteratively the copy operation when applicable, one can build a possibly infinite set of trees, that we denote by $G(\mathfrak{T}_b)$. This set contains pairs, whose first element is a tree, and the second element is a mapping from the bags of this tree to the bags of $\mathfrak{T}_b$, which encodes which bags of $\mathfrak{T}_b$ have been copied to create the bags of the generated tree.


\begin{definition}[Trees Generated by a Blocked Tree]  Given a blocked tree $\mathfrak{T}_b$, let the \emph{set $G(\mathfrak{T}_b)$ of trees generated by $\mathfrak{T}_b$} be inductively defined as follows:
\begin{itemize}
\item Let $B_0$ be the root of $\mathfrak{T}_b$; the pair $(\{B_0\},\{B_0{\mapsto}B_0\})$ belongs to $G(\mathfrak{T}_b)$.
\item Given a pair $(\mathfrak{T},f) \in G(\mathfrak{T}_b)$, let $B$ be a bag in $\mathfrak{T}$, and $B' = f(B)$;
let $B'_r$ be the representative of the $\sim$-class containing $B'$ (i.e., $B'_r \neq B'$ if $B'$ is blocked) and let $B'_c$ be a child of $B'_r$. If $B$ has no child mapped to $B'_c$ by $f$, let $\mathfrak{T}_{new}$ be obtained from $\mathfrak{T}$ by copying $B'_c$ under $B$ (according to $\psi_{B'_r\rightarrow B}$), which yields a new bag $B_c$. Then $(\mathfrak{T}_{new}, f \cup (B_c{\mapsto}B'_c))$ belongs to $G(\mathfrak{T}_b)$.
\end{itemize}
For each pair $(\mathfrak{T},f) \in G(\mathfrak{T}_b)$, $\mathfrak{T}$ is said to be \emph{generated} by $\mathfrak{T}_b$ via $f$. The tree $\mathfrak{T}$ is said to be generated by $\mathfrak{T}_b$ if there exists an $f$ such that $\mathfrak{T}$ is generated by $\mathfrak{T}_b$ via $f$.
\end{definition}

Note that a patterned decomposition tree thus generated is not necessarily a derivation tree, but it is an initial segment of a derivation tree. Among blocked trees, so-called \emph{full blocked trees} are of particular interest.


\begin{definition}[Full Blocked Tree]
A \emph{full blocked tree} $\mathfrak{T}^*$ (of $F$ and $\mathcal R$) is a blocked tree satisfying the two following properties:
\begin{itemize}
\item (Soundness) If $\mathfrak{T}'$ is generated by $\mathfrak{T}^*$, then there is some $\mathfrak{T}''$ generated by $\mathfrak{T}^*$ and an $\mathcal R$-derivation $S$ from $F$ such that
    $atoms(\mathfrak{T}'') = \fun{atoms}(\mathit{DT}(S))$ (up to fresh variable renaming) and $\mathfrak{T}'$ is an initial segment of $\mathfrak{T}''$.
\item (Completeness) For all $\mathcal R$-derivations from $F$, $\mathit{DT}(S)$ is generated by $\mathfrak{T}^*$.
\end{itemize}
\label{def-full-blocked-tree}
\end{definition}

The procedure outlined above (considering a particular tree prefix of a sufficiently large derivation tree) is however not constructive. We show how to circumvent this problem in the next section.

\subsection{Abstract Patterns and Abstract Pattern Saturation}

We now aim at computing a full blocked tree. To this end, we fix a representative for each structural equivalence class, as well as for each (pattern-based) equivalence class. This is the purpose of \emph{abstract bags} and \emph{abstract patterns}. We also need to describe on an abstract level how bags of a derivation tree are related to each other: \emph{links} are introduced to that aim. Having defined these basic components, we will focus on getting structural knowledge about the derivation trees that can be created starting from a fact and a set of rules: creation rules and evolution rules will be defined. In the last step, we use these rules to compute a full blocked tree.

We start by defining abstract bags. Each abstract bag can be seen as a canonical representative of a class of the structural equivalence relation. In order to have a uniform presentation, we consider the initial fact as a rule with empty body. 

\begin{definition}[Abstract Bag, Frontier Terms, Generated Variables]
\index{abstract! bag} \label{def-abstract-bag} Let $R$ be a rule from $\mathcal{R}$ and $\sigma$ a fusion of $\fun{fr}(R)$. The \emph{abstract bag} associated with $R$ and $\sigma$ (notation: $\mathbb{B}(R,\sigma)$) is defined by $\fun{terms}(\mathbb{B}(R,\sigma))  = \sigma(\fun{terms}(head(R))) \cup T_0$ and $\fun{atoms}(\mathbb{B}(R,\sigma)) = \sigma(head(R))$. The \emph{frontier terms} of $\mathbb{B}(R,\sigma)$ are the elements of $\sigma(\fr{R})$. Variables from $\fun{terms}(\mathbb{B}(R,\sigma))$ that are not frontier terms are called \emph{generated variables}.
\end{definition}

The notion of the natural bijection between structurally equivalent bags is extended to abstract bags in the straightforward way (note that there is exactly one abstract bag per structural equivalence class).

\begin{example}[Abstract Bag]
Let us consider $R^\mathrm{ex}_2 = q_2(x_2,y_2,z_2) \rightarrow s(y_2,t_2) \wedge r(z_2,t_2) \wedge q_3(t_2,u_2,v_2)$, and three fusions of its frontier, namely: $\sigma_{\pi_1}=\{y_2{\mapsto}y_2,z_2{\mapsto}y_2\}$, $\sigma_{\pi_2}=\{y_2{\mapsto}b,z_2{\mapsto}b\}$  and $\sigma_{\pi_3}=\{y_2{\mapsto}y_2,z_2{\mapsto}z_2\}$. The abstract bag $\mathbb{B}(R^\mathrm{ex}_2,\sigma_{\pi_1})$ associated with $R^\mathrm{ex}_2$ and $\sigma_{\pi_1}$ has as terms $\{y_2,t_2,u_2,v_2\}$ and as atoms $\{s(y_2,t_2), r(y_2,t_2),q_3(t_2,u_2,v_2)\}$. The terms of the abstract bag $\mathbb{B}(R^\mathrm{ex}_2,\sigma_{\pi_2})$ are $\{b,t_2,u_2,v_2\}$; its atoms are $\{s(b,t_2),r(b,t_2),q_3(t_2,u_2,v_2)\}$. For $\mathbb{B}(R^\mathrm{ex}_2,\sigma_{\pi_3})$, its terms are $\{y_2,t_2,u_2,v_2,z_2\}$ and its atoms are $\{s(y_2,t_2),r(z_2,t_2),q_3(t_2,u_2,v_2)\}$.
\end{example}

Since abstract bags provide us with a canonical representative for each structural equivalence class, we can now define a canonical representative for each class of equivalent patterns: abstract patterns. To distinguish the abstract bags and patterns from their concrete counterparts, we will denote them by $\mathbb{B}$ and $\mathbb{P}$ (possibly with subscripts) instead of $B$ and $P$.

\begin{definition}[Abstract Pattern, Support]
\index{abstract!pattern} \label{def-abstract-pattern} Let $\mathcal R$ be a set of rules, $R$ be a rule and $\sigma$ be a fusion of $\fun{fr}(R)$. An \emph{abstract pattern} $\mathbb{P}$ with \emph{support} $\mathbb{B} = \mathbb{B}(R,\sigma)$ is a set of pairs $(G,\pi)$ where $G$ is a subset of a rule body (of some rule of $\mathcal R$) and $\pi$ is a partial mapping from $\fun{terms}(G)$ to $\fun{terms}(\mathbb{B})$. $G$ and $\pi$ are possibly empty.
 \end{definition}

\begin{example}[Abstract Pattern]
Let us consider again the initial pattern described in Example~\ref{ex-initial-pattern}. This pattern contains the following elements:
\begin{itemize}
\item $(\{q_3(t_3,u_3,v_3)\}, \{t_3{\mapsto}t_2^1,u_3{\mapsto}u_2^1,v_3{\mapsto}v_2^1\}),$
\item $(\{s(y_4,t_4)\},\{t_4{\mapsto}t_2^1,y_5{\mapsto}u_1^1\}),$
\item $(\{r(z_4,t_4)\},\{t_4{\mapsto}t_2^1,z_4{\mapsto}v_1^1\}),$
\item $(\{s(y_4,t_4),r(z_4,t_4)\},\{t_4{\mapsto}t_2^1,y_4{\mapsto}u_1^1,t_4{\mapsto}v_1^1\}),$
\item $(\{s(y_5,t_5)\},\{t_5{\mapsto}t_2^1,y_5{\mapsto}u_1^1\}),$
\item $(\{r(z_5,t_5)\},\{t_5{\mapsto}t_2^1,z_5{\mapsto}v_1^1\}),$
\item $(\{s(y_5,t_5),r(z_5,t_5)\},\{t_5{\mapsto}t_2^1,y_5{\mapsto}u_1^1,z_5{\mapsto}v_1^1\}).$
\end{itemize}
%
%
This pattern is associated with a bag equivalent to the abstract bag $\mathbb{B}(R_2,id)$. Thus, the abstract pattern $\mathbb{P}$ associated with this initial pattern contains the same elements, where the mappings are modified by substituting $t_2^1$ by $t_2$, $u_2^1$ by $u_2$, $v_2^1$ by $v_2$, $u_1^1$ by $y_2$ and $v_1^1$ by $z_2$.
 \end{example}

%
%
%

\begin{definition}[Initial Abstract Pattern]
Let $\mathbb{B}$ be an abstract bag. The \emph{initial abstract pattern} of $\mathbb{B}$, denoted by $\mathbb{P}_\mathrm{init}(\mathbb{B})$ is the set of pairs $(G,\pi)$ such that $G$ is a subset of a rule body and $\pi$ is a (full) homomorphism from $G$ to $\fun{atoms}(\mathbb{B})$.
\end{definition}

Let $B_1$ and $B_2$ be two bags of a derivation tree such that $B_2$ is a child of $B_1$. $B_1$ and $B_2$ share some terms. Let us assume that $B_1$ is structurally equivalent to an abstract bag $\mathbb{B}_1$ and that $B_2$ is structurally equivalent to an abstract bag $\mathbb{B}_2$. If we only state that a bag equivalent to $\mathbb{B}_2$ is a child of a bag equivalent to $\mathbb{B}_1$, we miss some information about the above mentioned shared terms. Capturing this information is the purpose of the notion of \emph{link}.

\begin{definition}[Link]
\label{def-link}
 Let $\mathbb{B}_1$ and $\mathbb{B}_2$ be two abstract bags. A \emph{link} from $\mathbb{B}_2$ to $\mathbb{B}_1$ is an injective mapping $\lambda$ from the frontier terms of $\mathbb{B}_2$ to the terms of $\mathbb{B}_1$ such that the range of $\lambda$ has a non-empty intersection with the generated terms of $\mathbb{B}_1$.
\end{definition}

Please note that we define a link from a bag to its parent. It ensures that each bag has exactly one link. We will thus refer without ambiguity to the link of an abstract bag. 

\begin{example}[Link]
Let us consider $R^\mathrm{ex}_1 = q_1(x_1,y_1,z_1) \rightarrow s(y_1,t_1) \wedge r(z_1,t_1) \wedge q_2(t_1,u_1,v_1)$ and $R^\mathrm{ex}_2 = q_2(x_2,y_2,z_2) \rightarrow s(y_2,t_2) \wedge r(z_2,t_2) \wedge q_3(t_2,u_2,v_2)$, and the two abstract bags $\mathbb{B}_1 = \mathbb{B}(R_1,\fun{id})$ and $\mathbb{B}_2 = \mathbb{B}(R_2,\fun{id})$. Then $\lambda=\{y_2{\mapsto}u_1,z_2{\mapsto}v_1\}$ is a link from $\mathbb{B}_2$ to $\mathbb{B}_1$.
\end{example}

We are also interested in the link that describes a particular situation in a derivation tree, hence the notion of \emph{induced} link.

\begin{definition}[Induced Link]
Let $B_1$ and $B_2$ be two bags of a derivation tree such that $B_2$ is a child of $B_1$. Let $\mathbb{B}_1$ and $\mathbb{B}_2$ be two abstract bags such that $\mathbb{B}_1\simeq B_1$ and $\mathbb{B}_2\simeq B_2$. The link \emph{induced} by $B_1$ and $B_2$ is the mapping $\lambda$ of the frontier terms of $\mathbb{B}_2$ to $\fun{terms}(\mathbb{B}_1)$ defined by
%
%
$\lambda(y) = \psi_{B_1\rightarrow \mathbb{B}_1} ( \psi_{\mathbb{B}_2\rightarrow B_2}(y)).$
We then also say that $B_2$ is linked to $B_1$ by $\lambda$.
\end{definition}

Previous Property \ref{prop-pattern-equivalence} states that the pattern of a bag determines the rules that are applicable on it. We will thus gather information relative to the structure of derivation trees by means of ``saturation rules'' whose intuition is explained by the following example. Note that these rules have nothing to do with existential rules.


\begin{example}
\label{ex-pattern-evolution} Let us consider $R_1 = r(x_1,y_1) \rightarrow s(x_1,y_1)$ and $R_2 = s(x_2,y_2) \rightarrow p(x_2)$. Let $P_1$ be the following pattern: $\{r(x,y),\{x{\mapsto}a,y{\mapsto} b\}\}$. For any bag $B$ of a derivation tree $\fun{DT}(S)$ such that $P_1 \sqsubseteq P(B)$. $R_1$ is applicable by mapping $x_1$ to $a$ and $y_1$ to $b$. This allows to derive $s(a,b)$ (which may be used to apply $R_2$). Thus, the pattern of $B$ in some derivation starting with $S$ contains $P_2 = \{(r(x_1,y_1),\{x_1{\mapsto} a,y_1{\mapsto} b\}),(s(x_2,y_2),\{x_2{\mapsto} a,y_2{\mapsto} b\})\}$. Let us point out that this pattern inclusion is valid in ``sufficiently complete'' derivations, but not necessarily in the derivation tree of each derivation sequence.
\end{example}

Example \ref{ex-pattern-evolution} gives the intuition behind \emph{evolution rules}: it exhibits a case where we can infer that if a bag has a pattern including $P_1$, then its pattern can evolve into a pattern including $P_2$. Such information will be gathered by evolution rules, and will be denoted by $\mathbb{P}_1 \rightsquigarrow \mathbb{P}_2$ with $\mathbb{P}_1$ and $\mathbb{P}_2$ being the abstract counterparts of $P_1$ and $P_2$, respectively. To deal with the creation of new bags, we design \emph{creation rules}. They allow us to derive information about the children that a bag with a given pattern must have. Such a rule will be denoted by $\mathbb{P}_1 \rightsquigarrow \lambda.\mathbb{P}_2$, and intuitively means that rules may be applied to ensure that any bag $B_1$ with pattern $P_1$ has a child $B_2$ with pattern $P_2$ such that the link induced by $B_1$ and $B_2$ is $\lambda$ and $\mathbb{P}_1$ and $\mathbb{P}_2$ are again the abstract counterparts of $P_1$ and $P_2$, respectively.

In the following, we show how to derive a set of \emph{sound} creation and evolution rules by means of Properties \ref{prop-creation-rule} to \ref{prop-lower-saturation}.


\begin{definition}[Sound Creation Rule]
\index{creation rule} \label{def-creation rule} Let $\mathbb{P}_1,\mathbb{P}_2$ be two abstract patterns, and $\lambda$ be a link between the support of $\mathbb{P}_2$ and the support of $\mathbb{P}_1$. A \emph{creation rule} is a rule of the following form:

$$\gamma_c: \mathbb{P}_1 \rightsquigarrow \lambda.\mathbb{P}_2.$$

$\gamma_c$ is \emph{sound} if for any derivation $S$, for any bag $B_1$ of $(\fun{DT}(S),P)$ such that $\mathbb{P}_1 \sqsubseteq P(B_1)$, there exists a derivation $S'$ extending $S$ with a child $B_2$ of $B_1$ in $(\fun{DT}(S'),P')$ linked by $\lambda$ to $B_1$, and for which $\mathbb{P}_2 \sqsubseteq P'(B_2)$.
\end{definition}


\begin{definition}[Sound Evolution Rule]
\index{evolution rule} \label{def-evolution rule} Let $\mathbb{P}_1,\mathbb{P}_2$ be two abstract patterns. An \emph{evolution rule} is a rule of the following form:
$$\gamma_e: \mathbb{P}_1 \rightsquigarrow \mathbb{P}_2.$$
$\gamma_e$ is \emph{sound} if $\mathbb{P}_1 \sqsubseteq \mathbb{P}_2$ and for any derivation $S$ and for any bag $B$ of $(\fun{DT}(S),P)$ satisfying $\mathbb{P}_1 \sqsubseteq P(B)$, there exists a derivation $S'$ extending $S$ with patterned derivation tree $(\fun{DT}(S'),P')$ such that $\mathbb{P}_2 \sqsubseteq P'(B)$.
\end{definition}

We now exhibit properties allowing to build sound rules.

\begin{property}
\label{prop-creation-rule} Let $\mathbb{P}$ be an abstract pattern with support $\mathbb{B}$, let $R$ be a rule from $\mathcal{R}$, and let $\pi$ be a mapping from $\fun{fr}(R)$ to $\fun{terms}(\mathbb{B})$ such that its range has a non empty intersection with the generated terms in $\mathbb{P}$. Let $(\fun{body}(R),\pi)$ be an element of $\mathbb{P}$. Let $\sigma$ be the fusion of $\fun{fr}(R)$ induced by $\pi$. Then $\mathbb{P} \rightsquigarrow \lambda.\mathbb{P}_\mathrm{init}(\mathbb{B}(R,\sigma))$ is a sound creation rule, where
$\lambda$ is equal to $\pi$ restricted to $\{\sigma(y)\mid y\in \fun{fr}(R)\}$.
\end{property}

\begin{proof}
Since the range of $\pi$ has a non-empty intersection with the set of generated terms of $\mathbb{B}(R,\sigma)$, $\lambda$ is a link from $\mathbb{B}(R,\sigma)$ to the support of $\mathbb{P}$. Moreover, let $B$ be a bag of a derivation tree such that $\mathbb{P} \sqsubseteq P(B)$. Then $(\fun{body}(R),\psi_{\fun{support}(\mathbb{P})\rightarrow B}\circ\pi) \in P(B)$. Thus, $R$ is applicable, by mapping its frontier to $\fun{terms}(B)$ (and at least one term generated in $B$ is the image of an element of the frontier). Thus $B$ has a child with link $\lambda$ and with a pattern that includes $\mathbb{P}_\mathrm{init}(\mathbb{B}(R,\sigma))$.
\end{proof}

We now define the counterpart of elementary joins for abstract patterns. The main difference is that the relationships between terms of different abstract patterns cannot be checked by equality as it was done previously. We thus define abstract elementary joins, where these relationships are specified by the link between two abstract patterns. A link between two patterns is not symmetric: we thus define two join operations, to update either the abstract pattern that is the domain of the link or the abstract pattern that is the range of the link.

\begin{definition}[Elementary Abstract Upper/Lower Join]
Let $\mathbb{P}_1$ and $\mathbb{P}_2$ be two abstract patterns, and let $\lambda$ be a link from $\mathbb{P}_2$ to $\mathbb{P}_1$. Let $R$ be a rule in $\mathcal{R}$ and let $(sb_1,\pi_1) \in \mathbb{P}_1$ and $(sb_2,\pi_2) \in \mathbb{P}_2$ for $sb_1,sb_2\subseteq \fun{body}(R)$. The \emph{elementary abstract upper and lower joins} of $(sb_1,\pi_1)$ with $(sb_2,\pi_2)$ are defined if  $\pi_1(x)$ and $\lambda(\pi_2(x))$ are defined and equal for all $x \in \fun{vars}(sb_1) \cap \fun{vars}(sb_2)$. In that case, it is the pair $(sb_1 \cup sb_2,\pi)$ with:
\begin{itemize}
\item  $\pi = \pi_1 \cup \lambda\circ \pi_2'$, where $\pi_2'$ is the restriction of $\pi_2$ to $\pi_2^{-1}(\fun{domain}(\lambda))$, for the upper join;
\item $\pi = \pi_2 \cup \lambda^{-1}\circ\pi_1'$, where $\pi_1'$ is the restriction of $\pi_1$ to $\pi_1^{-1}(\fun{range}(\lambda))$, for the lower join.
\end{itemize}
\end{definition}

%

%

\begin{definition}[Abstract Upper/Lower Join]
Let $\mathbb{P}_1$ and $\mathbb{P}_2$ be two abstract patterns, and let $\lambda$ be a link from $\mathbb{P}_2$ to $\mathbb{P}_1$.

The \emph{abstract upper join} of $\mathbb{P}_1$  w.r.t. $(\lambda,\mathbb{P}_2)$ is the set of all existing elementary abstract upper joins of $(sb_1,\pi_1) \in \mathbb{P}_1$ with $(sb_2,\pi_2)\in \mathbb{P}_2$, where $sb_1,sb_2\subseteq \fun{body}(R)$ for some $R\in\mathcal{R}$. It is denoted by $\fun{Join$_\mathrm{u}$}(\mathbb{P}_1,\lambda,\mathbb{P}_2)$.

The \emph{abstract lower join} of $\mathbb{P}_2$ w.r.t. $(\lambda,\mathbb{P}_1)$ is the set of all existing elementary abstract lower joins of $(sb_1,\pi_1) \in \mathbb{P}_1$ with $(sb_2,\pi_2)\in \mathbb{P}_2$, where $sb_1,sb_2\subseteq \fun{body}(R)$ for some $R\in\mathcal{R}$. It is denoted by $\fun{Join$_\mathrm{l}$}(\mathbb{P}_1,\lambda,\mathbb{P}_2)$.

\end{definition}

We now exploit this notion of join in order to define new sound creation and evolution rules.

\begin{property}
\label{prop-abstract-lower-join} If $\mathbb{P}_1 \rightsquigarrow \lambda.\mathbb{P}_2$ is a sound creation rule, then so is $\mathbb{P}_1 \rightsquigarrow \lambda.\fun{Join$_\mathrm{l}$}(\mathbb{P}_1,\lambda,\mathbb{P}_2)$.
\end{property}

\begin{proof}
Let $S$ be a derivation, $B_1$ be a bag of $(\fun{DT}(S),P)$ such that $\mathbb{P}_1 \sqsubseteq P(B_1)$. Since $\mathbb{P}_1 \rightsquigarrow \lambda.\mathbb{P}_2$ is sound, there are a derivation $S'$ with patterned derivation tree $(\fun{DT}(S'),P')$ and a child $B_2$ of $B_1$ in $S'$ linked to $B_1$ by $\lambda$ such that $\mathbb{P}_2 \sqsubseteq P'(B_2)$.
 By soundness of join propagation, $\fun{Join}(P'(B_2),P'(B_1)) \sqsubseteq P'(B_2)$. By monotonicity of the join operation, we obtain that $\mathbb{P}_1 \rightsquigarrow \lambda.\fun{Join$_\mathrm{l}$}(\mathbb{P}_1,\lambda,\mathbb{P}_2))$ is a sound rule.
\end{proof}

\begin{property}
\label{prop-abstract-upper-join} If $\mathbb{P}_1 \rightsquigarrow \lambda.\mathbb{P}_2$ is a sound creation rule, then $\mathbb{P}_1 \rightsquigarrow \fun{Join$_\mathrm{u}$}(\mathbb{P}_1,\lambda,\mathbb{P}_2)$ is a sound evolution rule.
\end{property}

\begin{proof}
Similar to the proof of Prop \ref{prop-abstract-lower-join}.
\end{proof}

\begin{property}
\label{prop-trans-evol}
If $\mathbb{P}_1 \rightsquigarrow \mathbb{P}_2$ and $\mathbb{P}_2 \rightsquigarrow \mathbb{P}_3$ are sound evolution rules, then $\mathbb{P}_1 \rightsquigarrow \mathbb{P}_3$ is also a sound evolution rule.
\end{property}

\begin{proof}
Let $S$ be a derivation, and let $B$ be a bag of $(\fun{DT}(S),P)$ such that $\mathbb{P}_1 \sqsubseteq P(B)$. Since $\mathbb{P}_1 \rightsquigarrow \mathbb{P}_2$ is sound, there exists a derivation $S'$ extending $S$ such that $\mathbb{P}_2 \sqsubseteq P'(B)$. Since $\mathbb{P}_2 \rightsquigarrow \mathbb{P}_3$ is sound, there exists a derivation $S''$ extending $S'$ such that $\mathbb{P}_3 \sqsubseteq P''(B)$. Since $S''$ is also a derivation extending $S$, it holds that $\mathbb{P}_1 \rightsquigarrow \mathbb{P}_3$ is sound.
\end{proof}

\begin{property}
\label{prop-evolution-creation-combined}
If $\mathbb{P}_1 \rightsquigarrow \mathbb{P}_2$ and $\mathbb{P}_1 \rightsquigarrow \lambda.\mathbb{P}_3$ are sound evolution/creation rules, then $\mathbb{P}_2 \rightsquigarrow \lambda.\mathbb{P}_3$ is a sound creation rule.
\end{property}

\begin{proof}
The property holds by monotonicity of the join operation, and by the condition that $\mathbb{P}_1 \rightsquigarrow \mathbb{P}_2$ being sound implies $\mathbb{P}_1 \sqsubseteq \mathbb{P}_2$.
\end{proof}

\begin{property}
\label{prop-lower-saturation} If $\mathbb{P}_1 \rightsquigarrow \lambda.\mathbb{P}_2$ and $\mathbb{P}_2 \rightsquigarrow \mathbb{P}_3$ are sound creation/evolution rules, then $\mathbb{P}_1 \rightsquigarrow \lambda.\mathbb{P}_3$ is a sound creation rule.
\end{property}

\begin{proof}
Let $S$ be a derivation, and let $B_1$ be a bag of $(\fun{DT}(S),P)$ such that $\mathbb{P}_1 \sqsubseteq P(B_1)$. Since $\mathbb{P}_1 \rightsquigarrow \lambda.\mathbb{P}_2$ is sound, there are a derivation $S'$ with patterned derivation tree $(\fun{DT}(S'),P')$ and a child $B_2$ of $B_1$ in $S'$ that is linked to $B_1$ by $\lambda$ such that $\mathbb{P}_2 \sqsubseteq P'(B_2)$. Since $\mathbb{P}_2 \rightsquigarrow \mathbb{P}_3$ is sound, there exists a derivation $S''$ extending $S'$ such that $\mathbb{P}_3 \sqsubseteq P''(B_2)$. $S''$ extends $S$ as well, and thus $\mathbb{P}_1 \rightsquigarrow \lambda.\mathbb{P}_3$ is a sound creation rule.
\end{proof}

We call \emph{(abstract) pattern saturation} the already outlined procedure that builds all creation and evolution rules w.r.t.~$F$ and $\mathcal{R}$, obtained via an exhaustive application of all deduction rules displayed in Fig.~\ref{fig-deductioncalculus}. We now prove that this process terminates.


\begin{figure}

\hfill
\begin{tabular}{c}
\\\hline
$\mathbb{P} \rightsquigarrow \pi_{\mid \sigma(\fun{fr}(R))}.\mathbb{P}_\mathrm{init}(\mathbb{B}(R,\sigma))$
\end{tabular}
Prop.~\ref{prop-creation-rule}
\hfill~

\bigskip

\hfill
\begin{tabular}{c}
$\mathbb{P}_1 \rightsquigarrow \lambda.\mathbb{P}_2$ \\\hline
$\mathbb{P}_1 \rightsquigarrow \lambda.\fun{Join$_\mathrm{l}$}(\mathbb{P}_1,\lambda,\mathbb{P}_2)$
\end{tabular}Prop.~\ref{prop-abstract-lower-join}
\hfill
\begin{tabular}{c}
$\mathbb{P}_1 \rightsquigarrow \lambda.\mathbb{P}_2$\\\hline
$\mathbb{P}_1 \rightsquigarrow \fun{Join$_\mathrm{u}$}(\mathbb{P}_1,\lambda,\mathbb{P}_2)$
\end{tabular}Prop.~\ref{prop-abstract-upper-join}
\hfill~

\bigskip

\hfill\begin{tabular}{c}
$\mathbb{P}_1 \rightsquigarrow \mathbb{P}_2$ \ \ \  $\mathbb{P}_2 \rightsquigarrow \mathbb{P}_3$\\\hline
$\mathbb{P}_1 \rightsquigarrow \mathbb{P}_3$
\end{tabular}Prop.~\ref{prop-trans-evol}
\hfill~

\bigskip

\hfill
\begin{tabular}{c}
$\mathbb{P}_1 \rightsquigarrow \mathbb{P}_2$ \ \ \  $\mathbb{P}_1 \rightsquigarrow \lambda.\mathbb{P}_3$\\\hline
$\mathbb{P}_2 \rightsquigarrow \lambda.\mathbb{P}_3$
\end{tabular}Prop.~\ref{prop-evolution-creation-combined}
\hfill
\begin{tabular}{c}
$\mathbb{P}_1 \rightsquigarrow \lambda.\mathbb{P}_2$ \ \ \  $\mathbb{P}_2 \rightsquigarrow \mathbb{P}_3$\\\hline
$\mathbb{P}_1 \rightsquigarrow \lambda.\mathbb{P}_3$
\end{tabular}Prop.~\ref{prop-lower-saturation}
\hfill~

\caption{\label{fig-deductioncalculus}Deduction calculus for the pattern saturation rules. For the first deduction rule, $R\in \mathcal{R}\cup\{\to F\}$, $\pi$ is a homomorphism from $\fun{fr}(R)$ to $\fun{terms}(\fun{support}(\mathbb{P}))$ such that $(\fun{body}(R),\pi)\in \mathbb{P}$, and $\sigma$ is the fusion of $\fun{fr}(R)$ induced by $\pi$.
}
\end{figure}

\begin{property}[Termination]
For any fact $F$ and any \RgRbRtRsR set of rules $\mathcal R$, abstract pattern saturation terminates.
\end{property}

\begin{proof}
There is a finite number of abstract patterns and links between them, and thus a finite number of evolution and creation rules. At each step, the number of created rules can only increase, which shows the termination of pattern saturation.
\end{proof}

For technical purposes, we will use the \emph{rank} of an evolution/creation rule. 

\begin{definition}[Rank]
The \emph{rank} of an evolution or a creation rule is the minimum number of deduction rules (Figure \ref{fig-deductioncalculus}) necessary to derive that rule.
\end{definition}

This notion of rank helps us to prove the following technical lemma, that states that the pattern saturation respects some notion of monotonicity: at least as much information can be derived from an abstract pattern $\mathbb{P}_2$ as from an abstract pattern $\mathbb{P}_1$ if $\mathbb{P}_1 \sqsubseteq \mathbb{P}_2$.

\begin{lemma}
\label{lemma-monotonic-saturation} Let $\mathbb{P}_1$ and $\mathbb{P}_2$ be two abstract patterns such that $\mathbb{P}_1 \sqsubseteq \mathbb{P}_2$. For any rule $\mathbb{P}_1 \rightsquigarrow \mathbb{P}'_1$ (resp. $\mathbb{P}_1 \rightsquigarrow \lambda.\mathbb{P}'_1$) in the pattern saturation, there exists a rule $\mathbb{P}_2 \rightsquigarrow \mathbb{P}'_2$ (resp. $\mathbb{P}_2 \rightsquigarrow \lambda.\mathbb{P}'_2$) in the pattern saturation such that $\mathbb{P}'_1 \sqsubseteq \mathbb{P}'_2$.
\end{lemma}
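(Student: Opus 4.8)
The plan is to prove the statement by strong induction on the \emph{rank} of the rule whose left-hand side is $\mathbb{P}_1$. Before starting the induction I would record three facts. (i)~Since there is exactly one abstract bag per structural-equivalence class, $\mathbb{P}_1 \sqsubseteq \mathbb{P}_2$ forces $\fun{support}(\mathbb{P}_1) = \fun{support}(\mathbb{P}_2)$ and makes the natural bijection of Definition~\ref{def-pattern-inclusion} the identity; hence on abstract patterns $\sqsubseteq$ is plain set inclusion of pairs, and in particular it is transitive. (ii)~Every rule produced by the calculus is sound, so by Definition~\ref{def-evolution rule} each evolution rule $\mathbb{P}\rightsquigarrow\mathbb{P}'$ satisfies $\mathbb{P}\sqsubseteq\mathbb{P}'$. (iii)~The joins are monotone: if $\mathbb{P}_1\sqsubseteq\mathbb{P}_1'$ and $\mathbb{P}_2\sqsubseteq\mathbb{P}_2'$ (the supports, hence the admissible link $\lambda$, being unchanged), then $\fun{Join}_{\mathrm{l}}(\mathbb{P}_1,\lambda,\mathbb{P}_2)\sqsubseteq\fun{Join}_{\mathrm{l}}(\mathbb{P}_1',\lambda,\mathbb{P}_2')$, and likewise for $\fun{Join}_{\mathrm{u}}$. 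Fact (iii) is an element-wise check: an elementary abstract join of $(sb_1,\pi_1)\in\mathbb{P}_1$ with $(sb_2,\pi_2)\in\mathbb{P}_2$ is defined by a compatibility condition depending only on $\pi_1,\pi_2,\lambda$ and produces a value depending only on the same data, so the same pairs, now lying also in $\mathbb{P}_1'$ and $\mathbb{P}_2'$, yield the identical join element there.

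For the base case (rank $1$) the only applicable deduction rule is the first one (Prop.~\ref{prop-creation-rule}), so the rule reads $\mathbb{P}_1\rightsquigarrow\lambda.\mathbb{P}_\mathrm{init}(\mathbb{B}(R,\sigma))$ with $(\fun{body}(R),\pi)\in\mathbb{P}_1$ and $\sigma$ the fusion induced by $\pi$. By (i), $(\fun{body}(R),\pi)\in\mathbb{P}_2$, so the same deduction rule fires on $\mathbb{P}_2$ with identical $R,\pi,\sigma,\lambda$ and identical right-hand side; taking $\mathbb{P}_2'=\mathbb{P}_1'$ finishes the case. In the inductive step I would split on the last deduction rule applied. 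For the two join rules (Props.~\ref{prop-abstract-lower-join} and~\ref{prop-abstract-upper-join}), whose single premise is a creation rule $\mathbb{P}_1\rightsquigarrow\lambda.\mathbb{Q}$, the induction hypothesis gives $\mathbb{P}_2\rightsquigarrow\lambda.\mathbb{Q}'$ with $\mathbb{Q}\sqsubseteq\mathbb{Q}'$; reapplying the same join rule and invoking (iii) with $\mathbb{P}_1\sqsubseteq\mathbb{P}_2$ and $\mathbb{Q}\sqsubseteq\mathbb{Q}'$ closes the gap.

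The composition rules are handled by chaining two applications of the hypothesis. For transitivity of evolution (Prop.~\ref{prop-trans-evol}), from premises $\mathbb{P}_1\rightsquigarrow\mathbb{Q}$ and $\mathbb{Q}\rightsquigarrow\mathbb{P}_1'$ I first obtain $\mathbb{P}_2\rightsquigarrow\mathbb{Q}'$ with $\mathbb{Q}\sqsubseteq\mathbb{Q}'$, then apply the hypothesis to $\mathbb{Q}\rightsquigarrow\mathbb{P}_1'$ \emph{against the freshly produced inclusion} $\mathbb{Q}\sqsubseteq\mathbb{Q}'$ to get $\mathbb{Q}'\rightsquigarrow\mathbb{P}_2'$ with $\mathbb{P}_1'\sqsubseteq\mathbb{P}_2'$, and compose the two into $\mathbb{P}_2\rightsquigarrow\mathbb{P}_2'$; the creation-then-evolution rule (Prop.~\ref{prop-lower-saturation}) is treated identically, carrying the link $\lambda$ along. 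The remaining case, Prop.~\ref{prop-evolution-creation-combined}, is the only structurally different one: the conclusion $\mathbb{P}_1\rightsquigarrow\lambda.\mathbb{X}$ arises from an evolution premise $\mathbb{R}\rightsquigarrow\mathbb{P}_1$ and a creation premise $\mathbb{R}\rightsquigarrow\lambda.\mathbb{X}$, so the conclusion's left-hand side $\mathbb{P}_1$ is the \emph{right}-hand side of a premise. Here I would leave the evolution premise untouched: by (ii) it gives $\mathbb{R}\sqsubseteq\mathbb{P}_1$, which with the hypothesis $\mathbb{P}_1\sqsubseteq\mathbb{P}_2$ and transitivity (i) yields $\mathbb{R}\sqsubseteq\mathbb{P}_2$; applying the induction hypothesis to the creation premise $\mathbb{R}\rightsquigarrow\lambda.\mathbb{X}$ against $\mathbb{R}\sqsubseteq\mathbb{P}_2$ directly delivers $\mathbb{P}_2\rightsquigarrow\lambda.\mathbb{Y}$ with $\mathbb{X}\sqsubseteq\mathbb{Y}$.

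The main obstacle is conceptual rather than computational: the induction hypothesis must be phrased uniformly over \emph{all} inclusion pairs, because in the composition cases it has to be re-applied to a premise against a new inclusion (the $\mathbb{Q}\sqsubseteq\mathbb{Q}'$ manufactured by the previous step) rather than the inclusion we started with, and in Prop.~\ref{prop-evolution-creation-combined} it must be applied to a premise whose left-hand side $\mathbb{R}$ differs from both $\mathbb{P}_1$ and $\mathbb{P}_2$. Getting this quantification right — together with the observation (ii) that soundness forces $\mathbb{P}\sqsubseteq\mathbb{P}'$ for evolution rules, which is exactly what lets the shifted left-hand side be reached via transitivity — is the crux; the join monotonicity (iii) and the preservation of supports and links along $\sqsubseteq$ are then routine.
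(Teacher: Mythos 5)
Your proof is correct and follows essentially the same route as the paper's: induction on the rank of the derived rule, case analysis on the last deduction rule applied, monotonicity of the abstract joins for the join cases, a double application of the induction hypothesis for the two composition rules, and the observation that a sound evolution premise $\mathbb{R}\rightsquigarrow\mathbb{P}_1$ yields $\mathbb{R}\sqsubseteq\mathbb{P}_1\sqsubseteq\mathbb{P}_2$ so the induction hypothesis applies directly to the creation premise in the case of Property~\ref{prop-evolution-creation-combined}. Your explicit preliminary facts (identity of the natural bijection on shared supports, join monotonicity, soundness forcing inclusion for evolution rules) are used only implicitly in the paper, but the argument is the same.
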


\begin{proof}
We prove the result by induction on the rank of $\mathbb{P}_1 \rightsquigarrow \mathbb{P}'_1$ (resp. $\mathbb{P}_1 \rightsquigarrow \lambda.\mathbb{P}'_1$). At rank $0$, the result is vacuously true.
\begin{itemize}
\item Let $\mathbb{P}_1\rightsquigarrow \mathbb{P}'_1$ be a rule of rank $n$ of the pattern saturation. It has been obtained by applying Property \ref{prop-abstract-upper-join} or Property \ref{prop-trans-evol} to rules of rank strictly smaller than $n$. Let us first consider that Property \ref{prop-abstract-upper-join} has been applied. Let $\mathbb{P}_1 \rightsquigarrow \lambda.\tilde{\mathbb{P}_1}$ be the rule on which Property \ref{prop-abstract-upper-join} has been applied. By induction hypothesis, there exists a rule $\mathbb{P}_2 \rightsquigarrow \lambda.\tilde{\mathbb{P}_2}$ in the pattern saturation such that $\tilde{\mathbb{P}_1}\sqsubseteq\tilde{\mathbb{P}_2}$. By monotonicity of the join operation, it holds that $\mathbb{P}'_1 = \fun{Join$_\mathrm{u}$}(\mathbb{P}_1,\lambda,\tilde{\mathbb{P}_1}) \sqsubseteq \fun{Join$_\mathrm{u}$}(\mathbb{P}_2,\lambda,\tilde{\mathbb{P}_2}) = \mathbb{P}'_2$. Thus $\mathbb{P}_2 \rightsquigarrow \mathbb{P}'_2$ is in the pattern saturation and $\mathbb{P}'_1 \sqsubseteq \mathbb{P}'_2$. Let us now consider that Property \ref{prop-trans-evol} has been used to create $\mathbb{P}_1 \rightsquigarrow \mathbb{P}'_1$. Then, there are two rules of rank strictly smaller than $n$, namely $\mathbb{P}_1 \rightsquigarrow \mathbb{P}''_1$ and $\mathbb{P}''_1 \rightsquigarrow \mathbb{P}'_1$. By induction hypothesis, there is a rule $\mathbb{P}_2 \rightsquigarrow \mathbb{P}''_2$ in the pattern saturation such that $\mathbb{P}''_1 \sqsubseteq \mathbb{P}''_2$. We can once again apply the induction hypothesis, and we conclude that there exists a rule $\mathbb{P}''_2 \rightsquigarrow \mathbb{P}'_2$ in the pattern saturation, where $\mathbb{P}'_1 \sqsubseteq \mathbb{P}'_2$. By applying Property \ref{prop-trans-evol}, we conclude that $\mathbb{P}_2 \rightsquigarrow \mathbb{P}'_2$ is in the pattern saturation.
\item Let $\gamma_e: P_1 \rightsquigarrow \lambda.\mathbb{P}'_1$ be a rule of rank $n$ of the pattern saturation. It may have been created by application of Properties \ref{prop-creation-rule}, \ref{prop-abstract-lower-join}, \ref{prop-evolution-creation-combined} or \ref{prop-lower-saturation}.
  \begin{itemize}
  \item If $\gamma_e$ has been created by Property \ref{prop-creation-rule}, then the rule $\mathbb{P}_2 \rightsquigarrow \lambda.\mathbb{P}'_1$ can also be created thanks to this property, since $\mathbb{P}_1 \sqsubseteq \mathbb{P}_2$.
  \item If $\gamma_e$ has been created by Property \ref{prop-abstract-lower-join}, then there is a rule $\mathbb{P}_1 \rightsquigarrow \lambda.\mathbb{P}''_1$  of rank strictly smaller than $n$ in the pattern saturation, which has been used to create $\gamma_e$. By induction hypothesis, there is a rule $\mathbb{P}_2 \rightsquigarrow \lambda.\mathbb{P}''_2$. We define $\mathbb{P}'_2 = \fun{Join$_\mathrm{l}$}(\mathbb{P}_2,\lambda,\mathbb{P}''_2)$. By monotonicity of the join operation, we have that $\mathbb{P}'_1 = \fun{Join$_\mathrm{l}$}(\mathbb{P}_1,\lambda,\mathbb{P}''_1) \sqsubseteq \fun{Join$_\mathrm{l}$}(\mathbb{P}_2,\lambda,\mathbb{P}''_2) = \mathbb{P}'_2$. By applying Property \ref{prop-abstract-lower-join}, we create $\mathbb{P}_2 \rightsquigarrow \lambda.\mathbb{P}'_2$, which shows the claim.
  \item If $\gamma_e$ has been created by Property \ref{prop-evolution-creation-combined}, there are two rules $\mathbb{P}''_1 \rightsquigarrow \mathbb{P}_1$ and $\mathbb{P}''_1 \rightsquigarrow \lambda.\mathbb{P}'_1$ of rank strictly less than $n$ in the pattern saturation. Since $\mathbb{P}''_1 \sqsubseteq \mathbb{P}_1 \sqsubseteq \mathbb{P}_2$, we can directly apply the induction hypothesis and state the existence of $\mathbb{P}_2 \rightsquigarrow \lambda.\mathbb{P}'_2$ with $\mathbb{P}'_1 \sqsubseteq \mathbb{P}'_2$.
  \item If $\gamma_e$ has been created by Property \ref{prop-lower-saturation}, there exists two rules $\mathbb{P}_1 \rightsquigarrow \lambda.\mathbb{P}''_1$ and $\mathbb{P}''_1 \rightsquigarrow \mathbb{P}'_1$ of rank strictly less than $n$ in the pattern saturation. By induction hypothesis, there exists a rule $\mathbb{P}_2 \rightsquigarrow \lambda.\mathbb{P}''_2$ in the pattern saturation, with $\mathbb{P}''_1 \sqsubseteq \mathbb{P}''_2$. We can once again apply the induction hypothesis, inferring the existence of $\mathbb{P}''_2 \rightsquigarrow \mathbb{P}'_2$ in the pattern saturation, with $\mathbb{P}'_1 \sqsubseteq \mathbb{P}'_2$. By applying Property \ref{prop-lower-saturation}, we infer the existence of $\mathbb{P}_2 \rightsquigarrow \lambda.\mathbb{P}'_2$, which concludes the proof.
  \end{itemize}
\end{itemize}
 \end{proof}

In the obtained fixpoint, some rules are redundant. For instance, if there exist two rules $\mathbb{P}_1 \rightsquigarrow \lambda.\mathbb{P}_2$ and $\mathbb{P}_1 \rightsquigarrow \lambda.\mathbb{P}_3$, with $\mathbb{P}_2 \sqsubseteq \mathbb{P}_3$, then the first rule is implied by the second one. This motivate the definition of \emph{most informative rules}.

\begin{definition}[Most informative rules]
Let $F$ be a fact and $\mathcal R$ be a \RgRbRtRsR set of rules. The set of most informative rules associated with $F$ and $\mathcal R$, denoted by $\mathcal I_{F,\mathcal R}$ is the maximal subset of the abstract pattern saturation such that:
\begin{itemize}
\item a creation rule $\mathbb{P}_1 \rightsquigarrow \lambda.\mathbb{P}_2$ belongs to $\mathcal I_{F,\mathcal R}$ if there is no rule in the abstract $\mathbb{P}_1 \rightsquigarrow \lambda.\mathbb{P}_2$ with $\mathbb{P}_2 \not = \mathbb{P}_3$ and $\mathbb{P}_2 \sqsubseteq \mathbb{P}_3$;
\item an evolution rule $\mathbb{P}_1 \rightsquigarrow \mathbb{P}_2$ belongs to $\mathcal I_{F,\mathcal R}$ if there is no rule $\mathbb{P}_1 \rightsquigarrow \mathbb{P}_2$ in the abstract pattern saturation that satisfies $\mathbb{P}_2 \not = \mathbb{P}_3$ and $\mathbb{P}_2 \sqsubseteq \mathbb{P}_3$.
\end{itemize}
\end{definition}

Let us notice that we can without ambiguity speak about \emph{the} evolution rule of the most informative rule set having a given abstract pattern as left hand side (when it exists), as there is at most one such rule. 
\begin{property}
Let $F$ be a fact, $\mathcal R$ be a \RgRbRtRsR set of rules, and $\mathbb P$ be an abstract pattern. $\mathcal I_{F,\mathcal R}$ contains at most one evolution rule and at most one creation rule having $\mathbb P$ as left-hand side.
\end{property}

\begin{proof}
We show that if $\mathbb P \rightsquigarrow \lambda.\mathbb P_1$ and $\mathbb P \rightsquigarrow \lambda.\mathbb P_2$ (resp. $\mathbb P \rightsquigarrow \mathbb P_1$ and $\mathbb P \rightsquigarrow \mathbb P_2$) belong to the pattern saturation, then there exists $\mathbb P_3$ with $\mathbb P_1 \sqsubseteq \mathbb P_3$ and $\mathbb P_2 \sqsubseteq \mathbb P_3$ such that $\mathbb P \rightsquigarrow \lambda. \mathbb P_3$ (resp. $\mathbb P \rightsquigarrow \mathbb P_3$) belongs to the pattern saturation as well.	

We assume without loss of generality that $\mathbb P \rightsquigarrow \lambda.\mathbb P_1$ (resp. $\mathbb P \rightsquigarrow \mathbb P_1$) is the rule of smallest rank, and we prove the result by induction on that rank. A rule can be of rank $1$ if and only if it has been created thanks to Property \ref{prop-creation-rule}. Since the only way to create a rule of the shape $\mathbb P \rightsquigarrow \lambda.\mathbb P_1$ is to use that property and that other deduction rules may only make the patterns grow, it holds that $\mathbb P_1 \sqsubseteq \mathbb P_2$, and we can take $\mathbb P_3 = \mathbb P_2$. The results thus holds if the first rule is of rank $1$. Let us assume the result to be true for any rule up to rank $n$, and let us show that it is true as well for any rule of rank $k+1$. We first consider creation rules and we distinguish three cases:
\begin{itemize}
\item $\mathbb P \rightsquigarrow \lambda.\mathbb P_1$ has been created by Property \ref{prop-abstract-lower-join}. We can apply the induction hypothesis on the premises of that deduction rule, say $\mathbb P \rightsquigarrow \lambda.\mathbb P'_1$. There exists thus $\mathbb P_3'$ such that $\mathbb P \rightsquigarrow \lambda.\mathbb P_3'$ belongs to the pattern saturation and $P_1' \sqsubseteq P_3'$ and $P_2 \sqsubseteq P_3'$. By applying then Property \ref{prop-abstract-lower-join}, and by monotonicity of the join operation, one get $\mathbb P \rightsquigarrow \lambda.\mathbb P_3$  as desired.
\item $\mathbb P \rightsquigarrow \lambda.\mathbb P_1$ has been created by Property \ref{prop-evolution-creation-combined}. We apply the induction hypothesis on the premise that is a creation rule, which allows us to conclude.
\item $\mathbb P \rightsquigarrow \lambda.\mathbb P_1$ has been created by Property \ref{prop-lower-saturation}. We apply the induction hypothesis on the premise that is a creation rule; Lemma \ref{lemma-monotonic-saturation} then allows us to conclude.
\end{itemize}
We now consider evolution rules. We distinguish two cases:
\begin{itemize}
\item $\mathbb P \rightsquigarrow \mathbb P_1$ has been created by Proposition \ref{prop-abstract-upper-join}. As in the first case of the creation rules, the result follow by induction hypothesis and monotonicity of the join operation.
\item $\mathbb P \rightsquigarrow \mathbb P_1$ has been created by Proposition \ref{prop-trans-evol}. The result follow by induction hypothesis on the first premise and by Lemma \ref{lemma-monotonic-saturation}.
\end{itemize}
\end{proof}

We illustrate pattern saturation by expanding the running example. Writing down absolutely every element of each pattern would impede the ease of reading. We will thus allow ourselves to skip some elements, and focus on the most important ones.

\begin{example}
The initial pattern $\mathbb{P}_F$ of $F_{ex}$ (Example \ref{ex-running}) contains the following elements:$(q_1(x_1,y_1,z_1),\{x_1{\mapsto}a,y_1{\mapsto}b,z_1{\mapsto}c\})$, $(q_1(x_1,y_1,z_1),\{x_1{\mapsto}d,y_1{\mapsto}c,z_1{\mapsto}e\})$ and $(q_1(x_1,y_1,z_1),\{x_1{\mapsto}f,y_1{\mapsto}g,z_1{\mapsto}g\}$. By application of Property \ref{prop-creation-rule},
 three novel rules are created: $\mathbb{P}_0 \rightsquigarrow \emptyset.\mathbb{P}_1^{b,c}$, $\mathbb{P}_0 \rightsquigarrow \emptyset.\mathbb{P}_1^{c,e}$ and $\mathbb{P}_0 \rightsquigarrow \emptyset.P_1^{g,g}$, where $\mathbb{P}_1^{b,c}, P_1^{c,e}$, and $\mathbb{P}_1^{g,g}$ are described below.

The atoms of the abstract bag associated with $\mathbb{P}_1^{b,c}$ are $\{s(b,t_1),r(c,t_1),q_2(t_1,u_1,v_1)\}$, and its link is empty (since the whole frontier of $R^\mathrm{ex}_1$ is mapped to constants). The atoms of the abstract bag associated with $\mathbb{P}_1^{c,e}$ are $\{s(c,t_1),r(e,t_1),q_2(t_1,u_1,v_1)\}$, and those of the abstract bag associated with $\mathbb{P}_1^{g,g}$ are $\{s(g,t_1),r(g,t_1),q_2(t_1,u_1,v_1)\}$.

$\mathbb{P}_1^{b,c}$ contains the following pairs:
\begin{itemize}
\item $(\{q_2(x_2,y_2,z_2)\},\{x_2{\mapsto}t_1,y_2{\mapsto}u_1,z_2{\mapsto}v_1\})$;
\item $(\{q_2(x_4,y_4,z_4)\},\{x_4{\mapsto}t_1,y_4{\mapsto}u_1,z_4{\mapsto}v_1\})$;
\item $(\{s(y_4,t_4)\},\{y_4{\mapsto}b, t_4{\mapsto}t_1\})$;
\item $(\{r(z_4,t_4)\},\{z_4{\mapsto}c,t_4{\mapsto}t_1\})$;
\item $(\{s(y_4,t_4),r(z_4,t_4)\},\{y_4{\mapsto}b,z_4{\mapsto}c,t_4{\mapsto}t_1\})$;
\item $(\{s(y_5,t_5)\},\{y_5{\mapsto}b, t_5{\mapsto}t_1\})$;
\item $(\{r(z_5,t_5)\},\{z_5{\mapsto}c, t_5{\mapsto}t_1\})$;
\item $(\{s(y_5,t_5),r(z_5,t_5)\},\{y_5{\mapsto}b,z_5{\mapsto}c,t_5{\mapsto}t_1\})$.
\end{itemize}

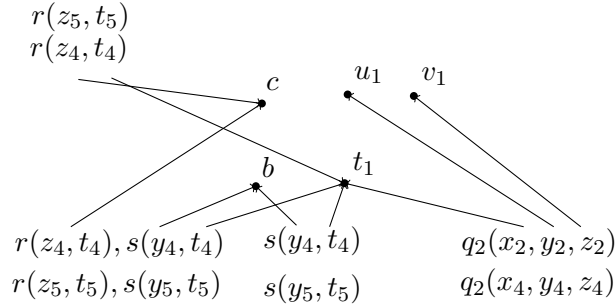
\begin{figure}
\begin{center}

\begin{tikzpicture}
\draw (-1.06,4.24) node[anchor=north west] {$s(b,t_1),r(c,t_1),q_2(t_1,u_1,v_1)$}; \draw (2.28,-0.24) node[anchor=north west] {$q_2(x_2,y_2,z_2)$}; \draw [->] (3.24,-0.36) -- (0.86,0.22); \draw [->] (3.64,-0.36) -- (0.9,1.4); \draw [->] (4,-0.36) -- (1.78,1.38); \draw (2.26,-0.78) node[anchor=north west] {$q_2(x_4,y_4,z_4)$}; \draw (-0.36,-0.22) node[anchor=north west] {$s(y_4,t_4)$}; \draw (-0.36,-0.86) node[anchor=north west] {$s(y_5,t_5)$}; \draw [->] (0.22,-0.34) -- (-0.32,0.18); \draw [->] (0.66,-0.36) -- (0.86,0.22); \draw (-3.46,2.34) node[anchor=north west] {$r(z_4,t_4)$}; \draw (-3.44,2.76) node[anchor=north west] {$r(z_5,t_5)$}; \draw [->] (-2.24,1.62) -- (0.86,0.22); \draw [->] (-2.68,1.6) -- (-0.24,1.28); \draw (-3.66,-0.26) node[anchor=north west] {$r(z_4,t_4),s(y_4,t_4)$}; \draw [->] (-2.78,-0.36) -- (-0.24,1.28); \draw [->] (-1.6,-0.36) -- (-0.32,0.18); \draw [->] (-0.98,-0.36) -- (0.86,0.22); \draw (-2.94,4.7) node[anchor=north west] {Atoms of the abstract bag:}; \draw (-3.7,-0.8) node[anchor=north west] {$r(z_5,t_5),s(y_5,t_5)$}; \fill [color=black] (0.86,0.22) circle (1.5pt); \draw[color=black] (1.12,0.48) node {$t_1$}; \fill [color=black] (-0.32,0.18) circle (1.5pt); \draw[color=black] (-0.16,0.44) node {$b$}; \fill [color=black] (-0.24,1.28) circle (1.5pt); \draw[color=black] (-0.1,1.54) node {$c$}; \fill [color=black] (0.9,1.4) circle (1.5pt); \draw[color=black] (1.18,1.66) node {$u_1$}; \fill [color=black] (1.78,1.38) circle (1.5pt); \draw[color=black] (2.06,1.64) node {$v_1$};
\end{tikzpicture}
\end{center}
\caption{A graphical representation of $\mathbb{P}_1^{b,c}$\label{fig-abstract-pattern} }
\end{figure}
$\mathbb{P}_1^{c,e}$ contains the same pairs, except that every occurrence of $b$ is replaced by $c$ and every occurrence of $c$ is replaced by $e$, whereas
$\mathbb{P}_1^{g,g}$ contains the same pairs, except that every occurrence of $b$ is replaced by $g$ and every occurrence of $c$ is replaced by $g$.

$\mathbb{P}_1^{b,c}$ is graphically represented in Figure \ref{fig-abstract-pattern}.

These three patterns contain $(\{q_2(x_2,y_2,z_2)\},\{x_2{\mapsto}t_1,y_2{\mapsto}u_1,z_2{\mapsto}v_1\})$, and we thus create the three following rules:
\begin{itemize}
\item $\mathbb{P}_1^{b,c} \rightsquigarrow \lambda'.\mathbb{P}_2,$
\item $\mathbb{P}_1^{c,e} \rightsquigarrow \lambda'.\mathbb{P}_2,$
\item $\mathbb{P}_1^{g,g} \rightsquigarrow \lambda'.\mathbb{P}_2,$
\end{itemize}

where $\lambda'=\{y_2{\mapsto}u_1,z_2{\mapsto}v_1\}$ and $\mathbb{P}_2$ is defined below.


The atoms of $\mathbb{P}_2$ are $\{s(y_2,t_2),r(z_2,t_2),q_3(t_2,u_2,v_2)\}$. It contains the following elements:
\begin{itemize}
\item $(\{q_3(t_3,u_3,v_3)\},\{t_3{\mapsto}t_2,u_3{\mapsto}u_2,v_3{\mapsto}v_2\})$,
\item $(\{s(y_4,t_4)\},\{y_4{\mapsto}y_2, t_4{\mapsto}t_2\})$,
\item $(\{r(z_4,t_4)\},\{z_4{\mapsto}z_2, t_4{\mapsto}t_2\})$,
\item $(\{s(y_4,t_4),r(z_4,t_4)\},z_4{\mapsto}z_2,y_4{\mapsto}y_2,t_4{\mapsto}t_2\})$,
\item $(\{s(y_5,t_5)\},\{y_5{\mapsto}y_2, t_5{\mapsto}t_2\})$,
\item $(\{r(z_5,t_5)\},\{z_5{\mapsto}z_2, t_5{\mapsto}t_2\})$,
\item $(\{s(y_5,t_5),r(z_5,t_5)\},\{y_5{\mapsto}y_2, z_5{\mapsto}z_2, t_5{\mapsto}t_2\})$.
\end{itemize}

The element $(\{q_3(t_3,u_3,v_3)\},\{t_3{\mapsto}t_2,u_3{\mapsto}u_2,v_3{\mapsto}v_2\})$ belongs to $\mathbb{P}_2$, and thus, we create a rule $\mathbb{P}_2 \rightsquigarrow \lambda''.\mathbb{P}_3)$, where $\lambda''=\{t_3{\mapsto}t_2\}$ and $\mathbb{P}_3$ contains the following elements:
\begin{itemize}
\item $(\{h(t_4)\},\{t_4{\mapsto}t_2\})$,
\item $(\{h(t_5)\},\{t_5{\mapsto}t_2\})$.
\end{itemize}

At this point, we cannot create any new rule by Property \ref{prop-creation-rule}. However, Property \ref{prop-abstract-upper-join} may be used to derive an evolution of $\mathbb{P}_2$. Indeed, since $\mathbb{P}_2 \rightsquigarrow \lambda''.\mathbb{P}_3$ has been derived, we can derive $\mathbb{P}_2 \rightsquigarrow \mathbb{P}'_2$ with $\mathbb{P}'_2 = \fun{Join$_\mathrm{u}$}(\mathbb{P}_2,\lambda'',\mathbb{P}_3)$. Note that $\mathbb{P}'_2$ is a superset of $\mathbb{P}_2$ that additionally contains the following elements:
\begin{itemize}
\item $(\{s(y_4,t_4),h(t_4)\},\{y_4{\mapsto}y_2,t_4{\mapsto}t_2\})$,
\item $(\{r(z_4,t_4),h(t_4)\},\{z_4{\mapsto}z_2,t_4{\mapsto}t_2\})$,
\item $(\{s(y_4,t_4),r(z_4,t_4),h(t_4)\},\{z_4{\mapsto}z_2,y_4{\mapsto}y_2,t_4{\mapsto}t_2\})$,
\item $(\{s(y_5,t_5),h(t_5)\},\{y_5{\mapsto}y_2, t_5{\mapsto}t_2\})$,
\item $(\{r(z_5,t_5),h(t_5)\},\{z_5{\mapsto}z_2, t_5{\mapsto}t_2\})$,
\item $(\{s(y_5,t_5),r(z_5,t_5),h(t_5)\},\{y_5{\mapsto}y_2, z_5{\mapsto}z_2, t_5{\mapsto}t_2\})$,
\item $(\{h(t_4)\}, \{t_4{\mapsto}t_2\})$,
\item $(\{h(t_5)\}, \{t_5{\mapsto}t_2\})$.
\end{itemize}

By Property \ref{prop-lower-saturation}, the following sound rules can then be obtained:
\begin{itemize}
\item $\mathbb{P}_1^{b,c} \rightsquigarrow \lambda'.\mathbb{P}_2',$
\item $\mathbb{P}_1^{c,e} \rightsquigarrow \lambda'.\mathbb{P}_2',$
\item $\mathbb{P}_1^{g,g} \rightsquigarrow \lambda'.\mathbb{P}_2'.$
\end{itemize}

Applying once more Property \ref{prop-abstract-upper-join} yields new sound rules such as:

$$\mathbb{P}_1^{b,c} \rightsquigarrow \mathbb{P}_1^{b,c'},$$

where $\mathbb{P}_1^{b,c'}$ is a superset of $\mathbb{P}_1^{b,c}$ that additionally contains, among others, the following element:

$$(\{q_2(x_4,y_4,z_4),r(z_4,t_4),s(y_4,t_4),h(t_4)\},\{x_4{\mapsto}t_1,y_4{\mapsto}u_1,z_4{\mapsto}v_1\}).$$

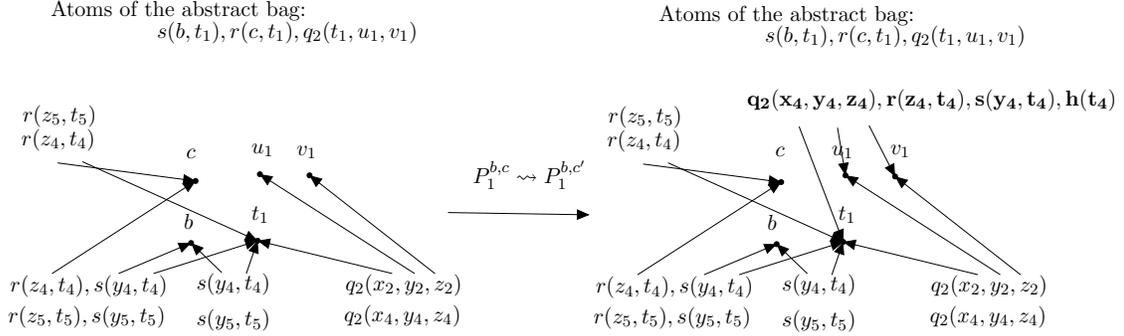
\begin{figure}
\begin{center}
\scalebox{0.75}{
\begin{tikzpicture}[line cap=round,line join=round,>=triangle 45,x=1.0cm,y=1.0cm]
\clip(-4.06,-3.28) rectangle (18.32,5.9);
\draw (-1.06,4.24) node[anchor=north west] {$s(b,t_1),r(c,t_1),q_2(t_1,u_1,v_1)$}; 
\draw (2.28,-0.24) node[anchor=north west] {$q_2(x_2,y_2,z_2)$}; 
\draw [->] (3.24,-0.36) -- (0.86,0.22); \draw [->] (3.64,-0.36) -- (0.9,1.4); 
\draw [->] (4,-0.36) -- (1.78,1.38); 
\draw (2.26,-0.78) node[anchor=north west] {$q_2(x_4,y_4,z_4)$}; 
\draw (-0.36,-0.22) node[anchor=north west] {$s(y_4,t_4)$}; 
\draw (-0.36,-0.86) node[anchor=north west] {$s(y_5,t_5)$};
\draw [->] (0.22,-0.34) -- (-0.32,0.18); 
\draw [->] (0.66,-0.36) -- (0.86,0.22); 
\draw (-3.46,2.34) node[anchor=north west] {$r(z_4,t_4)$}; 
\draw (-3.44,2.76) node[anchor=north west] {$r(z_5,t_5)$}; 
\draw [->] (-2.24,1.62) -- (0.86,0.22); 
\draw [->] (-2.68,1.6) -- (-0.24,1.28); 
\draw (-3.66,-0.26) node[anchor=north west] {$r(z_4,t_4),s(y_4,t_4)$}; 
\draw [->] (-2.78,-0.36) -- (-0.24,1.28); 
\draw [->] (-1.6,-0.36) -- (-0.32,0.18); 
\draw [->] (-0.98,-0.36) -- (0.86,0.22); 
\draw (-2.94,4.6) node[anchor=north west] {Atoms of the abstract bag:}; 
\draw (-3.7,-0.8) node[anchor=north west] {$r(z_5,t_5),s(y_5,t_5)$}; 
\draw (12.66,-0.26) node[anchor=north west] {$q_2(x_2,y_2,z_2)$}; 
\draw [->] (13.62,-0.38) -- (11.24,0.2); 
\draw [->] (14.02,-0.38) -- (11.28,1.38); 
\draw [->] (14.38,-0.38) -- (12.16,1.36); 
\draw (12.64,-0.8) node[anchor=north west] {$q_2(x_4,y_4,z_4)$}; 
\draw (10.02,-0.24) node[anchor=north west] {$s(y_4,t_4)$}; 
\draw (10.02,-0.88) node[anchor=north west] {$s(y_5,t_5)$}; 
\draw [->] (10.6,-0.36) -- (10.06,0.16);
\draw [->] (11.04,-0.38) -- (11.24,0.2); 
\draw (6.92,2.32) node[anchor=north west] {$r(z_4,t_4)$}; 
\draw (6.94,2.74) node[anchor=north west] {$r(z_5,t_5)$}; 
\draw [->] (8.14,1.6) -- (11.24,0.2); 
\draw [->] (7.7,1.58) -- (10.14,1.26); 
\draw (6.72,-0.28) node[anchor=north west] {$r(z_4,t_4),s(y_4,t_4)$}; 
\draw [->] (7.6,-0.38) -- (10.14,1.26); 
\draw [->] (8.78,-0.38) -- (10.06,0.16); 
\draw [->] (9.4,-0.38) -- (11.24,0.2); 
\draw (6.68,-0.82) node[anchor=north west] {$r(z_5,t_5),s(y_5,t_5)$}; 
\draw (9.72,4.18) node[anchor=north west] {$s(b,t_1),r(c,t_1),q_2(t_1,u_1,v_1)$}; 
\draw (7.84,4.52) node[anchor=north west] {Atoms of the abstract bag:}; 
\draw (9.9,2.02) node[anchor=north west] {$c$}; 
\draw (9.76,0.82) node[anchor=north west] {$b$}; 
\draw (11.02,0.96) node[anchor=north west] {$t_1$}; 
\draw (10.9,2.04) node[anchor=north west] {$u_1$}; 
\draw (11.94,2.06) node[anchor=north west] {$v_1$}; 
\draw (-0.54,1.98) node[anchor=north west] {$c$}; 
\draw (-0.58,0.84) node[anchor=north west] {$b$}; 
\draw (0.62,2.08) node[anchor=north west] {$u_1$}; 
\draw (0.62,0.96) node[anchor=north west] {$t_1$}; 
\draw (1.42,2.04) node[anchor=north west] {$v_1$}; 
\draw (9.4,3.04) node[anchor=north west] {$\mathit{\mathbf{q_2(x_4,y_4,z_4),r(z_4,t_4),s(y_4,t_4),h(t_4)}}$}; 
\draw [->] (10.46,2.24) -- (11.24,0.2); 
\draw [->] (11.14,2.22) -- (11.28,1.38); 
\draw [->] (11.7,2.26) -- (12.16,1.36); 
\draw [->] (4.24,0.72) -- (6.74,0.68); 
\draw (4.54,1.78) node[anchor=north west] {$P_1^{b,c} \rightsquigarrow P_1^{b,c'}$};
\begin{scriptsize}
\fill [color=black] (0.86,0.22) circle (1.5pt); \fill [color=black] (-0.32,0.18) circle (1.5pt); \fill [color=black] (-0.24,1.28) circle (1.5pt); \fill [color=black] (0.9,1.4) circle (1.5pt); \fill [color=black] (1.78,1.38) circle (1.5pt); \fill [color=black] (11.24,0.2) circle (1.5pt); \fill [color=black] (10.06,0.16) circle (1.5pt); \fill [color=black] (10.14,1.26) circle (1.5pt); \fill [color=black] (11.28,1.38) circle (1.5pt); \fill [color=black] (12.16,1.36) circle (1.5pt);
\end{scriptsize}
\end{tikzpicture}
}
\end{center}
\label{fig-evolution rule} \caption{Graphical representation of the rule $\mathbb{P}_1^{b,c} \rightsquigarrow \mathbb{P}_1^{b,c'}$. The new element of $ \mathbb{P}_1^{b,c'}$ is in bold.}
\end{figure}
Please note that in this case, $\pi=\{x_4{\mapsto}t_1,y_4{\mapsto}u_1,z_4{\mapsto}v_1\}$ does not map every variable appearing in the corresponding subset of a rule body. Indeed, $t_4$ is not mapped, since its image by the homomorphism extending $\pi$ does not belong to the terms relevant to the supporting bag.

We skip a part of the further development of this example. It can be checked that at some point, a rule $\mathbb{P}_F \rightsquigarrow \mathbb{P}'_F$ is created, where $\mathbb{P}'_F$ contains the following elements:

\begin{align*}
& (\{p_1(x_p),i(x_p)\},\{x_p{\mapsto}g\}) & (\{p_2(x_q),i(x_q)\},\{x_q{\mapsto}g\})
\end{align*}

The following two creation rules are thus sound and relevant:

\begin{align*}
& \mathbb{P}_0' \rightsquigarrow \emptyset.\mathbb{P}_p^i & \mathbb{P}_0' \rightsquigarrow \emptyset.\mathbb{P}_q^i
\end{align*}

where $\mathbb{P}_p^i$ contains in particular $(\{p_2(x_q),i(x_q)\},\{x_q{\mapsto}y_p\})$ and $\mathbb{P}_q^i$ contains $(\{p_1(x_p),i(x_p)\},\{x_p{\mapsto}y_q\})$. Since the body of $R^\mathrm{ex}_{6}$ belongs to $\mathbb{P}_p^i$, a new creation rule is added: $\mathbb{P}_p^i \rightsquigarrow \{x_p{\mapsto}y_q\}.\mathbb{P}_q$.

Likewise, since the body of $R^\mathrm{ex}_{7}$ belongs to $\mathbb{P}_q^i$, a new creation rule is added: $\mathbb{P}_q^i \rightsquigarrow \{x_q{\mapsto}y_p\}.\mathbb{P}_p$.

Last, two recursive rules are added:

\begin{align*}
& \mathbb{P}_p \rightsquigarrow \{x_p{\mapsto}y_q\}.\mathbb{P}_q, & \mathbb{P}_q \rightsquigarrow \{x_q{\mapsto}y_p\}.\mathbb{P}_p.
\end{align*}
\end{example}

\subsection{Computation of the Full Blocked Tree}


\begin{algorithm}[ht]
\KwData{A fact $F$, a set of \RgRbRtRsR rules $\mathcal R$, the set of most informative rules $\mathcal I_{F,\mathcal R}$.} 
\KwResult{$\mathfrak T_b^*(F,\mathcal R)$, a full blocked tree for $F$ and $\mathcal R$.} 
define the root of $\mathfrak T_b^*(F,\mathcal R)$ to be $B_F$\;
assign to $B_F$ a pattern $P_F \sim \mathbb P_F^*$, such that $\mathbb{P}_\mathrm{init}(\mathbb{B}(\to F,\emptyset))\rightsquigarrow \mathbb P_F^* \in \mathcal I_{F,\mathcal R}$ \;
blocked-patterns $:= P_F^*$\;
non-blocked-bags $:= B_F$\;
next-non-blocked $:= \emptyset$\;
\While{non-blocked-bags $\not = \emptyset$}{
next-non-blocked $:=\emptyset$\;
\For{$B_1 \in$ non-blocked-bags}{
$\mathbb P_1 := $ abstract pattern of $B_1$\;
\For{all creation rule $\mathbb P_1 \rightsquigarrow \lambda.\mathbb P_2 \in \mathcal I_{F,\mathcal R}$}
{
Add in $\mathfrak T_b^*(F,\mathcal R)$ a child $B_2$ to $B_1$, with induced link $\lambda$\;
Define the pattern of $B_2$ to be $P_2 \sim \mathbb P_2$\;
\If{$\mathbb P_2 \not \in $ blocked-patterns}{
next-non-blocked-bags := next-non-blocked-bags $\cup \{B_2\}$\;
blocked-patterns := blocked-patterns $\cup \{\mathbb P_2\}$\;
}
}
}
non-blocked-bags $:= $ next-non-blocked\;
}
\Return{$\mathfrak T_b^*(F,\mathcal R)$}\;
 \caption{Creation of a full blocked tree} 
\label{algo-generate-full-blocked-tree}
\end{algorithm}

Given a fact, a set of \RgRbRtRsRÊrules and their associated set of most informative rules, Algorithm \ref{algo-generate-full-blocked-tree} outputs a full blocked tree for $F$ and $\mathcal R$. 
We start by creating a bag with set of terms $T_0$. This bag is the root of the full blocked tree. We maintain a list of blocked patterns: any bag that is of that pattern and that is not labeled as non-blocked is thus blocked. We then consider the most informative evolution rule having $\mathbb{P}_F= \mathbb{P}_\mathrm{init}(\mathbb{B}(\to F,\emptyset))$ (i.e., the initial abstract pattern of $F$) as left-hand side, say $\mathbb{P}_F \rightarrow \mathbb{P}_F^*$. We label the root of the full blocked tree with the pattern $\mathbb{P}_F^*$.\footnote{Note that, technically, we abuse a abstract pattern as a non-abstract pattern here, but this is not a problem since no safe renaming is necessary for the (pattern of) bag $F$.} We mark this newly created root as being non-blocked. Then, as long as there exist a non-blocked bag $B_1$ and a most informative creation rule  $\mathbb{P}_1 \rightsquigarrow \lambda.\mathbb{P}_2$ with $\mathbb{P}_1\sim P(B_1)$ that has not been applied on $B_1$, we apply that rule. To apply it, we add a child $B_2$ to $B_1$ such that $P(B_2)\sim \mathbb{P}_2$ and the induced link from $B_2$ to $B_1$ is $\lambda$. $B_2$ is considered blocked (\emph{i.e.}, is not marked non-blocked) if there is already a bag $B_3$ with $P(B_2)\sim P(B_3)$ in the built structure, and non-blocked otherwise. This procedure halts, since there is a finite number of non-equivalent patterns, and the maximal degree of the built tree is also bounded. It creates a \emph{sound} blocked tree, since all creation and evolution rules are sound. It also creates a \emph{complete} blocked tree, and thus a \emph{full blocked tree}, as will be proven below.

Intuitively, the following property states that for any derivation tree associated with an $\mathcal R$-derivation of $F$, there exists an isomorphic tree generated by $\mathfrak T_b^*(F,\mathcal R)$.


 \begin{property}
 \label{prop-completeness-algo}
Let $F$ be a fact, $\mathcal R$ be \RgRbRtRsR. Let $S$ be an $\mathcal R$-derivation of $F$ and let $(\fun{DT}(S),P)$ be the according patterned derivation tree with root $B^\mathrm{root}$. Let $\mathfrak T_b^*(F,\mathcal R)$ be the corresponding full blocked tree with pattern-assigning function $P_{\mathfrak T_b^*}$ and root $B_{\mathfrak T_b^*}^\mathrm{root}$.\\
Then there exists a tree decomposition $\mathfrak T$ generated from $\mathfrak T_b^*(F,\mathcal R)$ via a mapping $f$, such that there exists a bijection $g$ from the bags of $(\fun{DT}(S),P)$ to the bags of $\mathfrak T$ that satisfies the following conditions:
\begin{enumerate}
\item $g(B^\mathrm{root}) = B_{\mathfrak T_b^*}^\mathrm{root}$, i.e., $g$ maps the root of $(\fun{DT}(S),P)$ to the root of $\mathfrak T_b^*(F,\mathcal R)$;
\item $P(B) \sqsubseteq P_{\mathfrak T_b^*}(f(g(B)))$ for all bags $B$ of $(\fun{DT}(S),P)$;
\item for all bags $B, B'$ of $(\fun{DT}(S),P)$ for which $B'$ is a child of $B$ with induced link $\lambda$, $g(B')$ is a child of $g(B)$ with induced link $\lambda$.
\end{enumerate}
 \end{property}

 \begin{proof}
 We prove the property by induction on the length of $S$.
 \begin{itemize}
 \item If $S$ is the empty derivation, its derivation tree is restricted to a single bag labeled by $F$. Such a tree can be generated from $\mathfrak T_b^*(F,\mathcal R)$, and the pattern of $B^\mathrm{root}_{\mathfrak{T}_b^*}$ is the root of $\mathfrak{T}$, is by construction greater than the initial pattern of the original fact.
 \item Let us assume that the property is true for all derivations of length $n \geq 0$, and let us show that it also holds for any derivation of length $n+1$. Let $S$ be a derivation of length $n+1$, and let $S_n$ be its restriction to the $n$ first rule applications. Let $B_{n+1}$ be the bag newly created in $\fun{DT}(S)$, and $B_n$ its parent. By induction hypothesis,  there exist $g$, $\mathfrak T_n$ and $f_n$ fulfilling the conditions from 1 to 3 for $S_n$. Let us consider $f_n(B_n)$. By condition $2$, we know that $P(f_n(g_n(B_n)))$ is greater than $P(B_n)$. By Lemma \ref{lemma-monotonic-saturation}, $f_n(g_n(B_n))$ has a child $B^*_{n+1}$ whose pattern includes that of $B_{n+1}$ and has induced link $\lambda$ with $f_n(g_n(B_n))$.  By definition of a tree generated from a blocked tree, it holds that $\mathfrak T_{n+1}$ can be generated from $\mathfrak T_b^*(F,\mathcal R)$ via $f_{n+1}$, where:
 \begin{itemize}
 \item $\mathfrak T_{n+1}$ is obtained from $\mathfrak T_n$ by copying $B^*_{n+1}$ under $g(B_n)$; we additionally define this bag as being $g(B_{n+1})$;
 \item $f_{n+1}$ is obtained by extending $f$ with $f_{n+1}(g(B_{n+1})) = B^*_{n+1}$.
 \end{itemize}
By induction hypothesis, it  holds that $g(B^\mathrm{root}) = B^\mathrm{root}_\mathfrak{T}$; Condition 1 is thus fulfilled. By construction of $g(B_{n+1})$, Condition 3 also. It remains to check Condition 2. This is not trivial, since the patterns of a bag in the fact associated with $S_n$ and with $S$ may be non-equivalent (\emph{i.e.}, the pattern may have ``grown''). Let us assume that there exists a bag $B^*$ such that $P(B^*) \not \sqsubseteq P_{\mathfrak T_b^*}(f(g(B^*)))$. Let us moreover assume that $B^*$ is (one of) the closest such bag to $B_{n+1}$. Let us first notice that it cannot be $B_{n+1}$. Indeed, $P(B_{n+1})$ is obtained by performing a join operation between its initial pattern and $P_{S_n}(B_n)$. By induction hypothesis, $P(B_n) \sqsubseteq P_{\mathfrak T_b^*}(f(g(B_n)))$. Thus, by Properties \ref{prop-creation-rule} and \ref{prop-abstract-lower-join}, the pattern saturation contains a rule allowing to create a child of $g(B_n)$ whose pattern includes $P_S(B_{n+1})$ and having induced link $\lambda$.  Thus $B^*$ is not $B_{n+1}$, and by Property \ref{prop-pattern-update}, $P(B^*)$ is obtained by performing a join between $P_{S_n}(B^*)$ and $P(B^*_k)$, where $B^*_k$ is the unique bag on the path from $B_{n+1}$ to $B^*$ that is either a child or a parent of $B^*$. Let us consider the case where $B^*_k$ is a parent of $B^*$ (the other case is similar). By induction hypothesis, $P_{S_n}(B^*) \sqsubseteq P_{\mathfrak T_b^*}(f(g(B^*)))$. By choice of $B^*$, $P(B^*_k) \sqsubseteq P_{\mathfrak T_b^*}(f(g(B^*_k)))$. By construction of $\mathfrak T_b^*(F,\mathcal R)$ and of its generated tree, there is a rule $\mathbb{P}_1 \rightsquigarrow \lambda.\mathbb{P}_2$ in the pattern saturation with $\mathbb{P}_1\sim P_{\mathfrak T_b^*}(f(g(B^*_k)))$ and $\mathbb{P}_2\sim P_{\mathfrak T_b^*}(f(g(B^*)))$. Moreover, since this rule is a most informative rule (by construction of $\mathfrak T_b^*(F,\mathcal R)$), $\fun{Join$_\mathrm{l}$}(\mathbb{P}_1,\lambda,\mathbb{P}_2) \sqsubseteq P(f(g(B^*)))$. However, by monotonicity of the join operation, this would imply that $P(B^*) \sqsubseteq P_{\mathfrak T_b^*}(f(g(B^*)))$, hence a contradiction.
\end{itemize}
\end{proof}



By preceding observations and Property \ref{prop-completeness-algo}, we are now able to state that Algorithm \ref{algo-generate-full-blocked-tree} is correct, as expressed by the next theorem. 

\begin{theorem}
Algorithm \ref{algo-generate-full-blocked-tree} outputs a full blocked tree.
\end{theorem}

Before turning to the more involved querying operation, let us stress that this first algorithm already provides a tight upper-bound for the combined complexity of query answering under \RgRbRtRsR rules. Indeed, the problem is already known to be \ExpExpTime{}-hard, since guarded rules - whose \ExpExpTime{} combined complexity was already shown \citep{cali-gottlob-kifer:08}, are a particular case of \RgRbRtRsR rules.


\begin{theorem} \textsc{BCQ-Entailment} for \RgRbRtRsR is in \ExpExpTime{} for combined complexity and in \ExpTime{} for data complexity.
\label{thm-easy-upper-bound}
\end{theorem}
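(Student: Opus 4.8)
The plan is to show that the machinery developed above --- abstract pattern saturation together with the full blocked tree of Algorithm~\ref{algo-generate-full-blocked-tree} --- can be computed and queried within the stated bounds, the point being that the only quantity growing with the data is $|T_0|=|\vars{F}|+|\mathcal C|$, and that it enters every estimate only polynomially. First I would reduce \textsc{BCQ-Entailment} to a pattern-level test. Add $Q$ to $\mathcal R$ as a \emph{goal rule} $R_Q = Q \rightarrow \mathit{goal}()$ with a fresh nullary predicate. This keeps the rule set \RgRbRtRsR: $R_Q$ has empty frontier, so each of its applications is trivially greedy, and it introduces no new variable (so it leaves $T_0$ and all bag terms untouched and cannot destroy greediness of later $\mathcal R$-applications). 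By the applicability characterisation of sound and complete patterned derivation trees (recalled just before the elementary join), $R_Q$ is applicable in a sufficiently developed derivation tree iff some bag pattern contains $(\body{R_Q},-)=(Q,-)$; combined with the soundness/completeness of the full blocked tree (Definition~\ref{def-full-blocked-tree}) and Property~\ref{prop-completeness-algo}, and with the Soundness and Completeness Theorem for $\mathcal R$-derivations, this yields $\mathcal K\models Q$ iff some abstract pattern reachable in the saturation $\mathcal I_{F,\mathcal R\cup\{R_Q\}}$ contains the complete query body. It then suffices to bound the cost of computing that saturation and scanning it.

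The heart of the argument is a counting of abstract objects. Let $b,w,h,f$ be the maximal number of atoms in a rule body, variables in a rule body, terms in a rule head, and frontier variables, respectively. An abstract bag $\mathbb B(R,\sigma)$ is fixed by a rule and a fusion of its frontier, so their number is at most $|\mathcal R|\,(f+|T_0|)^{f}$ and each has at most $h+|T_0|$ terms. An element $(G,\pi)$ of an abstract pattern is a subset $G$ of some rule body together with a partial map from $\vars{G}$ into the terms of the support, so the number $N$ of possible elements is at most $|\mathcal R|\,2^{b}\,(h+|T_0|+1)^{w}$; hence the number of abstract patterns is at most $|\mathcal R|(f+|T_0|)^{f}\cdot 2^{N}$, the number of links at most $(h+|T_0|)^{f}$, and the number of creation and evolution rules at most the number of (pattern, link, pattern) triples. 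In all of these, $|T_0|$ occurs only polynomially. For data complexity $\mathcal R$ and $Q$ are fixed, so $b,w,h,f,|\mathcal R|$ are constants and $N$ is polynomial in $|T_0|$ (hence in the data), making the numbers of patterns, links and saturation rules single-exponential in the data; for combined complexity $N$ is single-exponential in the input, making these numbers double-exponential.

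I would finish by bounding the running time. Abstract pattern saturation is a monotone fixpoint over the finite set of creation/evolution rules (termination was established above), so it halts after a number of deduction steps bounded by the number of such rules; each step applies the calculus of Figure~\ref{fig-deductioncalculus}, whose abstract joins run in time polynomial in the sizes of the patterns involved, i.e.\ polynomial in $N$. The total time is therefore polynomial in the number of saturation rules and in $N$, which remains single-exponential in the data and double-exponential in the combined input; extracting $\mathcal I_{F,\mathcal R\cup\{R_Q\}}$ and scanning its patterns for the full query body stays within the same bounds (and the full blocked tree produced by Algorithm~\ref{algo-generate-full-blocked-tree}, whose size is bounded by the number of patterns times the out-degree, is dominated by these costs). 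This gives the \ExpTime{} data and \ExpExpTime{} combined upper bounds. The main obstacle is precisely the bookkeeping that confines $|T_0|$ to a polynomial position in every estimate --- in particular bounding the per-pattern size $N$ and checking that a single deduction step, dominated by the abstract joins, costs only $\mathrm{poly}(N)$, so that the \emph{number} of saturation rules, and nothing else, accounts for the exponential blow-ups.
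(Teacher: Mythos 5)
Your proposal is correct and follows essentially the same route as the paper's own proof: reduce the query to a rule with a fresh nullary head (the paper's $Q\to\mathit{match}$, your $R_Q$), test entailment by checking whether some abstract pattern of the full blocked tree contains the whole query body, and bound the number of abstract patterns (doubly exponential in combined complexity, singly exponential once $\mathcal{R}$ and $Q$ are fixed, since $|T_0|$ enters the pattern-element count only polynomially). Your write-up is in fact more explicit than the paper's on two points it treats tersely --- the argument that adding the frontier-empty goal rule preserves greediness, and the itemized counting of bags, pattern elements, links and saturation rules --- but these are elaborations of the same argument, not a different one.
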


\begin{proof}
Let us recall that $F,\mathcal{R}\models Q$ holds exactly if $F,\mathcal{R}\cup\{Q\to \mathit{match}\} \models \mathit{match}$, where $\mathit{match}$ is a fresh, nullary predicate. Note that $\mathcal{R}\cup\{Q\to \mathit{match}\}$ is still \RgRbRtRsR since $Q\to \mathit{match}$ is \RfRgR. $F,\mathcal{R}\cup\{Q\to \mathit{match}\} \models \mathit{match}$ can be easily checked given $\mathfrak{T}^*_b(F,\mathcal{R}\cup\{Q\to \mathit{match}\})$ by checking if any of the abstract patterns associated to any of the bags contain some pattern $(Q,\pi)$ for some $\pi$.
The computation of the full blocked tree is polynomial in the size of the computed creation/evolution rule set. The number of such rules is polynomial in the number of patterns and in the maximum degree of a derivation tree. The number of patterns is doubly exponential in the data in the worst-case, while the degree is at most exponential. When the rule set (including the query) is fixed, the data complexity falls to \ExpTime{}. Lower bounds for data complexity come from already known complexity results of weakly-guarded rules  \citep{cali-gottlob-kifer:08}, for instance.
\end{proof}

The algorithm we proposed is thus worst-case optimal both for combined and data complexities.


\subsection{Querying the Full Blocked Tree}

We considered in previous sections the query to be implemented via a rule. This trick allowed to have a conceptually easy querying operation, because it was sufficient to check if some bag of the full blocked tree was labeled by the query and an arbitrary mapping. However, this comes with two drawbacks. The first one is that the query is needed at the time of the construction of the full blocked tree. In scenarios where different queries are evaluated against the same data, one would like to process data and rules independently from the query, and then to evaluate the query on the pre-processed structure. This is not possible if we consider the query to be expressed via a rule. The second drawback of taking the query into account while building the full blocked tree is that it may prevent us from adapting this construction when assumptions are made on the set of rules: can we devise a better algorithm if we have additional knowledge concerning the rule set, for instance, if we know that it is guarded, and not only \RgRbRtRsR?

\index{atom-term partition tree} This section is devoted to these issues. In the construction of the full blocked tree, we do not consider the query anymore. We still obtain a finite representation of arbitrarily deep patterned derivation trees for $F$ and $\mathcal R$, but we cannot just check if a bag is labeled by the query -- since the query does not necessarily appear in the considered patterns anymore. A simple homomorphism check is not sufficient either, as can be seen in Example \ref{ex-homomorhism-check-not-sufficient} below. To overcome this problem, we introduce a structure called \emph{atom-term partition tree} (APT). Such a structure is meant to encode a decomposition of the query induced by a homomorphism from that query to a derivation tree. A possible algorithm to check the existence of a homomorphism from a query $Q$ to a derivation tree would be to check if one of the APTs of $Q$ is the structure induced by some homomorphism $\pi$, \emph{i.e.} to \emph{validate} this APT. APTs and their validation in a derivation tree will be formalized in Section~\ref{sect-APT-DT}. We are well aware that this definition is more involved than the simple definition of homomorphism. However, our goal will be to validate APTs, not in the potentially ever-growing derivation trees, but in the \emph{finite} full blocked tree. In that case, APTs will still be used, but we will have to adjust their validation process (Section~\ref{sect-APT-BT}).

Let us first stress why the usual homomorphism check is not a suitable operation for querying a full blocked tree. To simplify the presentation, we will restrict the running example in the following way: we only consider rules $R^{ex'}_1 = p_1(x_p) \wedge i(x_p) \rightarrow r(x_p,y_p) \wedge p_2(y_p) \wedge i(y_p)$ and $R^{ex'}_2 = p_2(x_q) \wedge i(x_q) \rightarrow s(x_q,y_q) \wedge p_1(y_q) \wedge i(y_q)$ (this set will be denoted by $\mathcal R^{ex'}$), and the initial fact is restricted to $i(c) \wedge p_1(c) \wedge p_2(c)$ (denoted by $F^{ex'}$).

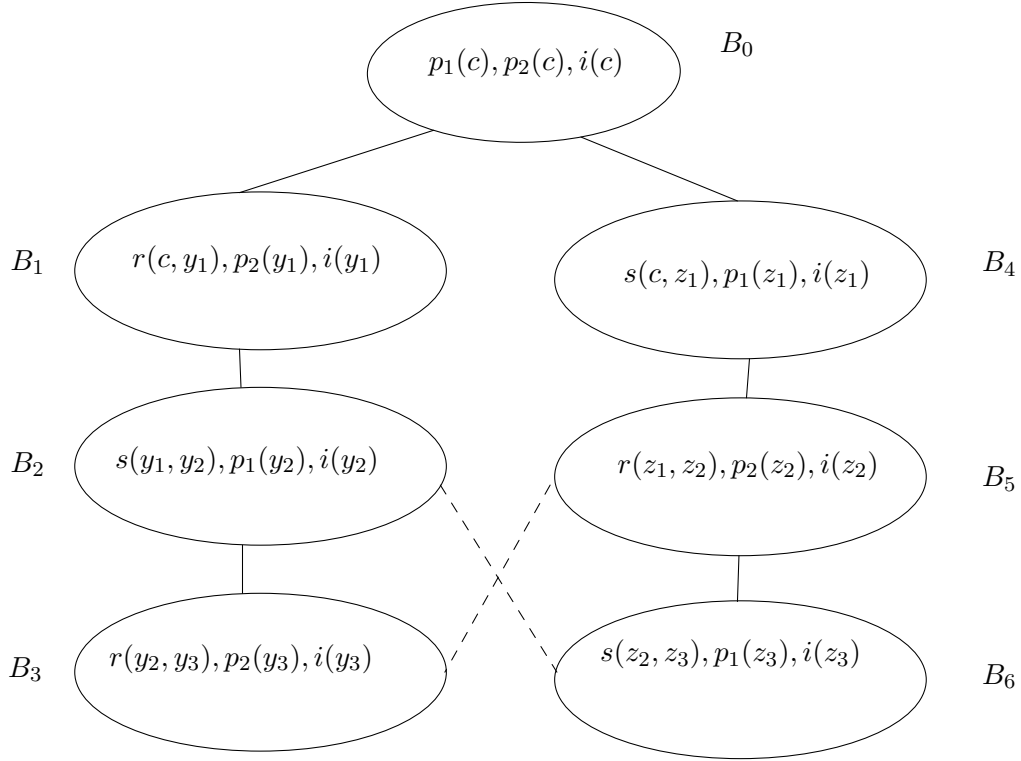
\begin{figure}
\begin{center}
\begin{tikzpicture}[line cap=round,line join=round,>=triangle 45,x=1.0cm,y=1.0cm]
\clip(-3.7,-5.82) rectangle (10.36,5.43);
\draw (2.26,4.34) node[anchor=north west] {$p_1(c), p_2(c), i(c)$}; 
\draw (-1.68,1.71) node[anchor=north west] {$r(c,y_1),p_2(y_1),i(y_1)$}; 
\draw (4.85,1.51) node[anchor=north west] {$s(c,z_1),p_1(z_1),i(z_1)$}; 
\draw (-1.91,-0.96) node[anchor=north west] {$s(y_1,y_2),p_1(y_2),i(y_2)$}; 
\draw (4.77,-1.02) node[anchor=north west] {$r(z_1,z_2),p_2(z_2),i(z_2)$}; 
\draw (-1.99,-3.59) node[anchor=north west] {$r(y_2,y_3),p_2(y_3),i(y_3)$}; 
\draw (4.54,-3.53) node[anchor=north west] {$s(z_2,z_3),p_1(z_3),i(z_3)$}; 
\draw [rotate around={0.62:(3.66,3.88)}] (3.66,3.88) ellipse (2.08cm and 0.93cm);
\draw [rotate around={0:(0.16,1.24)}] (0.16,1.24) ellipse (2.47cm and 1.05cm);
\draw [rotate around={0:(6.54,1.12)}] (6.54,1.12) ellipse (2.47cm and 1.05cm); 
\draw [rotate around={0:(0.04,-1.36)}] (0.16,-1.36) ellipse (2.47cm and 1.05cm); 
\draw [rotate around={0:(6.48,-1.5)}] (6.54,-1.5) ellipse (2.47cm and 1.05cm); 
\draw [rotate around={0:(-0.08,-4.1)}] (0.16,-4.1) ellipse (2.47cm and 1.05cm); 
\draw [rotate around={0:(6.34,-4.2)}] (6.54,-4.2) ellipse (2.47cm and 1.05cm); 
\draw (2.46,3.11)-- (-0.12,2.28); \draw (4.42,3.02)-- (6.5,2.17); 
\draw (-0.12,0.2)-- (-0.1,-0.31); \draw (6.66,0.07)-- (6.62,-0.45); 
\draw (6.52,-2.55)-- (6.5,-3.16); \draw (-0.08,-2.41)-- (-0.08,-3.05); 
\draw (6.13,4.55) node[anchor=north west] {$B_0$}; 
\draw (-3.3,1.65) node[anchor=north west] {$B_1$}; 
\draw (-3.3,-1.04) node[anchor=north west] {$B_2$}; 
\draw (-3.33,-3.76) node[anchor=north west] {$B_3$}; 
\draw (9.62,1.63) node[anchor=north west] {$B_4$}; 
\draw (9.62,-1.23) node[anchor=north west] {$B_5$}; 
\draw (9.62,-3.86) node[anchor=north west] {$B_6$}; 
\draw [dash pattern=on 4pt off 4pt] (2.55,-1.61)-- (4.1,-4.1); 
\draw [dash pattern=on 4pt off 4pt] (4.01,-1.61)-- (2.61,-4.1);
\end{tikzpicture}
\end{center}
\caption{The full blocked tree associated with $F^{ex'}$ and $\mathcal R^{ex'}$. $B_2$ and $B_6$ are equivalent, as well as $B_3$ and $B_5$.} \label{fig-incomplete-querying}
\end{figure}

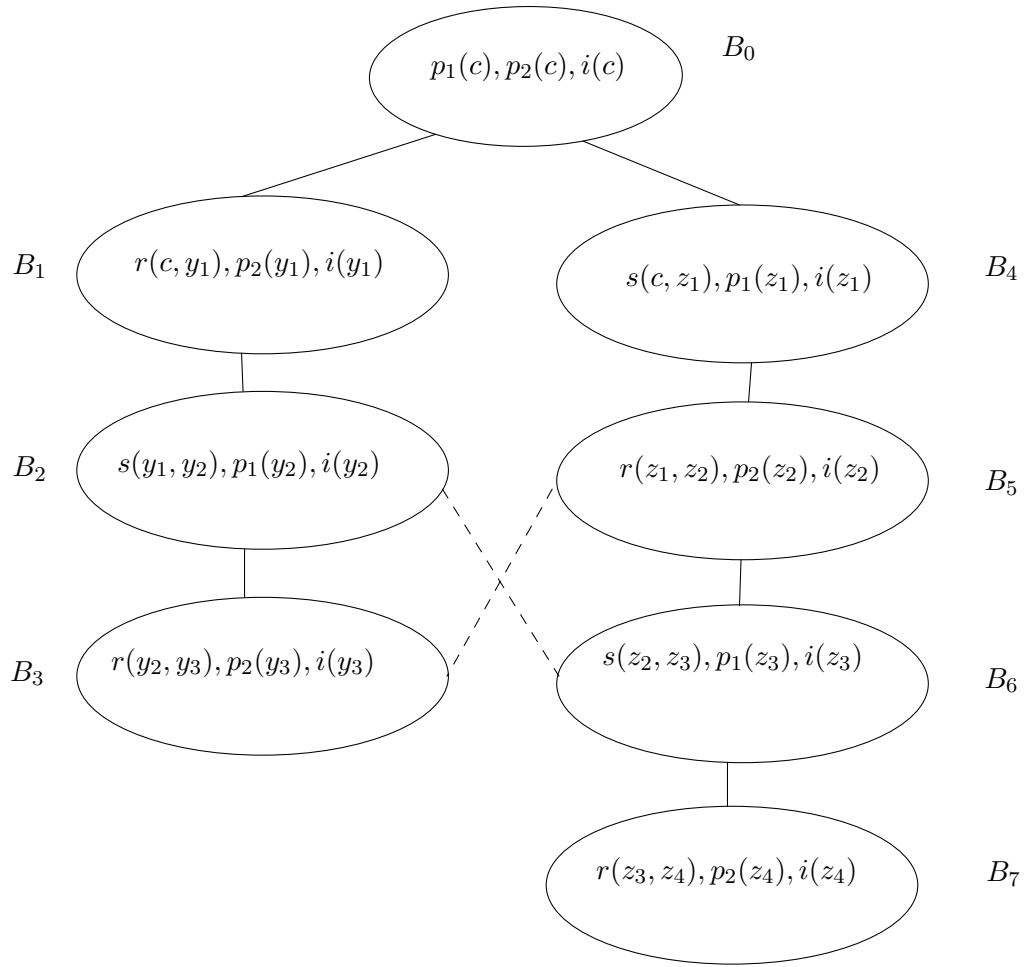
\begin{figure}
\begin{center}
\begin{tikzpicture}[line cap=round,line join=round,>=triangle 45,x=1.0cm,y=1.0cm]
\clip(-4.57,-8.35) rectangle (10.91,5.24);
\draw (2.26,4.34) node[anchor=north west] {$p_1(c), p_2(c), i(c)$}; 
\draw (-1.68,1.71) node[anchor=north west] {$r(c,y_1),p_2(y_1),i(y_1)$}; 
\draw (4.85,1.51) node[anchor=north west] {$s(c,z_1),p_1(z_1),i(z_1)$}; 
\draw (-1.91,-0.96) node[anchor=north west] {$s(y_1,y_2),p_1(y_2),i(y_2)$}; 
\draw (4.77,-1.02) node[anchor=north west] {$r(z_1,z_2),p_2(z_2),i(z_2)$}; 
\draw (-1.99,-3.59) node[anchor=north west] {$r(y_2,y_3),p_2(y_3),i(y_3)$}; 
\draw (4.54,-3.53) node[anchor=north west] {$s(z_2,z_3),p_1(z_3),i(z_3)$}; 
\draw [rotate around={0.62:(3.66,3.88)}] (3.66,3.88) ellipse (2.08cm and 0.93cm);
\draw [rotate around={0:(0.16,1.24)}] (0.16,1.24) ellipse (2.47cm and 1.05cm);
\draw [rotate around={0:(6.54,1.12)}] (6.54,1.12) ellipse (2.47cm and 1.05cm); 
\draw [rotate around={0:(0.04,-1.36)}] (0.16,-1.36) ellipse (2.47cm and 1.05cm); 
\draw [rotate around={0:(6.48,-1.5)}] (6.54,-1.5) ellipse (2.47cm and 1.05cm); 
\draw [rotate around={0:(-0.08,-4.1)}] (0.16,-4.1) ellipse (2.47cm and 1.05cm); 
\draw [rotate around={0:(6.34,-4.2)}] (6.54,-4.2) ellipse (2.47cm and 1.05cm); 
\draw (2.46,3.11)-- (-0.12,2.28); \draw (4.42,3.02)-- (6.5,2.17); 
\draw (-0.12,0.2)-- (-0.1,-0.31); \draw (6.66,0.07)-- (6.62,-0.45); 
\draw (6.52,-2.55)-- (6.5,-3.16); \draw (-0.08,-2.41)-- (-0.08,-3.05); 
\draw (6.13,4.55) node[anchor=north west] {$B_0$}; 
\draw (-3.3,1.65) node[anchor=north west] {$B_1$}; 
\draw (-3.3,-1.04) node[anchor=north west] {$B_2$}; 
\draw (-3.33,-3.76) node[anchor=north west] {$B_3$}; 
\draw (9.62,1.63) node[anchor=north west] {$B_4$}; 
\draw (9.62,-1.23) node[anchor=north west] {$B_5$}; 
\draw (9.62,-3.86) node[anchor=north west] {$B_6$}; 
\draw [dash pattern=on 4pt off 4pt] (2.55,-1.61)-- (4.1,-4.1); 
\draw [dash pattern=on 4pt off 4pt] (4.01,-1.61)-- (2.61,-4.1);
\draw (4.46,-6.37) node[anchor=north west] {$r(z_3,z_4),p_2(z_4),i(z_4)$}; 
\draw [rotate around={0:(6.4,-6.88)}] (6.4,-6.88) ellipse (2.47cm and 1.05cm); 
\draw (9.65,-6.43) node[anchor=north west] {$B_7$}; 
\draw (6.34,-5.25)-- (6.34,-5.83); 
\end{tikzpicture}
\end{center}
\caption{A tree generated by the full blocked tree of Figure \ref{fig-incomplete-querying}. $B_7$ is a copy of $B_3$ under $B_6$.} \label{fig-generated-tree}
\end{figure}

\begin{example}
 Let us consider the following query $Q_i$:

$$Q_i = p_i(x) \wedge s(x,y) \wedge r(y,z) \wedge s(z,t) \wedge r(t,u) \wedge r(x,v).$$

If we only look for a homomorphism with atoms belonging to the full blocked tree associated with $\mathcal R^{ex'}$ and $F^{ex'}$ and displayed in Figure \ref{fig-incomplete-querying}, we do not find any answer to this query. However, $B_2$ is equivalent to $B_6$, and by considering a derivation tree where $B_3$ would have a corresponding bag below $B_6$ (as $B_7$ in Figure \ref{fig-generated-tree}), one would find a (correct) mapping of $Q_i$. \label{ex-homomorhism-check-not-sufficient}
\end{example}

\subsubsection{Validation of an APT in a Derivation Tree}\label{sect-APT-DT}

Let $\pi$ be a homomorphism from $Q$ to the atoms of some derivation tree $\mathcal{T} = \mathit{DT}(S)$. From $\pi$, let us build an arbitrary mapping $\pi^\mathrm{a}_\mathcal{T}$ (out of the many possible ones), defined as follows: for every atom $a = p(t_1, \ldots, t_k)$ of $Q$, let us choose a bag $B$ of $\mathcal T$ with $\pi(a) = p(\pi(t_1), \ldots, \pi(t_k)) \in B$, and define $\pi^\mathrm{a}_\mathcal{T}(a) = (B, \pi(a))$. Then $\pi^\mathrm{a}_\mathcal{T}$ gives rise to a partitioning of the atoms of $Q$ into \emph{atom bags} $\mathit{Bags}^\mathrm{a}(Q)=\{Q^\mathrm{a}_1,\ldots,Q^\mathrm{a}_n\}$, where two atoms $a$ and $b$ of $Q$ are in the same atom bag $Q^\mathrm{a}_i$ if and only if there exists a bag $B$ of $T$ with $\pi^\mathrm{a}_\mathcal{T}(a) = (B,\pi(a))$ and $\pi^\mathrm{a}_\mathcal{T}(b) = (B,\pi(b))$.

On another note, it will turn out to be important, given a term $t$ of $Q$, to keep track of the bag of $T$ in which the term $\pi(t)$ appeared first. We note $\pi^\mathrm{t}_\mathcal{T}(t) = (B, \pi(t))$ when the term $\pi(t)$ appears first in the bag $B$ of $T$. Similar to above, $\pi^\mathrm{t}_\mathcal{T}$ gives rise to a partitioning of the terms of $Q$ into \emph{term bags} $\mathit{Bags}^\mathrm{t}(Q)=\{Q^\mathrm{t}_1,\ldots,Q^\mathrm{t}_m\}$, where two terms $u$ and $v$ of $Q$ are in the same term bag $Q^\mathrm{t}_i$ if and only if there exists a bag $B$ with $\pi^\mathrm{t}_\mathcal{T}(u) = (B, \pi(u))$ and $\pi^\mathrm{t}_\mathcal{T}(v) = (B, \pi(v))$.

From $\pi^\mathrm{a}_\mathcal{T}$ and $\pi^\mathrm{t}_\mathcal{T}$, we then obtain the function $\pi_\mathcal{T}$ mapping elements of $\mathit{Bags}^\mathrm{a}(Q)\cup \mathit{Bags}^\mathrm{t}(Q)$ to bags of $\mathcal{T}$ such that
$$
\pi_\mathcal{T}=\left\{
\begin{array}{ll}
Q^\mathrm{a} \mapsto B & \mbox{ where } \pi^\mathrm{a}_\mathcal{T}(a)=(B,\pi(a)) \mbox{ for any } a\in Q^\mathrm{a}\\
Q^\mathrm{t} \mapsto B' & \mbox{ where } \pi^\mathrm{t}_\mathcal{T}(a)=(B',\pi(t)) \mbox{ for any } t\in Q^\mathrm{t}\\
\end{array}\right.
$$

We can now define the \emph{atom-term} bags of $Q$ (induced by $\pi^\mathrm{a}_\mathcal{T}$) denoted by $\mathit{Bags}^\mathrm{at}(Q)$. If an atom bag $Q^\mathrm{a}$ and a term bag $Q^\mathrm{t}$ have the same image under $\pi_\mathcal{T}$, we obtain an atom-term bag $Q^\mathrm{at} = Q^\mathrm{a} \cup Q^\mathrm{t}$. If an atom bag $Q^\mathrm{a}$ (or a term bag $Q^\mathrm{t}$) has an image different from the image of any other term bag (or atom bag, respectively) of $Q$, then it is an atom-term bag by itself.

Finally, we provide these atom-term bags with a tree structure induced by the tree structure of $\mathcal{T}$. Let $Q^\mathrm{at}_1$ and $Q^\mathrm{at}_2$ be two atom-term bags of $Q$. Then $Q^\mathrm{at}_2$ is a child of $Q^\mathrm{at}_1$ iff {\sl (i)} $\pi_\mathcal{T}(Q^\mathrm{at}_2)$ is a descendant of of $\pi_\mathcal{T}(Q^\mathrm{at}_1)$ and {\sl (ii)} there is no atom-term bag $Q^\mathrm{at}_3$ of $Q$ such that $\pi_\mathcal{T}(Q^\mathrm{at}_3)$ is a descendant of of $\pi_\mathcal{T}(Q^\mathrm{at}_1)$ and $\pi_\mathcal{T}(Q^\mathrm{at}_2)$ is a descendant of of $\pi_\mathcal{T}(Q^\mathrm{at}_3)$. Note that since we only consider connected queries, the structure so created is indeed a tree (it could be a forest with disconnected queries). In what follows, we define an atom-term tree decomposition of a query by such a tree of atom-term bags, independently from $\mathcal{T}$ and $\pi$.

\begin{definition}[APT of a Query]\label{def-APT} Let $Q$ be a query. An atom-term partition of $Q$ is a partition of $\fun{atoms}(Q) \cup \fun{terms}(Q)$ (these sets being called \emph{atom-term bags}). An \emph{atom-term partition tree} (APT) of $Q$ is a tree whose nodes form an atom-term partition of $Q$.
\end{definition}

Figure \ref{fig-query-APT} represents an APT of the example query $Q_i$.

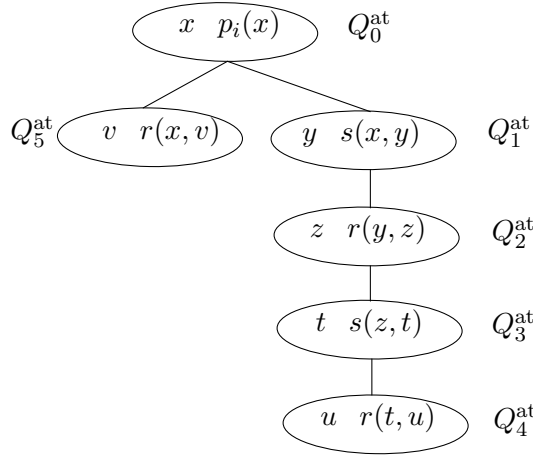
\begin{figure}
\begin{center}
\begin{tikzpicture}[line cap=round,line join=round,>=triangle 45,x=1.0cm,y=1.0cm]
\clip(-3.36,-1.02) rectangle (4.24,6.22); \draw (-0.6,6.04) node[anchor=north west] {$x \hspace{0.3cm} p_i(x)$}; \draw (1.04,4.6) node[anchor=north west] {$y \hspace{0.3cm} s(x,y)$}; \draw (1.14,3.32) node[anchor=north west] {$z \hspace{0.3cm} r(y,z)$}; \draw (1.22,2.06) node[anchor=north west] {$t \hspace{0.3cm} s(z,t)$}; \draw (1.28,0.82) node[anchor=north west] {$u \hspace{0.3cm} r(t,u)$}; \draw (-1.62,4.62) node[anchor=north west] {$v \hspace{0.3cm} r(x,v)$}; \draw [rotate around={-0.98:(0.15,5.62)}] (0.15,5.62) ellipse (1.23cm and 0.39cm); \draw [rotate around={-0.98:(-0.85,4.22)}] (-0.85,4.22) ellipse (1.23cm and 0.39cm); \draw [rotate around={-0.98:(1.99,4.18)}] (1.99,4.18) ellipse (1.23cm and 0.39cm); \draw [rotate around={-0.98:(2.03,2.9)}] (2.03,2.9) ellipse (1.23cm and 0.39cm); \draw [rotate around={-0.98:(2.07,1.66)}] (2.07,1.66) ellipse (1.23cm and 0.39cm); \draw [rotate around={-0.98:(2.15,0.4)}] (2.15,0.4) ellipse (1.23cm and 0.39cm); \draw (0.18,5.23)-- (-0.94,4.61); \draw (0.18,5.23)-- (2.08,4.56); \draw (2.08,3.79)-- (2.08,3.28); \draw (2.08,2.51)-- (2.08,2.04); \draw (2.1,1.27)-- (2.1,0.79); \draw (1.64,6.04) node[anchor=north west] {$Q^\mathrm{at}_0$}; \draw (3.5,4.64) node[anchor=north west] {$Q^\mathrm{at}_1$}; \draw (3.56,3.3) node[anchor=north west] {$Q^\mathrm{at}_2$}; \draw (3.56,2.06) node[anchor=north west] {$Q^\mathrm{at}_3$}; \draw (3.58,0.8) node[anchor=north west] {$Q^\mathrm{at}_4$}; \draw (-2.84,4.64) node[anchor=north west] {$Q^\mathrm{at}_5$};
\end{tikzpicture}
\end{center}
\caption{An atom-term partition of $Q_i = p_i(x) \wedge s(x,y) \wedge r(y,z) \wedge s(z,t) \wedge r(t,u) \wedge r(x,v)$} \label{fig-query-APT}
\end{figure}

\begin{definition}[(Valid) APT-Mapping]\label{def-APT-mapping}
\index{APT-mapping} Let $\mathcal{Q}$ be an APT of $Q$. Let $\mathcal{T}$ be a derivation tree. An \emph{APT-mapping} of $\mathcal{Q}$ to $\mathcal{T}$ is a tuple $\Gamma=(\Pi, \pi_1, \ldots, \pi_k)$ where $\Pi$ is an injective mapping from the atom-term bags of $\mathcal{Q}$ to the bags of $\mathcal{T}$ and, for each atom-term bag $Q^\mathrm{at}_i$ of $\mathcal{Q}$, $\pi_i$ is a substitution from the terms of $Q^\mathrm{at}_i$ (by this, we mean the terms of $Q^\mathrm{at}_i$, not the terms appearing in the atoms of $Q^\mathrm{at}_i$) to the terms that were created in the atoms of $\Pi(Q^\mathrm{at}_i)$.

Remark that if $u$ is a term of $Q$, then $u$ appears in only one atom-term bag $Q^\mathrm{at}_i$ of $\mathcal{Q}$. We can thus define $\pi_\Gamma = \bigcup_{1\leq i\leq k}\pi_i$.

Finally, we say that $(\Pi, \pi_1, \ldots, \pi_k)$ is \emph{valid} when $\pi_\Gamma$ is a homomorphism from $Q$ to the atoms of $\mathcal{T}$.

\end{definition}

\begin{example}[APT-Mapping]
We now present a valid APT-Mapping of the APT pictured Figure \ref{fig-query-APT} to the derivation tree represented in Figure \ref{fig-generated-tree}. We let $\Pi=\{Q^\mathrm{at}_0{\mapsto}B_0,Q^\mathrm{at}_1{\mapsto}B_4,Q^\mathrm{at}_2{\mapsto}B_5,Q^\mathrm{at}_3{\mapsto}B_6,
Q^\mathrm{at}_4{\mapsto}B_7,Q^\mathrm{at}_5{\mapsto}B_1\}$.
The corresponding mappings are:
$\pi_0=\{x{\mapsto}c\}$,
$\pi_1=\{y{\mapsto}z_1\}$,
$\pi_2=\{z{\mapsto}z_2\}$,
$\pi_3=\{t{\mapsto}z_3\}$,
$\pi_4=\{u{\mapsto}z_4\}$, and
$\pi_5=\{v{\mapsto}y_1\}$.
Then, $(\Pi,\pi_1,\pi_2,\pi_3,\pi_4,\pi_5)$ is a valid APT-mapping of the APT from Figure \ref{fig-query-APT} to the derivation tree from Figure \ref{fig-generated-tree}.
\end{example}

\begin{property}[Soundness and Completeness]\label{prop-valid-adt} Let $F$ be a fact, $\mathcal R$ be a set of \RgRbRtRsR rules, and $Q$ be a query. Then $F, \mathcal R \models Q$ if and only if there exists a derivation sequence $S$ from $F$ to $F_k$, an APT $\mathcal{Q}$ of $Q$, and a valid APT-mapping from $\mathcal{Q}$ to $\mathit{DT}(S)$.
\end{property}

\begin{proof} We successively prove both directions of the equivalence.
\begin{itemize}
    \item[$(\Leftarrow)$] Let us suppose that there exists a valid APT-mapping from $\mathcal{Q}$ to $\mathit{DT}(S)$. From Definition~\ref{def-APT-mapping}, it follows that there is a homomorphism $\pi$ from $Q$ to the atoms of $\mathit{DT}(S)$, \emph{i.e.} a homomorphism $\pi$ from $Q$ to $F_k$.

    \item[$(\Rightarrow)$] If $F, \mathcal R \models Q$, then there is a homomorphism $\pi$ from $Q$ to some $F_k$ obtained by means of a derivation $S$ from $F$. As in the construction given before Definition~\ref{def-APT}, we can choose some mapping $\pi^\mathrm{a}_\mathcal{T}$ of the atoms of $Q$, and build from this mapping an APT $\mathcal{Q}$ of $Q$. We can then build an APT-mapping $(\Pi, \pi_1, \ldots, \pi_k)$ as follows: $\Pi=\pi_\mathcal{T}$, and for each $Q^\mathrm{at}_i$ of $\mathcal{Q}$, $\pi_i$ is the restriction of $\pi$ to the terms of $Q^\mathrm{at}_i$. This APT-mapping is valid.

\end{itemize}
\end{proof}

This rather long and unnecessarily complicated way to prove the existence of a homomorphism will now be put to good use when querying the full blocked tree, without resorting to its potentially infinite development.

\subsubsection{Validation of an APT in a blocked tree}\label{sect-APT-BT}

Hence, let us now consider a blocked tree $\mathfrak{T}_b$ and some tree $(\mathfrak{T},f)\in G(\mathfrak{T}_b)$ generated by it.
Let us assume that we have an APT $\mathcal{Q}$ of a query $Q$ that corresponds to a mapping to $(\mathfrak{T},f)$. Thus, each bag $Q^\mathrm{at}_i$ of the APT is mapped to a bag $B$ of $\mathfrak{T}$. Intuitively, we represent this mapping on the full blocked tree by mapping $Q^\mathrm{at}_i$ to $B_\mathrm{rep}$ such that $B_\mathrm{rep} = f(B)$ (\emph{i.e.}, $B$ has been generated by copying $B_\mathrm{rep}$). We can enumerate all such mappings: the question is then to validate such a mapping, that is, to check that it actually corresponds to a valid APT-mapping in a tree generated by the full blocked tree.

\begin{definition}[Valid APT Mapping to a Blocked Tree]\label{DEF-APT-blocked}\label{def-valid-APT-blocked}
\index{APT-mapping} Let $\mathcal{Q}$ be an APT of a query $Q$ and $\Gamma = (\Pi, \pi_1, \ldots, \pi_k)$ be an APT-mapping from $\mathcal{Q}$ to a blocked tree $\mathfrak{T}_b$ (where $\Pi$ maps atom-term bags of $\mathcal{Q}$ to bags of the blocked tree). Then $\Gamma$ is said to be \emph{valid} if there exists a tree $(\mathfrak{T},f) \in G(\mathfrak{T}_b)$ generated from $\mathfrak{T}_b$ and a mapping $\Xi$ from the atom-term bags of $\mathcal{Q}$ to the bags of $(\mathfrak{T}, f)$ (we then call $((\mathfrak{T}, f), \Xi)$ a \emph{proof of $\Gamma$}) such that:
\begin{itemize}
    \item if $Q^\mathrm{at}$ is the root of $\mathcal{Q}$, then $\Xi(Q^\mathrm{at}) = \Pi(Q^\mathrm{at})$;
    \item if ${Q^\mathrm{at}}'$ is a child of $Q^\mathrm{at}$ in $\mathcal{Q}$, then $f(\Xi({Q^\mathrm{at}}')) = \Pi({Q^\mathrm{at}}')$ and $\Xi({Q^\mathrm{at}}')$ is a descendant of $\Xi(Q^\mathrm{at})$;
    \item
    The ADT mapping $(\Xi, \pi'_1, \ldots, \pi'_k)$ is valid in $\mathfrak{T}$, where for every atom-term bag $Q^\mathrm{at}_j$ in $\mathcal{Q}$ with $\Xi(Q^\mathrm{at}_j) = B_i$, we define $\pi'_j = \psi_{f(B_i)\to B_i} \circ \pi_j$.
\end{itemize}
\end{definition}

\begin{example}[APT-Mapping to a Blocked Tree]
We now present a valid APT-Mapping of the APT represented Figure \ref{fig-query-APT} to the derivation tree represented in Figure \ref{fig-generated-tree}. We define $\Pi =
\{
Q^\mathrm{at}_0{\mapsto}B_0,
Q^\mathrm{at}_1{\mapsto}B_4,
Q^\mathrm{at}_2{\mapsto}B_5,
Q^\mathrm{at}_3{\mapsto}B_6,
Q^\mathrm{at}_4{\mapsto}B_3,
Q^\mathrm{at}_5{\mapsto}B_1\}$.
Here, the only difference with the previous APT-mapping is the image of $Q^\mathrm{at}_4$, which is not $B_7$ (which does not exist in the blocked tree), but $B_3$. This is reflected in the definition of the $\pi_i$:
$\pi_0=\{x{\mapsto}c\}$,
$\pi_1=\{y{\mapsto}z_1\}$,
$\pi_2=\{z{\mapsto}z_2\}$,
$\pi_3=\{t{\mapsto}z_3\}$,
$\pi_4=\{u{\mapsto}y_2\}$, and
$\pi_5=\{v{\mapsto}y_1\}$.
Then, $(\Pi,\pi_1,\pi_2,\pi_3,\pi_4,\pi_5)$ is a valid APT-mapping of the APT from Figure \ref{fig-query-APT} to the blocked tree from Figure \ref{fig-incomplete-querying}, as witnessed by the derivation tree of Figure \ref{fig-generated-tree}, where the bag $B_7$ has been generated by $B_3$.
\end{example}

\begin{property}[Soundness and Completeness]\label{Prop-APT-blocked} Let $F$ be a fact, $\mathcal R$ be a set of \RgRbRtRsR rules, and $Q$ be a query. Then $F, \mathcal R \models Q$ if and only if there exists an APT $\mathcal{Q}$ of $Q$, and a valid APT-mapping from $\mathcal{Q}$ to the full blocked tree of $F$ and $\mathcal R$.
\end{property}

\begin{proof} We successively prove both directions of the equivalence.
\begin{itemize}
    \item[$(\Leftarrow)$] Let us suppose that there exists a valid APT-mapping from $\mathcal{Q}$ to the full blocked tree $\mathfrak{T}_b$ of $F$ and $\mathcal R$. From Definition~\ref{def-valid-APT-blocked}, there exists a valid APT mapping from $\mathcal{Q}$ to a $(\mathfrak{T}, f)$ generated from $\mathfrak{T}_b$, \emph{i.e.}, by Definition \ref{def-full-blocked-tree} a valid APT-mapping from  $\mathcal{Q}$ to some derivation tree $\mathcal{T}$ having $\mathfrak{T}$ as a prefix. We can conclude thanks to Property~\ref{prop-valid-adt}.

    \item[$(\Rightarrow)$] If $F, \mathcal R \models Q$, then there is a homomorphism $\pi$ from $Q$ to some $F_k$ obtained by means of a
        derivation $S$ from $F$ with derivation tree $\mathcal{T}=DT(S)$. As in the construction given before Definition~\ref{def-APT}, we can chose some mapping $\pi^\mathrm{a}_\mathcal{T}$ of the atoms of $Q$, and build from this mapping an APT $\mathcal{Q}$ of $Q$. Now, in this particular $\pi$, the root of $\mathcal{Q}$ can be mapped to any bag $B$ of the derivation tree $\mathit{DT}(S)$. Since $B$ has an equivalent bag $B'$ in the full blocked tree $\mathfrak{T}_b$, there exists another homomorphism $\pi'$ from $Q$ to some $F'_k$ obtained by means of a derivation $S'$. Let us recompute an APT $\mathcal{Q}'$ (the same result might be obtained). Then (see proof of Property~\ref{prop-valid-adt}), there is a valid APT mapping $\Gamma = (\Pi, \pi_1, \ldots, \pi_k)$ from $\mathcal{Q}'$ to $DT(S')$, since $\mathit{DT}(S)$ is a prefix tree of some $\mathfrak{T}$ generated from $\mathfrak{T}_b$. $\Gamma$ is thus a valid APT mapping from $\mathcal{Q}$ to $\mathfrak{T}$.

        Now let us define the mapping $\Xi$ as follows: if $Q^\mathrm{at}$ is the root of $\mathcal{Q}$, then $\Xi(Q^\mathrm{at}) = \Pi(Q^\mathrm{at})$, otherwise $\Xi(Q^\mathrm{at}) = f(\Pi(Q^\mathrm{at}))$. For each term $t$ in the atom term bag $Q^\mathrm{at}_i$ of $\mathcal{Q}$ such that $\Xi(Q^\mathrm{at}_i) = B_j$, we define $\pi'_i(t) = \psi_{B_j\to f(B_j)} \circ \pi_i$. Let us consider the APT mapping $\Gamma' = (\Xi, \pi'_1, \ldots, \pi'_k)$ from $\mathcal{Q}$ to $\mathfrak{T}_b$. It is immediate to check that $\Gamma'$ is valid.
\end{itemize}
\end{proof}

\subsubsection{A bounded validation for APT-mappings}\label{sect-APT-BT-bounded}

Although Property~\ref{Prop-APT-blocked} brings us closer to our goal to obtain an algorithm for \RgRbRtRsR deduction, there is still the need to guess the generated tree $(\mathfrak{T}, f)$ used to validate an APT-mapping (Definition~\ref{DEF-APT-blocked}). We will now show that such a generated tree can be built in a backtrack-free manner by an exploration of the APT of the query. Then we establish an upper bound for each validation step (\emph{i.e.}, if we have validated an initial segment $\mathcal{Q}'$ of the APT $\mathcal{Q}$ of $Q$, and ${Q^\mathrm{at}}'$ is a child of some bag ${Q^\mathrm{at}}$ in $\mathcal{Q}'$, how do we validate $\mathcal \mathcal{Q}' \cup \{{Q^\mathrm{at}}'\}$?).

\begin{property}
\label{prop-apt-restriction} Let $\mathcal{Q}$ be an APT of $Q$, and $\Gamma$ be a valid APT-mapping from $\mathcal{Q}$ to a blocked tree $\mathfrak{T}_b$. Let $\mathcal{Q}'$ be a prefix tree of $\mathcal{Q}$, and $\Gamma'$ be the restriction of $\Gamma$ to $\mathcal{Q}'$. Note that $\Gamma'$ is a valid APT-mapping from $\mathcal{Q}'$ to $\mathfrak{T}_b$.

Consider now any proof $((\mathfrak{T}', f'), \Xi')$ of $\Gamma'$ (see Definition~\ref{DEF-APT-blocked}).\footnote{Note that this proof $((\mathfrak{T}', f'), \Xi')$ is not necessarily a subproof of the existing proof $((\mathfrak{T}, f), \Xi)$ of $\Gamma$ (\emph{i.e.}, $(\mathfrak{T}', f')$ is not necessarily an initial segment of $(\mathfrak{T}, f)$ and $\Xi'$ is not necessarily the restriction of $\Xi$).} Then, there exists a proof $((\mathfrak{T}'', f''), \Xi'')$ of $\Gamma$ such that $((\mathfrak{T}', f'), \Xi')$ is a subproof of $((\mathfrak{T}'', f''), \Xi'')$.
\end{property}

\begin{proof} Let us consider a proof $((\mathfrak{T}', f'), \Xi')$ of $\Gamma'$. As shown in the proof of Property~\ref{Prop-APT-blocked}, this proof corresponds to a homomorphism $\pi'$ of $Q'$ to $(\mathfrak{T}', f')$. Now consider any leaf bag ${Q^\mathrm{at}}'$ of $\mathcal{Q}'$, that is the root of a tree in $\mathcal{Q}$. $\Gamma$ and $\Gamma'$ can map ${Q^\mathrm{at}}'$ to different bags in $(\mathfrak{T}', f')$ and $(\mathfrak{T}, f)$. However, these bags are equivalent (according to Definition~\ref{def-pattern-inclusion}) to the same bag in $\mathfrak{T}_b$. So anything that can be mapped under one of these bags can be mapped in the same way under the other. In particular, the subtree rooted in ${Q^\mathrm{at}}'$ can be mapped in the same way under the bag of $(\mathfrak{T}', f')$. This construction leads to a homomorphism $\pi''$ from $Q$ to some $(\mathfrak{T}'', f'')$ that extends $\pi'$. Using again the proof of Property~\ref{Prop-APT-blocked}, this homomorphism can be used to build a proof $((\mathfrak{T}'', f''), \Xi'')$ of $\Gamma$, that is a superproof of $((\mathfrak{T}', f'), \Xi')$.
\end{proof}

This latter property provides us with a backtrack-free algorithm for checking the validity of an APT-mapping. Basically, Algorithm~\ref{ALGOvalidateAPT} performs a traversal of the APT $\mathcal{Q}$, while verifying whether $\Gamma$ ``correctly joins'' each bag of $\mathcal{Q}$ with its already ``correctly joined'' parent.

\begin{algorithm}[ht]
\KwData{A blocked tree $\mathfrak{T}_b$, an APT $\mathcal{Q}$, and an APT-mapping $\Gamma$ from $\mathcal{Q}$ to $\mathfrak{T}_b$.} \KwResult{{\sc yes} if $\Gamma$ is valid, {\sc no} otherwise.} Explored $:= \emptyset$\; \For{$i=1$ to $|\mathcal{Q}|$}
    {$Q^\mathrm{at} :=$ some bag of $\mathcal{Q}$ s.t. either $(${\sl parent}$(Q^\mathrm{at}), -) \in$ Explored, or $Q^\mathrm{at}$ is the root of $\mathcal{Q}$\;
    \eIf {{\sl joins}$(\Gamma, Q^\mathrm{at})\not=\emptyset$}
        {Explored $:=$ Explored $\cup \{(Q^\mathrm{at}, \mbox{\sl joins}(\Gamma, Q^\mathrm{at}))\}$\;}
        {\Return {\sc no}\;}
    }
\Return{\sc yes}\; \caption{{\sc ValidateAPT}} \label{ALGOvalidateAPT}
\end{algorithm}

It remains now to explain the procedure {\sl joins} that checks whether a valid APT-mapping of a subtree $\mathcal{Q}'$ of $\mathcal{Q}$ can be extended to  a child ${Q^\mathrm{at}}\in \mathcal{Q}\setminus \mathcal{Q}'$ of some bag of $\mathcal{Q}'$. Let us consider a proof $((\mathfrak{T}', f'), \Xi')$ of $\Gamma=(\Pi, \pi_1, \ldots, \pi_k)$ being a valid APT-mapping of $\mathcal{Q}'$. According to Def.~\ref{DEF-APT-blocked} and Prop.~\ref{Prop-APT-blocked}, it is sufficient to:
 \begin{itemize}
    \item find a bag $B_n$ that can be obtained by a \emph{bag copy sequence} $B_1, \ldots, B_n$ where $B_1 = \Xi'(\mbox{parent}({Q^\mathrm{at}}))$, $B_n$ is a bag equivalent to $\Pi(Q^\mathrm{at})$, and for $1 < i \leq n$, $B_i$ is obtained by a bag copy (see Definition ~\ref{def-bag-copy}) under $B_{i-1}$. Since $\Pi(Q^\mathrm{at})$ and $B_n$ are equivalent, there is a bijection from the terms of $\Pi(Q^\mathrm{at})$ to the terms of $B_n$, that we denote by $\psi$.
    \item it remains now to check that for every term $t$ appearing in an atom of ${Q^\mathrm{at}}$, $t$ is a term that belongs to a bag ${Q^\mathrm{at}}'$ in the branch from the root of $\mathcal{Q}$ to ${Q^\mathrm{at}}$, and $\Xi'(t) = \psi(\pi(t))$ (where $\pi$ is the mapping defined in the APT-mapping from the terms of ${Q^\mathrm{at}}$ to those of $\Pi({Q^\mathrm{at}})$). In that case, the call to {\sl joins} returns $\psi \circ \pi$, ensuring that we are able to evaluate the joins of the next bags.
\end{itemize}

We last prove that there exists a ``short'' proof of every valid APT-mapping.
\begin{property}
\label{prop-size-proof}
Let $\mathcal Q$ be an APT of $Q$, and $\Gamma$ be a valid APT-mapping from $\mathcal Q$ to a blocked tree $\mathfrak{T}_b$. There exists a proof $((\mathfrak T,f),\Xi)$ of $\Gamma$ such that for any two bags $Q^\mathrm{at}_i$ and $Q^\mathrm{at}_j$ from $\mathcal Q$ where $Q^\mathrm{at}_i$ is a child of $Q^\mathrm{at}_j$, the distance between $\Xi(Q\mathrm{at}_i)$ and $\Xi(Q\mathrm{at}_j)$ in $\mathfrak{T}$ is at most $p\times f^f$, where $p$ is the number of abstract patterns, and $f$ the maximum size of a rule frontier.
\end{property}

\begin{proof}
We prove the result by induction on the number of atom-term bags in $\mathcal Q$. If $\mathcal Q$ has only one atom-term bag, the property is trivially true. Let us assume the result to be true for any APT-mapping whose APT is of size $n$, and let $\mathcal Q$ be a  APT of size $n+1$, and $\Gamma$ be a valid APT-mapping of $\mathcal Q$.  Let $Q^\mathrm{at}_c$ be an arbitrary leaf of $\mathcal Q$, let $\mathcal Q'$ be equal to $\mathcal Q \setminus \{Q_c\}$, and let $\Gamma'$ be the restriction of $\Gamma$ to $\mathcal Q'$. By induction assumption, there exists a proof $((\mathfrak T',f'),\Xi')$ of $\Gamma'$ fulfilling the claimed property. By the proof of Property \ref{prop-apt-restriction}, this proof can be extended to a proof $((\mathfrak T,f),\Xi)$, such that $Q^\mathrm{at}_c$ is mapped to a descendant of $\Xi(Q^\mathrm{at}_p)$, where $Q^\mathrm{at}_p$ is the parent of $Q^\mathrm{at}_c$ in $\mathcal Q$. Moreover, the provided construction allows to ensure that there are no two distinct bags $B_i$ and $B_j$ on the path from $\Xi(Q^\mathrm{at}_p)$ to $\Xi(Q^\mathrm{at}_c)$  fulfilling the following two conditions:
 \begin{itemize}
\item $f(B_i) = f(B_j)$;
\item $\forall x \in X_s, \psi_{B_i\to f(B_i)}(x) = \psi_{B_j\to f(B_j)}(x)$, where $X_s$ is the image of the set of variables that appear both in $Q^\mathrm{at}_c$ and $\mathcal Q'$.
\end{itemize}
Then the distance between $\Xi(Q^\mathrm{at}_p)$ and $\Xi(Q^\mathrm{at}_c)$ has to be less than $p \times f^f$. Indeed, $X_s$ should belong to the frontier of $\Xi(Q^\mathrm{at}_c)$, by the running intersection property of $\mathfrak T$. There is at most $f^f$ ways of arranging these terms, thus providing the claimed upper-bound.
\end{proof}

%

\subsection{Complexity Analysis and Worst-Case Optimal Adaptation to Subclasses}
\label{sec-subclasses}

We provide a worst-case complexity analysis of the proposed algorithm, and present some small modifications that can be adopted in order to make the algorithm worst-case optimal for relevant subclasses of rules. Table \ref{tab-notation-complexity} provides a summary of the notation used for the complexity analysis.

\begin{table}[h]
\caption{Notations used in the complexity analysis}
\label{tab-notation-complexity}
\begin{center}
\begin{tabular}{|c|c|}
\hline
Notation & Signification \\ 
\hline
$b$ & upper-bound on the number of terms in any abstract pattern\\
\hline
$q$ & number of atoms plus number of terms in the query\\
\hline
$f$ & maximum size of a rule frontier\\
\hline
$a_B$ & maximum number of atoms in a rule body\\
\hline
$a_H$ & maximum number of atoms in a rule head\\
\hline
$t_B$ & maximum number of terms in a rule body\\
\hline
$t_H$ & maximum number of terms in a rule head \\
\hline
$|\mathcal R|$ & number of rules \\
\hline
$p$ & number of abstract patterns \\
\hline
$w$ & maximum arity of a predicate\\
\hline
$s$ & number of predicates appearing in the rule set\\
\hline
\end{tabular}
\end{center}
\end{table}

\subsubsection{Complexity of the algorithm}

The overall algorithm deciding whether $F, \mathcal R \models Q$ can now be sketched as follows:
 \begin{itemize}
    \item build the full blocked tree $\mathfrak T_b$ of $(F, \mathcal R)$ (note that this is done independently of the query)
    \item for every APT $\mathcal{Q}$ of $Q$, for every APT-mapping $\Gamma$ of $\mathcal{Q}$ to $\mathfrak T_b$, if {\sc ValidateAPT} returns {\sc yes}, return {\sc yes} (and return {\sc no} at the end otherwise).
 \end{itemize}


 The first step is done linearly in the number of evolution and creation rules. There are at most $p^2$ evolution rules, and $p^2\times b^f$ creation rules. We thus need to upper-bound the number of abstract patterns. There are at most $|\mathcal R| \times2^{a_B}$ subsets of rule bodies, and $b^{t_B}$ mappings from terms of a rule body to terms of a bag. An abstract pattern being a subset of the cartesian product of these two sets, the number of abstract patterns is upper-bounded by

 $$2^{|\mathcal R| \times 2^{a_B}\times b^{t_B}}.$$
 The first step is thus done in double exponential time, which drops to a single exponential when the set of rules is fixed. Note that only this first step is needed in the first version of the algorithm, where the query was considered as a rule, which yields the proof of Theorem \ref{thm-easy-upper-bound}.

 The second step can be done in $N_Q \times N_\Gamma \times N_V$ where:
 \begin{itemize}
    \item $N_Q$ is the number of APTs of a query $Q$ of size $q$, and $N_Q = \mathcal O(q^q)$ (the number of partitions on the atoms and terms of $Q$, times the number of trees that can be built on each of these partitions);
    \item $N_\Gamma$ is the number of APT mappings from one APT (of size $q$) to the full blocked tree. The size of the full blocked tree is $\mathcal O(pb^{f})$ and thus $N_\Gamma = \mathcal O(p^q \times b^{fq})$;
    \item $N_V$ is the cost of Algorithm~\ref{ALGOvalidateAPT} that evaluates the validity of the APT. It performs at most $q$ {\sl joins}, and each one generates at most $\mathcal O(p \times f^f)$ bags (see Property~\ref{prop-size-proof}).
 \end{itemize}

The second step of our algorithm thus operates in $\mathcal O(q^q\times p^q\times b^{fq}\times q \times p \times f^f)$.

The querying part is thus polynomial in $p$ (the number of patterns), and simply exponential in $q$ and in $f$. Since $p$ is in the worst-case double exponential w.r.t.\ $F$ and $\mathcal R$, the algorithm runs in \ExpExpTime{}. Last, given a (nondeterministically guessed) proof of $\Gamma$, we can check in polynomial time (if $\mathcal R$ and $F$ are fixed) that it is indeed a valid one, yielding:

\begin{theorem}
CQ entailment for \RgRbRtRsR is {\sc NP}-complete for query complexity.
\end{theorem}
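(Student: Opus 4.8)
The plan is to prove both membership in \NP{} and \NP{}-hardness for query complexity, where $F$ and $\mathcal{R}$ are fixed and only $Q$ varies.

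First I would establish membership. Since $F$ and $\mathcal{R}$ are fixed, the full blocked tree $\mathfrak{T}_b=\mathfrak{T}_b^*(F,\mathcal R)$ can be precomputed once and for all; by the complexity analysis its number of bags is $\mathcal{O}(p\,b^{f})$, a constant depending only on $F$ and $\mathcal{R}$, not on $Q$. By Property~\ref{Prop-APT-blocked} it suffices to decide whether there exist an APT $\mathcal{Q}$ of $Q$ and a valid APT-mapping $\Gamma$ from $\mathcal{Q}$ to $\mathfrak{T}_b$. I would then exhibit an \NP{} procedure that guesses (i) an APT $\mathcal{Q}$ of $Q$, which is a tree partitioning $\atoms{Q}\cup\terms{Q}$ and hence of size linear in $|Q|$; (ii) an APT-mapping $\Gamma=(\Pi,\pi_1,\dots,\pi_k)$, where $\Pi$ sends each of the at most $|Q|$ atom-term bags to one of constantly many bags of $\mathfrak{T}_b$ and each $\pi_i$ maps the terms of an atom-term bag into the constantly many terms of its target bag, so the guessed $\Gamma$ has polynomial size; and (iii) a witnessing proof $((\mathfrak{T},f),\Xi)$ of validity in the sense of Definition~\ref{DEF-APT-blocked}. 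The crucial point, guaranteeing that the witness is polynomially bounded, is Property~\ref{prop-size-proof}: there is always a proof in which each APT-edge is realised by a bag-copy path of length at most $p\times f^{f}$, a constant once $\mathcal{R}$ is fixed; since $\mathcal{Q}$ has at most $|Q|$ bags, the relevant generated segment $\mathfrak{T}$ is polynomial in $|Q|$. Verifying validity then amounts to checking the root and child conditions on $\Xi$ and $\Pi$ and that $\pi_\Gamma$ is a homomorphism from $Q$ into the atoms of $\mathfrak{T}$; each atom test is constant-time because bags have constant size and there are at most $|Q|$ atoms, so the whole check runs in time polynomial in $|Q|$ (equivalently, {\sc ValidateAPT} of Algorithm~\ref{ALGOvalidateAPT} performs this verification deterministically in polynomial time). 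This places the problem in \NP{}.

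For \NP{}-hardness I would reduce from graph $3$-colourability via the classical homomorphism characterisation. Fix the empty rule set $\mathcal{R}=\emptyset$, which is trivially \RgRbRtRsR{} since every $\emptyset$-derivation has length $0$ and is vacuously greedy, and fix the database $F=\{\mathit{edge}(i,j)\mid i,j\in\{1,2,3\},\ i\neq j\}$, i.e.\ the directed $3$-clique. Given a graph $G=(V,E)$, build in polynomial time the Boolean CQ $Q_G$ with one variable $x_v$ per vertex and atoms $\mathit{edge}(x_u,x_v)\wedge\mathit{edge}(x_v,x_u)$ for each $\{u,v\}\in E$. Because $\mathcal{R}=\emptyset$, the saturation $\alpha_\infty(F,\mathcal{R})$ equals $F$, so $F,\mathcal{R}\models Q_G$ holds iff there is a homomorphism $Q_G\to F$, which holds iff $G$ admits a proper $3$-colouring. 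As $3$-colourability is \NP{}-complete and $F,\mathcal{R}$ are fixed, \textsc{BCQ-Entailment} is \NP{}-hard in query complexity.

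The routine directions are the hardness reduction and the deterministic verification, both standard. The main obstacle, and the reason the machinery of the preceding sections is needed, is justifying that a polynomial-size witness always exists: a priori a valid APT-mapping to $\mathfrak{T}_b$ could require unfolding the blocked tree to arbitrary depth, which would break the \NP{} bound and is precisely why the naive ``query as a rule'' approach of Theorem~\ref{thm-easy-upper-bound} cannot be used here (there the blocked tree would depend on $Q$). This gap is closed exactly by the bounded, backtrack-free validation of Property~\ref{prop-apt-restriction} and the length bound of Property~\ref{prop-size-proof}; once that bound is in hand, the guess-and-check argument goes through.
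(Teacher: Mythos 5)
Your proposal is correct and follows essentially the same route as the paper: membership is obtained exactly as there, by nondeterministically guessing an APT, an APT-mapping $\Gamma$, and a proof of its validity, whose polynomial size rests on the length bound $p\times f^{f}$ of Property~\ref{prop-size-proof} (constant once $F$ and $\mathcal{R}$ are fixed), with {\sc ValidateAPT} serving as the polynomial-time verifier. For the lower bound the paper simply invokes the well-known \NP{}-hardness of query complexity for plain rule-free CQ entailment, and your $3$-colourability reduction with $\mathcal{R}=\emptyset$ and $F$ the directed $3$-clique is precisely the standard proof of that cited fact, so it is an explicit instantiation rather than a different argument.
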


Thereby, the lower bound comes from the well-known {\sc NP}-complete query complexity of plain (i.e., rule-free) CQ entailment.

\subsubsection{Adaptation to relevant subclasses}

We now show how to adapt the algorithm to the subclasses of \RgRbRtRsR that have smaller worst-case complexities. This is done by slightly modifying the construction of the full blocked tree, allowing its size to be simply exponential or even polynomial with respect to the relevant parameters. We consider three cases:
\begin{itemize}
\item (weakly) guarded rules, whose combined complexity in the bounded arity case drops to \ExpTime{},
\item guarded frontier-one rules, whose combined complexity in the unbounded arity case drops to \ExpTime{},
\item guarded, frontier-guarded and frontier-1 rules, whose data complexity drops down to \PTime{}.
\end{itemize}

For weakly guarded rules, we change the definition of pattern. Indeed, with this kind of rules, a rule application necessarily maps all the terms of a rule body to terms occurring in a single bag. This holds since every initial term belongs to every bag, and every variable of the rule body that could map to an existentially quantified variable is argument of the guard of the body of the rule. Thus, by storing all the possible mappings of a rule body atom (instead of all partial homomorphisms of a subset of a rule body), we are able to construct any homomorphism from a rule body to the current fact. The equivalence between patterns and the blocking procedure remains unchanged. Since there are at most $b^{w}$ such homomorphisms for an atom, the number of abstract patterns is bounded by $2^{b^{w}}$, which is a simple exponential since $w$ is bounded. The algorithm thus runs in exponential time for weakly guarded rules with bounded arity. 

If we consider only guarded frontier-one rules, the number of possible homomorphisms decreases. Indeed, the set of atoms whose terms are included in a given bag is upper-bounded by $a_H + s.t_H$: the atoms that have been created at the creation of the bag, plus atoms that may be created afterwards. However, these atoms must be of the from $r(x,\ldots,x)$, since the considered rules are frontier-one, hence at most $s.t_H$. This results in a simply exponential number of patterns, which provides the claimed upper-bound.

For frontier-guarded rules (and its subclasses), we slightly modify the construction of the decomposition tree. Indeed, in the original construction, every term of the initial fact is put in every bag of the decomposition tree. However, by putting only the constants appearing in a rule head as well as every instantiation of terms of the head of the rule creating the bag (that is, we do not put all initial terms in every bag), a correct tree decomposition would also be built, and the size of the bags (except for the root) would not be dependent of the initial fact any more. The number of patterns is then upper-bounded by $1+ 2^{|\mathcal R| \times 2^{a_B} \times t_H^{t_B}}$. When $\mathcal R$ is fixed, this number is polynomial in the data. Given that $Q$ is fixed, we get the \PTime{} upper-bound.



\section{Matching Lower Complexity Bounds for \RgRbRtRsR Subclasses} \label{sec:fg} %

We now provide the missing hardness results to fully substantiate
all complexity results displayed in Figure~\ref{complexities-min} and
Table~\ref{table:complexity}.

\subsection{Data Complexity of Guarded Frontier-One Rules is \PTime-hard}

\PTime{} hardness for \RgRfRrRoR rules is not hard to establish and
follows from known results (for instance, the description logic
$\mathcal{EL}$ is subsumed by \RgRfRrRoR rules and known to have
\PTime-hard data complexity). For the sake of self-containedness, we
will give a direct reduction from one of the prototypical \PTime{}
problems: entailment in propositional Horn logic.

\begin{theorem}
CQ entailment under constant-free \RgRfRrRoR rules is \PTime{}-hard
for data complexity.
\end{theorem}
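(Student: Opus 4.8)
The plan is to give a direct logspace reduction from propositional Horn entailment, the prototypical \PTime{}-complete problem, to \textsc{BCQ-Entailment} under a \emph{fixed} constant-free \RgRfRrRoR rule set together with a \emph{fixed} Boolean query, so that only the data encodes the input instance. Since data complexity fixes both $\mathcal R$ and $Q$, the two obstacles I must engineer away are precisely (i) the unbounded body size of Horn clauses, which would otherwise force an unbounded number of rules, and (ii) the varying goal proposition, which must be communicated through the data rather than through the query.

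First I would normalize the input. A propositional Horn clause $p_1 \wedge \cdots \wedge p_k \rightarrow q$ can be broken, using fresh auxiliary propositions $a_2,\dots,a_{k-1}$, into a logspace-equivalent set of clauses each with at most two body atoms ($p_1\wedge p_2 \rightarrow a_2$, then $a_i \wedge p_{i+1} \rightarrow a_{i+1}$, and finally $a_{k-1}\wedge p_k \rightarrow q$), and entailment of the goal is preserved. Equivalently, I could reduce from the monotone circuit value problem, whose fan-in-two gates yield clauses of body size at most two directly. This restricted Horn entailment is still \PTime{}-complete.

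Next I would fix the encoding. Propositions become domain elements, i.e.\ constants in the data (this is allowed, since only the \emph{rules} must be constant-free). For the data $F$ I would use $\mathit{true}(p)$ for each asserted fact, $\mathit{clause}_1(p,q)$ for each clause $p\rightarrow q$, $\mathit{clause}_2(p,q,r)$ for each clause $p\wedge q\rightarrow r$, and a single marker atom $\mathit{goal}(g)$ for the goal. The fixed rule set is
$$
R_1 = \mathit{clause}_1(x,y) \wedge \mathit{true}(x) \rightarrow \mathit{true}(y),\qquad
R_2 = \mathit{clause}_2(x,y,z) \wedge \mathit{true}(x) \wedge \mathit{true}(y) \rightarrow \mathit{true}(z),
$$
and the fixed Boolean query is $Q = \mathit{goal}(x) \wedge \mathit{true}(x)$ (existentially closed).

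Finally I would verify the side conditions and correctness, both routine. Each rule is constant-free; its head variable is the unique frontier variable, so $|\fr{R_i}|=1$ (frontier-one); and the $\mathit{clause}_1$ (resp.\ $\mathit{clause}_2$) atom contains \emph{all} body variables and thus guards the body, so both rules are guarded, hence \RgRfRrRoR (in fact pure Datalog, a special case). Since $\mathcal R$ has no existential variables, forward chaining is ordinary least-fixpoint Datalog evaluation and computes exactly $\{\,p : \mathcal H \models p\,\}$ as the extension of $\mathit{true}$; as $\mathit{goal}$ holds only of $g$, we obtain $(F,\mathcal R)\models Q$ iff $\mathit{true}(g)$ is derived iff $\mathcal H \models g$. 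The reduction is plainly logspace-computable. The only mildly delicate point—the ``hard part''—is arranging for $\mathcal R$ and $Q$ to be fixed \emph{simultaneously}: the body-size normalization secures a fixed $\mathcal R$, while the $\mathit{goal}$ marker together with the existential query secures a fixed $Q$; everything else is a direct check.
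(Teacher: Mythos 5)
Your proof is correct, and it takes a genuinely different route from the paper's. The paper also reduces from propositional Horn entailment, but it does not normalize clause bodies: each clause $a_1\wedge\cdots\wedge a_n\to a$ is transcribed verbatim into the data as a linked list of cells $b_{C,1},\dots,b_{C,n}$ (via predicates $\mathit{first}$ and $\mathit{rest}$, a terminator $\mathit{nil}$ with $\mathit{entailed}(\mathit{nil})$, and $\mathit{entails}(b_{C,1},c_a)$), and two fixed rules fold entailment backwards along the list: $\mathit{first}(y,z)\wedge\mathit{entailed}(z)\wedge\mathit{rest}(y,z')\wedge\mathit{entailed}(z')\to\mathit{entailed}(y)$ and $\mathit{entailed}(y)\wedge\mathit{entails}(y,z)\to\mathit{entailed}(z)$. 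Your bounded-body normalization plus fixed meta-interpreter achieves the same without any list machinery, and it buys something the paper's construction does not literally deliver: genuine guardedness. The paper's list-traversal rule has body variables $y,z,z'$ with no single atom containing all three, so it is frontier-one (\RfRrRoR) but not guarded under the paper's own definition (only vacuously weakly guarded, since the rules have no existential variables), whereas each of your rules is guarded by its $\mathit{clause}_1$/$\mathit{clause}_2$ atom and is thus \RgRfRrRoR on the nose. Conversely, the paper's encoding buys freedom from preprocessing---no fresh auxiliary propositions, the data is a direct transcription of the clauses---at the minor cost that its query $\mathit{entailed}(c_a)$ mentions an instance-dependent constant, which your $\mathit{goal}$-marker trick avoids by making the query truly fixed; both of these points are cosmetic and repairable, but your version is the tidier witness of the stated theorem.
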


\newcommand{\predstyle}[1]{\mathit{#1}}
\begin{proof}
Given a set $\mathcal{H}$ of propositional Horn clauses, we
introduce for every propositional atom $a$ occurring therein a
constant $c_a$. We also introduce one additional constant
$\mathit{nil}$. Moreover, for every Horn clause $C\in\mathcal{H}$
with $C = \mathit{a}_1\wedge\ldots\wedge\mathit{a}_n\to \mathit{a}$,
we introduce constants $\mathit{b}_{C,1}, \ldots, \mathit{b}_{C,n}$
and let $F$ consist of
$\predstyle{entails}(\mathit{b}_{C,1},c_\mathit{a})$,
$\predstyle{first}(\mathit{b}_{C,i},c_{\mathit{a}_i})$ for all $i\in
\{1,\ldots, n\}$, as well as
$\predstyle{rest}(\mathit{b}_{C,n},\mathit{nil})$, and
$\predstyle{rest}(\mathit{b}_{C,i},\mathit{b}_{C,i+1})$ for all
$i\in \{1,\ldots, n-1\}$, also let $F$ contain
$\predstyle{entailed}(\mathit{nil})$. Then the propositional atom
$a$ is entailed by $\mathcal{H}$ exactly if $F,\mathcal{R} \models
Q$ with $Q=entailed(c_a)$ and $\mathcal{R}$ containing the rules:
\[
\begin{array}{rrl}
\predstyle{first}(y,z) \wedge \predstyle{entailed}(z) \wedge \predstyle{rest}(y,z')\wedge \predstyle{entailed}(z') & \to & \predstyle{entailed}(y),\\
\predstyle{entailed}(y)\wedge \predstyle{entails}(y,z) & \to & \predstyle{entailed}(z).\\
\end{array}
\]
\end{proof}

\subsection{Combined Complexity of Guarded Frontier-One Rules is
\ExpTime-hard}\label{sec:HornALC}

We prove \ExpTime-hardness of CQ entailment under \RgRfRrRoR rules
by showing that any standard reasoning task in the description logic
 Role-Bounded Horn-$\mathcal{ALC}$, for which
\ExpTime-hardness is known \cite{KRH:hornAAAI,KRH:HornDLs2013}, can be polynomially reduced
to the considered problem.

We start by defining this problem. In order to avoid syntactic
overload, we will stick to first-order logic syntax and refrain from
using the traditional description-logic-style notation.

\begin{definition}[Role-Bounded Horn-$\mathcal{ALC}$]
Let $\mathsf{Pr}_1$ an infinite set of unary predicates and let
$\mathsf{Pr}_2$ be a finite set of binary predicates. A
\emph{reduced normalized Horn-$\mathcal{ALC}[\mathsf{Pr}_2]$
terminology} $\mathcal{T}$ is a set of rules having one of the
following shapes (with $p_1,p_2,p_3 \in \mathsf{Pr}_1$ and $r \in
\mathsf{Pr}_2)$:
\begin{enumerate}
\item[(A)] $p_1(x) \to p_2(x)$
\item[(B)] $p_1(x) \wedge p_2(x) \to p_3(x)$
\item[(C)] $r(x,y) \wedge p_1(y) \to p_2(x)$
\item[(D)] $p_1(x) \wedge r(x,y) \to p_2(y)$
\item[(E)] $p_1(x) \to \exists y.(r(x,y) \wedge p_2(y))$
\end{enumerate}
We refer to the problem of deciding if for some $\mathcal{T}$ and
$p_1,p_2\in \mathsf{Pr}_1$ holds $\mathcal{T}\models \forall
x(p_1(x) \to p_2(x))$ as \emph{unary subsumption checking}.
\end{definition}

\begin{theorem}[\cite{KRH:HornDLs2013}, Section
6.2.]\label{thm-HornALC} There is a finite set $\mathsf{Pr}_2$ such
that unary subsumption checking for reduced normalized
Horn-$\mathcal{ALC}[\mathsf{Pr}_2]$ terminologies is \ExpTime-hard.
\end{theorem}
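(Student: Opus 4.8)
The plan is to establish \ExpTime-hardness by a reduction from the acceptance problem of \emph{alternating Turing machines running in polynomial space}, which is \ExpTime-hard because alternating polynomial space coincides with deterministic exponential time ($\mathrm{APSPACE}=\mathrm{EXPTIME}$). The key to obtaining the \emph{role-bounded} refinement is to fix, once and for all, a single alternating machine $M$ whose polynomial-space acceptance problem is already \ExpTime-hard, and to reduce from the language of $M$: since $M$ is fixed, its finite set of transitions determines a fixed finite set $\mathsf{Pr}_2$ of binary predicates (one role per transition, plus a constant number of auxiliary roles), while the growing parameter --- the space bound $s=p(|w|)$ on input $w$ --- is absorbed entirely into a \emph{polynomial} number of fresh unary predicates from the infinite supply $\mathsf{Pr}_1$. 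First I would encode a configuration of $M$ on $w$ by the conjunction of polynomially many unary predicates asserted at a single domain element: predicates $\mathit{cell}_{i,a}$ for each tape position $i\le s$ and symbol $a$, predicates $\mathit{head}_i$ for the head position, and a constant number of predicates $\mathit{state}_q$. The initial configuration is installed from a single start predicate by shape-(A) rules such as $\mathit{Start}(x)\to\mathit{cell}_{i,w_i}(x)$.

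Next I would simulate the transition relation inside the canonical model. Each applicable transition $t$ spawns a successor configuration by a shape-(E) rule $\mathit{Config}_t(x)\to\exists y\,(r_t(x,y)\wedge\mathit{Fresh}(y))$, where $\mathit{Config}_t$ abbreviates (via shape-(B) chains) the condition that the current state and scanned symbol enable $t$. Unchanged tape cells are copied to the successor by shape-(D) rules $\mathit{cell}_{i,a}(x)\wedge r_t(x,y)\to\mathit{cell}_{i,a}(y)$ (for $i$ away from the head), and the local rewrite, head move and state change are installed by further shape-(D) rules. Because every shape-(E) application creates a \emph{fresh} successor, the canonical (least) model is an and--or tree whose nodes are exactly the reachable configurations, each node carrying precisely one $r_t$-successor per enabled transition $t$. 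Acceptance is propagated bottom-up by a fresh predicate $\mathit{Accept}$: halting accepting states give $\mathit{state}_{\mathrm{acc}}(x)\to\mathit{Accept}(x)$ (shape (A)); an \emph{existential} node accepts as soon as some mandatory successor does, captured by shape-(C) rules $r_t(x,y)\wedge\mathit{Accept}(y)\to\mathit{Accept}(x)$; a \emph{universal} node accepts only when \emph{all} its enabled successors do, which I would encode by first recording per-transition flags $r_{t_i}(x,y)\wedge\mathit{Accept}(y)\to\mathit{Acc}_{t_i}(x)$ (shape (C)) and then conjoining them through shape-(B) rules $\mathit{Acc}_{t_1}(x)\wedge\mathit{Acc}_{t_2}(x)\to\cdots\to\mathit{Accept}(x)$. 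The reduction then asks whether $\mathcal{T}_w\models\forall x(\mathit{Start}(x)\to\mathit{Accept}(x))$.

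For correctness I would argue that the least model of $\mathcal{T}_w$ seeded at a $\mathit{Start}$-element is an unravelling of the and--or computation tree of $M$ on $w$, and that $\mathit{Accept}$, being a least fixpoint, holds at an element iff that element roots a finite accepting sub-tree --- precisely the semantics of alternating acceptance. Hence $\mathit{Start}\sqsubseteq\mathit{Accept}$ is entailed iff $M$ accepts $w$; since every ingredient has size polynomial in $|w|$ and uses only the fixed roles of $M$, this is a polynomial many-one reduction witnessing \ExpTime-hardness over a fixed $\mathsf{Pr}_2$. The main obstacle --- and the reason the five normalised shapes must be used so carefully --- is faithfully rendering \emph{universal} (AND) branching inside a Horn formalism whose natural propagation is existential. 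The encoding above works only because, in the tree-shaped canonical model, each node has exactly one successor per enabled transition, so the flag $\mathit{Acc}_{t_i}$ really tests \emph{the} mandatory successor rather than merely some successor; guaranteeing this requires distinct transitions to use distinct roles and no shape-(E) rule to fire twice for the same transition on the same element. The second delicate point is the \emph{role-bounded} constraint: one must verify that fixing $M$ keeps the role set constant while the unary vocabulary and the rule set grow only polynomially with $s$, and that reusing the finitely many roles across exponentially many configurations never forces spurious edges that would merge configurations or leak acceptance. Checking these invariants, rather than the routine bookkeeping of the transition encoding, is where the real work of the proof lies.
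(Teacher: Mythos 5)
You are comparing against a statement the paper does not prove: Theorem~\ref{thm-HornALC} is imported verbatim from \cite{KRH:HornDLs2013} (Section 6.2), so the benchmark is the cited construction. Your route matches it in essence: since $\mathrm{APSPACE}=\ExpTime$, one fixes a single polynomially space-bounded ATM whose acceptance problem is \ExpTime-hard, uses one role per transition of that fixed machine (so $\mathsf{Pr}_2$ is finite once and for all), absorbs the polynomial space bound into unary predicates $\mathit{cell}_{i,a}$, $\mathit{head}_i$, $\mathit{state}_q$, spawns successor configurations by shape-(E) rules, and propagates acceptance bottom-up with per-transition flags conjoined for universal states. This is also precisely the template the present paper uses in its own ATM reductions in Section~\ref{sec:fg} (compare rules~(4)--(5) there, which implement the same existential/universal acceptance propagation over per-transition roles $\mathit{trans}_\delta$), so your proposal is the polynomial-space analogue of machinery the authors themselves deploy.

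Two places in your sketch step outside the five normalized shapes as written and must be repaired, though both repairs are routine. First, the copy rule $\mathit{cell}_{i,a}(x)\wedge r_t(x,y)\to\mathit{cell}_{i,a}(y)$ ``for $i$ away from the head'' is not of shape (D): that shape allows only one unary atom on $x$, and the side condition cannot remain a meta-level annotation. Dropping it is not an option, since the least model would then carry both the old and the newly written symbol in the rewritten cell, which can enable spurious transitions and breaks soundness in the direction ``entailment implies acceptance''. The fix is the same shape-(B) composition trick you already invoke for $\mathit{Config}_t$: introduce $\mathit{NH}_{i,a}$ (``cell $i$ holds $a$ and the head is elsewhere'') via rules $\mathit{cell}_{i,a}(x)\wedge\mathit{head}_j(x)\to\mathit{NH}_{i,a}(x)$ for each $j\neq i$, and copy by the legal shape-(D) rule $\mathit{NH}_{i,a}(x)\wedge r_t(x,y)\to\mathit{cell}_{i,a}(y)$; this costs only polynomially many extra unary predicates, which is exactly what the role-bounded setting permits. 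Second, the universal-acceptance conjunction must be taken per pair $(q,\gamma)$ over exactly $\Delta(q,\gamma)$ (as in the paper's rule~(5)), not over all transitions, otherwise it can never fire; halting universal configurations, which accept vacuously under the paper's ATM convention, need their own gated rule. With these standard adjustments your correctness argument goes through: only shape-(E) rules create role atoms and they point to fresh elements, so no spurious edges arise, and multiple chase-firings of the same (E)-rule at the same element produce isomorphic successors with identical unary types, so the per-transition flags behave as intended even without assuming literal uniqueness of successors.
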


The following corollary is now a straightforward consequence.

\begin{corollary}
CQ entailment under constant-free \RgRfRrRoR rules is \ExpTime-hard
for combined complexity.
\end{corollary}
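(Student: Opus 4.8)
The plan is to exhibit a polynomial-time reduction from unary subsumption checking for reduced normalized Horn-$\mathcal{ALC}[\mathsf{Pr}_2]$ terminologies, which is \ExpTime-hard by Theorem~\ref{thm-HornALC}, to \textsc{BCQ-Entailment} under constant-free \RgRfRrRoR rules. The key observation that makes this reduction essentially immediate is that every rule shape (A)--(E) defining such a terminology is \emph{already} a constant-free guarded frontier-one rule. Indeed, in each case the body contains an atom covering all of its variables---$p_1(x)$ in (A), (B), (E) and $r(x,y)$ in (C), (D)---so the rule is guarded; and the frontier $\fr{R} = \vars{\body{R}} \cap \vars{\head{R}}$ is a singleton in each case ($\{x\}$ for (A), (B), (C), (E), and $\{y\}$ for (D)), so the rule is frontier-one. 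Since no constants occur in any of these rules, $\mathcal{T}$ is a set of constant-free \RgRfRrRoR rules with no modification required.

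Given a terminology $\mathcal{T}$ and predicates $p_1, p_2 \in \mathsf{Pr}_1$, I would set $\mathcal{R} = \mathcal{T}$, take the fact $F = \{p_1(a)\}$ for a fresh constant $a$, and let the Boolean query be the ground atom $Q = p_2(a)$. The constant $a$ occurs only in $F$ and $Q$, not in the rules, so the constant-freeness of the rule set is preserved, and $Q$ is the trivial (variable-free) special case of a conjunctive query. The correctness claim to establish is $\mathcal{T} \models \forall x(p_1(x) \to p_2(x))$ iff $(F, \mathcal{R}) \models Q$, which is the standard reduction of concept subsumption to instance entailment. For the forward direction, any model $\mathcal{I}$ of $(F, \mathcal{R})$ satisfies $\mathcal{T}$ and hence $p_1^{\mathcal{I}} \subseteq p_2^{\mathcal{I}}$; since $a^{\mathcal{I}} \in p_1^{\mathcal{I}}$ it follows that $a^{\mathcal{I}} \in p_2^{\mathcal{I}}$, i.e.\ $\mathcal{I} \models Q$. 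For the converse, given any model $\mathcal{I}$ of $\mathcal{T}$ and any domain element $d \in p_1^{\mathcal{I}}$, reinterpreting $a$ by $d$ yields a model of $(F, \mathcal{R})$, which by assumption satisfies $Q$, forcing $d \in p_2^{\mathcal{I}}$; as $d$ was arbitrary, $p_1^{\mathcal{I}} \subseteq p_2^{\mathcal{I}}$.

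The reduction is clearly computable in linear, hence polynomial, time: it copies the terminology verbatim and adjoins a single fact and a single-atom query. Combined-complexity lower bounds propagate through such a reduction, so \ExpTime-hardness of unary subsumption checking transfers to \textsc{BCQ-Entailment} under constant-free \RgRfRrRoR rules, establishing the corollary. There is no genuine technical obstacle here: the entire weight of the argument rests on Theorem~\ref{thm-HornALC}, and the only points requiring care are the bookkeeping verification that each of the five terminology axiom shapes simultaneously satisfies guardedness and the frontier-one condition, and the (routine) model-theoretic justification that subsumption reduces to ground-atom entailment over a fresh individual.
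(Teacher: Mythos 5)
Your proposal is correct and is essentially identical to the paper's own proof: the same reduction $F=\{p_1(a)\}$, $\mathcal{R}=\mathcal{T}$, $Q=p_2(a)$ from unary subsumption checking via Theorem~\ref{thm-HornALC}. The paper compresses the verification that shapes (A)--(E) are \RgRfRrRoR rules and the model-theoretic equivalence into a ``clearly,'' whereas you spell both out; the extra detail is accurate and changes nothing in substance.
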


\begin{proof}
Clearly, given a reduced normalized
Horn-$\mathcal{ALC}[\mathsf{Pr}_2]$ terminology $\mathcal{T}$ and
$p_1,p_2\in \mathsf{Pr}_1$, the unary subsumption entailment
$\mathcal{T}\models \forall x(p_1(x) \to p_2(x))$ is to be confirmed
if and only if $F,\mathcal{R}\models Q$ with $F=\{p_1(a)\}$,
$\mathcal{R}=\mathcal{T}$ and $Q=p_2(a)$. The latter can be
conceived as a CQ entailment problem of the desired type, since
$\mathcal{T}$ is a \RgRfRrRoR rule set.
\end{proof}

Note that our line of argumentation actually does not require the
set $\mathsf{Pr}_2$ to be fixed, however, this will be a necessary
precondition in the next section where we use the same logic, hence
we have introduced the logic in this form from the beginning.

\subsection{Data Complexity of Weakly Guarded Frontier-One Rules is \ExpTime-hard}

We will now show that the data complexity for deciding CQ entailment
under \RwRgRfRrRoR rules is \ExpTime-hard. Again, we obtain the
result by showing that a Horn-$\mathcal{ALC}[\mathsf{Pr}_2]$
reasoning problem can be reduced to CQ entailment under \RwRgRfRrRoR
rules but this time even with fixed rule set and query.

\begin{definition}
Given a set $\mathsf{Pr}_2$, let $\mathcal{R}_\mathrm{fix}$ be the
fixed $\RwRgRfRrRoR$ rule set containing the following rules (where
$r$ ranges over all elements of $\mathsf{Pr}_2$):
\begin{enumerate}
\item $\mathit{typeA}(z_1,z_2) \wedge \mathit{in}(x,z_1) \to \mathit{in}(x,z_2)$
\item $\mathit{typeB}(z_1,z_2,z_3) \wedge \mathit{in}(x,z_1) \wedge \mathit{in}(x,z_2) \to \mathit{in}(x,z_3)$
\item $\mathit{typeC}_r(z_1,z_2) \wedge r(x,y) \wedge \mathit{in}(y,z_1) \to \mathit{in}(x,z_2)$
\item $\mathit{typeD}_r(z_1,z_2) \wedge \mathit{in}(x,z_1) \wedge r(x,y) \to \mathit{in}(y,z_2)$
\item $\mathit{typeE}_r(z_1,z_2) \wedge \mathit{in}(x,z_1) \to r(x,y) \wedge \mathit{in}(y,z_2)$
\item $\mathit{test}(x) \wedge \mathit{sub}(z) \to \mathit{in}(x,z)$
\item $\mathit{test}(x) \wedge \mathit{in}(x,z) \wedge \mathit{super}(z) \to
\mathit{match}$
\end{enumerate}
Given a reduced normalized Horn-$\mathcal{ALC}[\mathsf{Pr}_2]$
terminology $\mathcal{T}$, we let $F_\mathcal{T}$ be the fact
containing
\begin{enumerate}
\item $\mathit{typeA}(c_{p_1},c_{p_2})$ for any $R = p_1(x) {\to} p_2(x)$ from
$\mathcal{T}$,
\item $\mathit{typeB}(c_{p_1},c_{p_2},c_{p_3})$  for any $R = p_1(x) {\wedge} p_2(x) {\to}
p_3(x)$ from $\mathcal{T}$,
\item $\mathit{typeC}_r(c_{p_1},c_{p_2})$ for any $R = r(x,y) {\wedge} p_1(y) {\to}
p_2(x)$ from $\mathcal{T}$,
\item $\mathit{typeD}_r(c_{p_1},c_{p_2})$  for any $R= p_1(x) {\wedge} r(x,y) {\to}
p_2(y)$ from $\mathcal{T}$, and
\item $\mathit{typeE}_r(c_{p_1},c_{p_2})$  for any $R = p_1(x) {\to} \exists
y.(r(x,y) {\wedge} p_2(y))$ from $\mathcal{T}$.
\end{enumerate}
Also, for $p_1,p_2 \in \mathsf{Pr}_1$, we let $F_{p_1p_2} =
\{\mathit{test}(a),\mathit{sub}(c_{p_1}),\mathit{super}(c_{p_2})\}$
\end{definition}

The following property now lists two straightforward observations.

\begin{property}\label{prop-HornALC-trafo}
$\mathcal{R}_\mathrm{fix}$ is a $\RwRgRfRrRoR$ rule set.
$F_\mathcal{T}$ can be computed in polynomial time and is of
polynomial size with respect to $\mathcal{T}$.
\end{property}

The next lemma establishes that subsumption in
Horn-$\mathcal{ALC}[\mathsf{Pr}_2]$ can be reduced to $CQ$
entailment w.r.t.~a fixed $\RwRgRfRrRoR$ rule set.

\begin{lemma}\label{lem-HornALC-to-wgfr1}
Let $\mathcal{T}$ be a reduced normalized
Horn-$\mathcal{ALC}[\mathsf{Pr}_2]$ terminology and let $p_1,p_2 \in
\mathsf{Pr}_1$. Then $\mathcal{T}\models \forall x(p_1(x) \to
p_2(x))$ if and only if $F_\mathcal{T}\cup
F_{p_1p_2},\mathcal{R}_\mathrm{fix}\models \mathit{match}$.
\end{lemma}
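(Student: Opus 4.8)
The plan is to reduce the lemma to the standard equivalence already exploited in the Corollary above, namely that $\mathcal{T}\models\forall x(p_1(x)\to p_2(x))$ holds if and only if $\{p_1(a)\},\mathcal{T}\models p_2(a)$ for a fresh constant $a$ (the backward half being generalization on the constant $a$, which does not occur in $\mathcal{T}$). It then suffices to show that the knowledge base $(F_\mathcal{T}\cup F_{p_1p_2},\mathcal{R}_\mathrm{fix})$ is a faithful \emph{meta-level} simulation of $(\{p_1(a)\},\mathcal{T})$ under the encoding that represents concept membership ``$t$ is in $p$'' by the atom $\mathit{in}(t,c_p)$ while leaving role atoms $r(s,t)$ untouched. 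I would prove the two directions by different means: the ``only if'' direction by a direct chase simulation, and the ``if'' direction by a model construction, invoking soundness and completeness of forward chaining (the cited derivation theorem) to pass between entailment and derivations.

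For the ``only if'' direction, I assume $\mathcal{T}\models\forall x(p_1(x)\to p_2(x))$, equivalently $p_2(a)\in\alpha_\infty(\{p_1(a)\},\mathcal{T})$, fix a $\mathcal{T}$-derivation $S$ from $\{p_1(a)\}$ producing $p_2(a)$, and translate it step by step into an $\mathcal{R}_\mathrm{fix}$-derivation $S'$ from $F_\mathcal{T}\cup F_{p_1p_2}$. I would maintain a term correspondence mapping the constant $a$ to $a$ and each null introduced by a type-(E) application in $S$ to the null introduced by the matching application of rule~5 in $S'$. The derivation $S'$ starts by applying rule~6 to $\mathit{test}(a)\wedge\mathit{sub}(c_{p_1})$ to obtain $\mathit{in}(a,c_{p_1})$, mirroring the seed atom $p_1(a)$; each subsequent application of a $\mathcal{T}$-rule of shape (A)--(E) to a term $t$ is matched by the corresponding rule 1--5, whose $\mathit{type}$-atom lies in $F_\mathcal{T}$ and whose membership and role premises are available by the induction hypothesis (with rule~5 generating the fresh successor for type (E)). A straightforward induction shows that $p(t)$ (resp.\ $r(s,t)$) is derived in $S$ exactly when $\mathit{in}(t,c_p)$ (resp.\ $r(s,t)$) is derived in $S'$; in particular $\mathit{in}(a,c_{p_2})$ is derived, and a final application of rule~7 to $\mathit{test}(a)\wedge\mathit{in}(a,c_{p_2})\wedge\mathit{super}(c_{p_2})$ yields $\mathit{match}$.

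For the ``if'' direction I would argue contrapositively and build a countermodel. Assuming $\mathcal{T}\not\models\forall x(p_1(x)\to p_2(x))$, I pick a model $N\models\mathcal{T}$ and an element $d_0\in p_1^N\setminus p_2^N$, and define a structure $M$ whose domain is that of $N$ together with one fresh element $c_p$ per unary predicate $p$, setting $a^M=d_0$, interpreting each role $r$ as in $N$, reading the $\mathit{type}$-predicates off $F_\mathcal{T}$, and putting $\mathit{in}^M=\{(e,c_p)\mid N\models p(e)\}$, $\mathit{test}^M=\{d_0\}$, $\mathit{sub}^M=\{c_{p_1}\}$, $\mathit{super}^M=\{c_{p_2}\}$, with $\mathit{match}$ false. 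A rule-by-rule check then shows $M\models\mathcal{R}_\mathrm{fix}$: rules 1--5 hold precisely because $N\models\mathcal{T}$ (the existential of rule~5 being witnessed by the role successor guaranteed by the relevant type-(E) axiom in $N$, and the subject of every firing $\mathit{in}$-premise necessarily being an $N$-element), rule~6 holds because $N\models p_1(d_0)$, and rule~7 holds vacuously since its premise would require $\mathit{in}(d_0,c_{p_2})$, i.e.\ $N\models p_2(d_0)$, which fails. As $M\models F_\mathcal{T}\cup F_{p_1p_2}\cup\mathcal{R}_\mathrm{fix}$ while $M\not\models\mathit{match}$, we conclude $F_\mathcal{T}\cup F_{p_1p_2},\mathcal{R}_\mathrm{fix}\not\models\mathit{match}$.

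The main obstacle is the bookkeeping of the term correspondence in the chase simulation of the ``only if'' direction: I must keep the existentially generated nulls on the two sides in lock-step so that role atoms and membership atoms match exactly throughout, and confirm that no spurious $\mathit{in}$-atoms beyond those mirroring $\mathcal{T}$'s consequences ever arise (which is guaranteed because $\mathit{in}$-atoms are produced solely by rules 1--6, each firing only in the presence of the corresponding $\mathit{type}$- or $\mathit{sub}$-atom). The model-theoretic ``if'' direction bypasses the chase entirely and reduces to routine verification, so the genuine care lies in making the forward simulation precise.
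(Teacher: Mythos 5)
Your proposal is correct, but only half of it follows the paper's route. Your ``if'' direction is essentially the paper's $\Leftarrow$ argument verbatim: both take a countermodel $N$ of the subsumption, adjoin fresh elements $c_p$, read the $\mathit{type}$ predicates off $F_\mathcal{T}$, define $\mathit{in}$ as the minimal relation satisfying $\forall x\,(p_i(x) \leftrightarrow \mathit{in}(x,c_{p_i}))$, and check rules 1--7 with $\mathit{match}$ false. For the ``only if'' direction, however, the paper is also purely model-theoretic and contrapositive: it takes a model of $F_\mathcal{T}\cup F_{p_1p_2}$ and $\mathcal{R}_\mathrm{fix}$ in which $\mathit{match}$ fails and extracts from it, via the same defining equation, a model of $\mathcal{T}$ with an element in $p_1$ but not $p_2$. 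So the paper's proof is uniform --- both directions are mirror-image model transformations hinging on one biconditional --- and correspondingly short. You instead prove this direction proof-theoretically, by first reducing the subsumption to $\{p_1(a)\},\mathcal{T}\models p_2(a)$ for a fresh $a$ (the same reformulation the paper uses in the preceding Corollary), then simulating a $\mathcal{T}$-derivation step by step with rules 1--6 under the atom translation $p(t)\mapsto \mathit{in}(t,c_p)$, finishing with rule~7. This is valid: each rule shape (A)--(E) lines up with exactly one rule of $\mathcal{R}_\mathrm{fix}$ whose $\mathit{type}$ premise sits in $F_\mathcal{T}$, and your null correspondence for type-(E) applications is the only bookkeeping needed (the ``exactly when'' you claim is stronger than required --- one inclusion suffices for this direction, and your model-theoretic treatment of the converse makes the no-spurious-atoms worry moot). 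What your route buys is constructiveness --- it exhibits the actual forward-chaining derivation of $\mathit{match}$, making the operational faithfulness of the encoding explicit --- at the cost of invoking extra machinery (the fresh-constant equivalence and the soundness/completeness theorem for $\mathcal{R}$-derivations) that the paper's symmetric two-countermodel argument avoids.
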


\begin{proof}
We successively prove both directions of the equivalence. 
\begin{itemize}
\item[$\Rightarrow$] Assume to the contrary that
$F_\mathcal{T}\cup F_{p_1p_2},\mathcal{R}_\mathrm{fix}\models
\mathit{match}$ does not hold, thus we find a model of
$F_\mathcal{T}\cup F_{p_1p_2},\mathcal{R}_\mathrm{fix}$ where
$\mathit{match}$ is false. We then can use this model to construct a
model of $\mathcal{T}$ not satisfying $\forall x(p_1(x) \to p_2(x))$
using the following defining equation for every $p_i\in
\mathsf{Pr}_1$: $$\forall x (p_i(x) \leftrightarrow
\mathit{in}(x,c_{p_i})).$$ It can be readily checked that this model
indeed satisfies $\mathcal{T}$. Moreover it satisfies $p_1(a)$ but
not $p_2(a)$, therefore $\forall x(p_1(x) \to p_2(x))$ does not
hold.

\item[$\Leftarrow$] Assume to the contrary that
$\mathcal{T}\models \forall x(p_1(x) \to p_2(x))$ does not hold,
i.e. we find a model of $\mathcal{T}$ not satisfying $\forall
x(p_1(x) \to p_2(x))$ (i.e., in this model there must be one element
$e$ in the extension of $p_1$ but not of $p_2$). We now use this
model to define a model of $F_\mathcal{T}\cup
F_{p_1p_2},\mathcal{R}_\mathrm{fix}$ by adding to the domain a new
element $c_{p_i}$ for every $p_i\in \mathsf{Pr}_1$ and let the
interpretation function on the eponymous constants be the identity.
The interpretation of the predicates
$\mathit{typeA},\ldots,\mathit{typeE}$ is as explicitly stated in
$F_\mathcal{T}$, the interpretation of the $\mathit{in}$ predicate
is defined as the minimal set such that $$\forall x (p_i(x)
\leftrightarrow \mathit{in}(x,c_{p_i}))$$ holds for all $p_i\in
\mathsf{Pr}_1$. Finally, we let $\mathit{sub}$ hold for $c_{p_1}$,
we let $\mathit{super}$ hold for $c_{p_2}$, and we let
$\mathit{test}$ hold for $a$, where the constant $a$ is mapped to
the domain element $e$ mentioned above. Then it can be easily
checked that the obtained model satisfies all of $F_\mathcal{T}\cup
F_{p_1p_2},\mathcal{R}_\mathrm{fix}$ but not $\mathit{match}$.
\end{itemize}
\end{proof}

\begin{theorem}[Data Complexity of $\RwRgRfRrRoR$ Rules]
\textsc{BCQ-Entailment} under constant-free $\RwRgRfRrRoR$
rules is \ExpTime-hard for data complexity.
\end{theorem}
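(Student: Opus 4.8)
The plan is to assemble the three ingredients already prepared in this subsection into a single polynomial-time reduction that respects the data-complexity regime. First I would fix, once and for all, the finite set $\mathsf{Pr}_2$ whose existence is guaranteed by Theorem~\ref{thm-HornALC}, so that unary subsumption checking for reduced normalized Horn-$\mathcal{ALC}[\mathsf{Pr}_2]$ terminologies is \ExpTime-hard. Making this choice up front is what turns the construction into a data-complexity argument: fixing $\mathsf{Pr}_2$ also fixes the rule set $\mathcal{R}_\mathrm{fix}$, since the latter depends only on $\mathsf{Pr}_2$ (the index $r$ ranges precisely over $\mathsf{Pr}_2$), and it fixes the query, which is the single nullary atom $\mathit{match}$ independent of any input.

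Next I would read Lemma~\ref{lem-HornALC-to-wgfr1} as a many-one reduction: the map sending a terminology $\mathcal{T}$ together with a pair $p_1,p_2 \in \mathsf{Pr}_1$ to the knowledge base $(F_\mathcal{T}\cup F_{p_1p_2},\mathcal{R}_\mathrm{fix})$ with query $\mathit{match}$ sends positive instances of unary subsumption to positive instances of \textsc{BCQ-Entailment}, and conversely. The crucial observation is that throughout this family of output instances only the fact part $F_\mathcal{T}\cup F_{p_1p_2}$ varies, while both the rule set $\mathcal{R}_\mathrm{fix}$ and the query $\mathit{match}$ remain constant; this is exactly the setting measured by data complexity. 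Property~\ref{prop-HornALC-trafo} then supplies the two remaining bookkeeping facts: that $\mathcal{R}_\mathrm{fix}$ is indeed a $\RwRgRfRrRoR$ rule set, and that $F_\mathcal{T}$ is computable in polynomial time and of polynomial size in $\mathcal{T}$ (the additional three atoms contributed by $F_{p_1p_2}$ being trivially polynomial). A polynomial-time, data-complexity reduction from an \ExpTime-hard problem yields \ExpTime-hardness of the target, which is the claim.

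Finally I would discharge the \emph{constant-free} qualifier by inspecting the seven rules of $\mathcal{R}_\mathrm{fix}$: none of them mentions a constant, since every argument occurring in a rule is a variable, with all constants (the $c_{p_i}$, the element $a$, and so on) confined to the data $F_\mathcal{T}\cup F_{p_1p_2}$. Hence the hardness already holds for constant-free $\RwRgRfRrRoR$ rules.

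Because the two genuinely technical components---the correctness of the semantic translation in Lemma~\ref{lem-HornALC-to-wgfr1} and the \ExpTime-hardness of the source problem in Theorem~\ref{thm-HornALC}---are already established, I do not expect a real mathematical obstacle. The only point demanding care is confirming that the reduction truly lives in the data-complexity regime, i.e.\ that $\mathcal{R}_\mathrm{fix}$ and $\mathit{match}$ are genuinely independent of the input terminology rather than secretly scaling with it. Fixing $\mathsf{Pr}_2$ at the outset is precisely what guarantees this, so I would make that dependence explicit in the write-up to forestall any confusion about which parts of the input are permitted to vary.
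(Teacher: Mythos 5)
Your proposal is correct and follows essentially the same route as the paper's proof: invoking Theorem~\ref{thm-HornALC} for \ExpTime-hardness of the source problem, Lemma~\ref{lem-HornALC-to-wgfr1} as the many-one reduction, and Property~\ref{prop-HornALC-trafo} for polynomiality and the $\RwRgRfRrRoR$ membership, while observing that $\mathcal{R}_\mathrm{fix}$ and the query $\mathit{match}$ stay uniform so the reduction lives in the data-complexity regime. Your additional explicit check that $\mathcal{R}_\mathrm{fix}$ is constant-free is a welcome touch the paper leaves implicit.
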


\begin{proof}
By Theorem~\ref{thm-HornALC}, unary subsumption checking for reduced
normalized Horn-$\mathcal{ALC}[\mathsf{Pr}_2]$ terminologies is
\ExpTime-hard. By Lemma~\ref{lem-HornALC-to-wgfr1} and thanks to
Proposition~\ref{prop-HornALC-trafo}, any such problem can be
polynomially reduced to a CQ entailment problem
$F,\mathcal{R}\models q$ for uniform $\mathcal{R}$ and $q$.
Consequently, the data complexity of conjunctive query entailment
under $\RwRgRfRrRoR$ rules must be \ExpTime-hard as well.
\end{proof}

\subsection{Combined Complexity of Frontier-One Rules is
\ExpExpTime-hard}


In this section, we show that \RfRrRoR rules are \ExpExpTime{}-hard
for combined complexity no matter whether predicate arity is bounded
or not. Our proof reuses the general strategy and many technical
tricks from a construction used to show \ExpExpTime{}-hardness for
CQ entailment in the DL $\mathcal{ALCI}$ from \citep{Lutz-DL-07}.
Still, many adaptations were done in order to make the construction
fit our language fragment and to simplify unnecessarily complicated
parts.

We prove the desired result via a reduction of CQ entailment w.r.t.
\RfRrRoR rules to the word problem of an alternating Turing machine
with exponential space, which will be formally introduced next.

\begin{definition}
An \emph{alternating Turing machine} $\mathcal{M}$, short ATM, is a
quadruple $(\mathfrak{Q},\Gamma,q_0,\Delta)$ where:
\begin{itemize}
\item $\mathfrak{Q}$ is the set of \emph{states}\footnote{$\mathfrak{Q}$ (with possible subscripts) is used for state sets in order to avoid a notational clash with conjunctive queries denoted by $Q$.}, which can be partitioned into
existential states $\mathfrak{Q}_\exists$ and universal states
$\mathfrak{Q}_\forall$.
\item $\Gamma$ is a finite \emph{alphabet}, containing the \emph{blank
symbol} $\Box$,
\item $q_0\in \mathfrak{Q}$ is the \emph{initial state},
\item $\Delta \subseteq \mathfrak{Q}\times \Sigma \times \mathfrak{Q} \times \Sigma
\times \{L,R\}$ is the \emph{transition relation}.
\end{itemize}

A \emph{configuration} of an ATM is a word $wqw'$ where $w,w'\in
\Gamma^*$ and $q\in \mathfrak{Q}$. The \emph{successor
configurations} $\Delta(wqw')$ of a configuration $wqw'$ are defined as:

$$
\begin{array}{ll}

\Delta(wqw') = & \{ vq'\gamma''\gamma'v' \mid w=v\gamma'', w'=\gamma
v', (\gamma,q,\gamma',q',L) \in \Delta \}\\ & \ \ \cup\ \{
w\gamma'q'v' \mid w'=\gamma v', (\gamma,q,\gamma',q',R) \in \Delta \}\\
& \ \ \cup\ \{ w\gamma'q' \mid w'=\epsilon, (\Box,q,\gamma',q',R) \in
\Delta \}.
\end{array}
$$

The set of \emph{indirect successors} of a configuration $wqw'$ is
the smallest set of configurations that contains $wqw'$ and that is
closed under the successor relation.

A \emph{halting configuration} is of the form $wqw'$ with
$\Delta(wqw')=\emptyset$. The set of \emph{accepting configurations}
is the smallest set of configurations such that:
\begin{itemize}
\item
$wqw'$ is accepting if there exists $vq'v'\in \Delta(wqw')$ is accepting in
case of $q\in \mathfrak{Q}_\exists$,
\item
$wqw'$ is accepting if all $vq'v'\in \Delta(wqw')$ are accepting in
case of $q\in \mathfrak{Q}_\forall$.
\end{itemize}

An ATM is said to \emph{accept a word} $\mathfrak{w}\in \Gamma^*$,
if $q_0w$ is accepting.

An ATM is \emph{exponentially space bounded} if for any
$\mathfrak{w}\in \Gamma^*$, every indirect successor $vqv'$ of
$q_0\mathfrak{w}$ satisfies that $|vv'|<2^{|\mathfrak{w}|}$.
\end{definition}

According to \cite{ATM}, there is an exponentially space bounded ATM
$\mathcal{M}^*$, whose word problem is \ExpExpTime-hard. In order to
simplify our argument, we will, however, not directly simulate a run
of this Turing machine on a word $\mathfrak{w}$. Rather, given
$\mathcal{M}^*$ and a word $\mathfrak{w}$ it is straightforward to
construct an ATM $\mathcal{M}^*_\mathfrak{w}$ such that
$\mathcal{M}^*$ accepts $\mathfrak{w}$ exactly if
$\mathcal{M}^*_\mathfrak{w}$ accepts the empty word
$\epsilon$.\footnote{Such a machine can be easily obtained: add new
states and transitions that first write $\mathfrak{w}$ to the tape,
second go back to the starting position, and third switch to the
initial state of the original Turing machine.} Clearly, the size of
$\mathcal{M}^*_\mathfrak{w}$ is polynomially bounded by
$n=|\mathfrak{w}|$.

In the following, we will thus show how, given an exponentially
space bounded ATM $\mathcal{M}$ and a word $\mathfrak{w}$, we can
construct a fact $F$, rule set $\mathcal{R}$ and query ${Q}$ -- the
size of all being polynomially bounded by $n$ -- such that
$F,\mathcal{R}\models q$ iff
$\mathcal{M}^*_\mathfrak{w}=(\mathfrak{Q},\Gamma,q_0,\Delta)$
accepts the empty word. Thereby, the minimal model of $F$ and
$\mathcal{R}$ will contain elements representing the initial and all
its (indirect) successor configurations. These configurations will
themselves be endowed with a tree structure that stores the content
of the exponentially bounded memory. The most intricate task to be
solved will be to model memory preservation from one configuration
to its successors.

We start by introducing some predicates and their intuitive meaning:

\begin{itemize}
\item
$\mathit{conf}$: unary predicate to distinguish elements
representing configurations from other elements;
\item
$\mathit{firstconf}$: unary predicate to denote the initial
configuration;
\item
$\mathit{trans}_\delta$ with $\delta\in \Delta$: set of binary
predicates. $\mathit{trans}_\delta$ connects a configuration with
its successor configuration that was introduced due to $\delta$;
\item
$\mathit{state}_q$ with $q\in \mathfrak{Q}$: set of unary predicates
indicating the state of the configuration $wqw'$;
\item
$\mathit{symbol}_\gamma$ with $\gamma \in \Gamma$: set of unary
predicates indicating the symbol at the heads current position, i.e.
$\mathit{symbol}_\gamma$ holds for the configuration $wq\gamma w'$;
\item
$\mathit{accepting}$: unary predicate indicating if a configuration
is accepting;
\item
$\mathit{wire}$ binary predicate used later for memory operations;
\item
$\mathit{fw}$ unary predicate used later for memory operations.
\end{itemize}

We are now ready to provide first constituents of the fact $F$ and
of the rule set $\mathcal{R}$. We let $F$ contain the facts:

\begin{eqnarray}
\{\mathit{conf}(init),\mathit{firstconf}(init),\mathit{state}_{q_0}(init)\}.
\end{eqnarray}

For every $\delta=(q,\gamma,q',\gamma',D)\in \Delta$ (with $D\in
\{L,R\}$), let $\mathcal{R}$ contain:

\begin{eqnarray}
\mathit{state}_q(x) {\wedge} \mathit{symbol}_\gamma(x) &\!\!\!\! \to
\!\!\!\!& \mathit{trans}_\delta(x,y) {\wedge} \mathit{wire}(x,u)
{\wedge} \mathit{fw}(u) {\wedge} \mathit{wire}(u,v) {\wedge}
\mathit{fw}(v)
{\wedge} \mathit{wire}(v,y)\\
\mathit{trans}_\delta(x,y) &\!\!\!\! \to \!\!\!\!& \mathit{conf}(y)
{\wedge} \mathit{state}_{q'}(y)
\end{eqnarray}

Clearly, by means of these rules, we create the successor
configurations $y$ reached from a transition predicate from given
configuration $x$. The additionally introduced sequence
``$\stackrel{wire}{\longrightarrow} u
\stackrel{wire}{\longrightarrow} v
\stackrel{wire}{\longrightarrow}$'' between $x$ and $y$ will come
handy later for memory preservation purposes.
Figure~\ref{fig-conftree} displays the structure of the
configuration tree thus constructed.

\begin{figure}
\begin{center}
\includegraphics[scale=0.9]{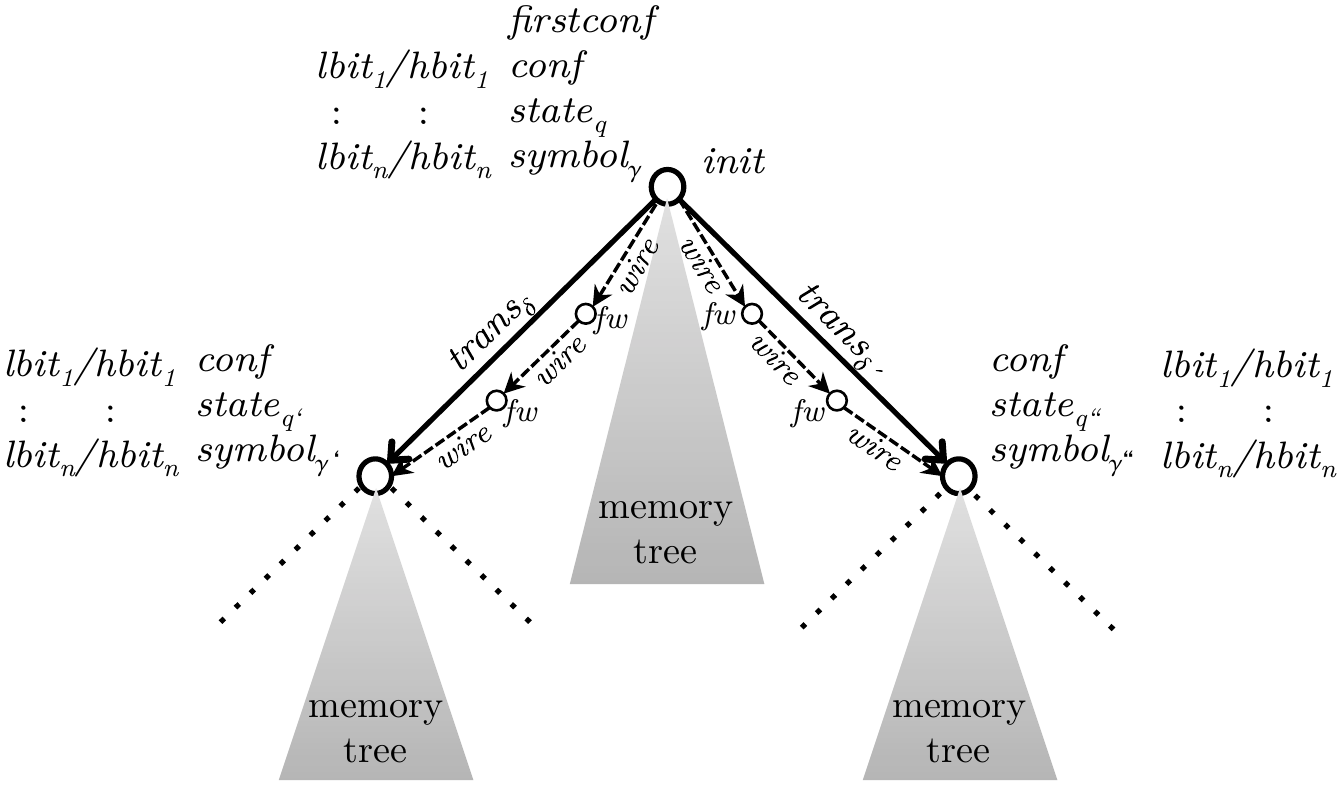}
\end{center}
\caption{Structure of the configuration tree in the constructed
model.\label{fig-conftree}}
\end{figure}

Next we take care of the implementation of the acceptance condition
for configurations. For every $q\in \mathfrak{Q}_\exists$ and every
$\delta=(q,\gamma,q',\gamma',D)\in \Delta$, we add the rule:

\begin{eqnarray}
\mathit{trans}_\delta(x,y) \wedge \mathit{accept}(y) \to
\mathit{accept}(x).
\end{eqnarray}

For every $q\in \mathfrak{Q}_\forall$ and $\gamma\in
\Gamma$, we add the rule:

\begin{eqnarray}
\bigwedge_{\delta=(q,\gamma,q',\gamma',D)\in \Delta}
\mathit{trans}_\delta(x,y_\delta) \wedge \mathit{accept}(y_\delta)
\to \mathit{accept}(x).
\end{eqnarray}

This way, as required, acceptance is propagated backward from
successors to predecessors.

Consequently, the query to be posed against the ``computation
structure'' described by our rule set should ask if the initial
configuration is accepting, i.e.:

\begin{eqnarray}
{Q} = \mathit{accept}(init).
\end{eqnarray}

Next, we prepare the implementation of the memory access. To this
end, we encode the position of the head (that is, the length of the
word $w$) in a configuration $wqw'$ as an $n$-digit binary number
(note that this allows us to address $2^n$ positions which is
sufficient for the required exponential memory). We will use unary
predicates $\mathit{hbit}_k$, $\mathit{lbit}_k$ with $1\leq k \leq
n$ for the following purpose: $\mathit{hbit}_k$ holds for an element
representing a configuration $wqw'$, if the $k$th bit of the
configuration's head position (i.e. the number $|w|$) expressed in
binary format is $1$. If the bit is $0$, then $\mathit{lbit}_k$
holds instead.

Clearly, the initial position of the head is $0$ (as we start from
configuration $q_0\mathfrak{w}$), thus for the initial configuration
(represented by $init$) all bits must be $0$. Hence we let
$F_{\mathcal{M},\mathfrak{w}}$ contain $\mathit{lbit}_k(init)$ for
all $1\leq k \leq n$.

In the course of a state transition
$\delta=(q,\gamma,q',\gamma',D)\in \Delta$, the head's position may
be increased by one (in case $D=R$) or decreased by 1 (in case
$D=L$). The next rules implement this behavior, hence given a
configuration's head position, they effectively compute the head
position of this configuration's direct successors. For every
$\delta=(q,\gamma,q',\gamma',D)\in \Delta$ with $D=R$ we let
$\mathcal{R}_{\mathcal{M},\mathfrak{w}}$ contain the rules (where
$k$ ranges from $1$ to $n$ and $m$ ranges from $1$ to $k$):

\begin{eqnarray}
\mathit{trans}_\delta(x,y)\wedge \bigwedge_{l\leq k}hbit_l(x)
& \to lbit_k(y)\\
\mathit{trans}_\delta(x,y)\wedge lbit_k(x) \wedge \bigwedge_{l <
k}hbit_l(x)
& \to hbit_k(y)\\
\mathit{trans}_\delta(x,y)\wedge lbit_k(x) \wedge lbit_m(x)
& \to lbit_k(y)\\
\mathit{trans}_\delta(x,y)\wedge hbit_k(x) \wedge lbit_m(x) & \to
hbit_k(y)
\end{eqnarray}

and for every $\delta=(q,\gamma,q',\gamma',D)\in \Delta$ with $D=L$
we let $\mathcal{R}_{\mathcal{M},\mathfrak{w}}$ contain the rules
(ranges of $k$ and $m$ as above):

\begin{eqnarray}
\mathit{trans}_\delta(x,y)\wedge \bigwedge_{l\leq k}lbit_l(x)
& \to hbit_k(y)\\
\mathit{trans}_\delta(x,y)\wedge hbit_k(x) \wedge \bigwedge_{l <
k}lbit_l(x)
& \to lbit_k(y)\\
\mathit{trans}_\delta(x,y)\wedge hbit_k(x) \wedge hbit_m(x)
& \to hbit_k(y)\\
\mathit{trans}_\delta(x,y)\wedge lbit_k(x) \wedge hbit_m(x) & \to
lbit_k(y)
\end{eqnarray}

In the next steps, we need to implement the exponential size memory
of our Turing machine. At the same time, the memory should be
``accessible'' by polynomial size rule bodies. Thus we organize the
memory as a binary tree of polynomial depth having exponentially
(that is $2^n$) many leaves. Thus, for every configuration element,
we create a tree of depth $n$ having the configuration element as
root and where the configuration's tape content is stored in the
leaves. We use the following vocabulary:

\begin{itemize}
\item
$\mathit{level}_k$ with $0\leq k \leq n$: set of unary predicates
stating for each node inside the memory tree its depth.
\item
$\mathit{leftchild}$, $\mathit{rightchild}$: the two (binary) child
predicates of the memory tree.
\item
$\mathit{child}$: a (binary) predicate subsuming the two above.
\item
$\mathit{entry}_\gamma$ with $\gamma \in \Gamma$: set of unary
predicates indicating for every leaf of the memory tree the symbol
stored there.
\end{itemize}

We now create the memory tree level by level (with $k$ ranging from
$1$ to $n$):

\begin{eqnarray}
\mathit{conf}(x)
& \to & \mathit{level}_0(x)\\
\mathit{level}_{k-1}(x) & \to & \mathit{leftchild}(x,y) \wedge \mathit{child}(x,y) \wedge \mathit{wired}(x,y) \wedge \mathit{wired}(y,x) \wedge \mathit{level}_{k}(y)\\
\mathit{level}_{k-1}(x) & \to & \mathit{rightchild}(x,y) \wedge \mathit{child}(x,y) \wedge \mathit{wired}(x,y) \wedge \mathit{wired}(y,x) \wedge \mathit{level}_{k}(y)\\
\end{eqnarray}

The leaf nodes of the memory tree thus created (i.e., the elements
satisfying $\mathit{level}_n$) will be made to carry two types of
information: (a) the current symbol stored in the corresponding tape
cell and (b) the tape cell's ``address'' in binary encoding. The
latter will be realized as follows: if the $k$th bit of the binary
representation of the address is clear, the leaf node $\nu$ will be
extended by a structure containing two newly introduced elements
$\nu_1$ and $\nu_2$ which will be connected with $\nu$ via the
following binary predicates: $\mathit{start}_k(\nu,\nu)$,
$\mathit{wired}(\nu,\nu_1)$, $\mathit{wired}(\nu_1,\nu_2)$, and
$\mathit{end}_k(\nu_2,\nu)$. In case the $k$th bit is set, we will
also introduce new elements $\nu_1$ and $\nu_2$ but they will be
connected with $\nu$ in a different way, namely:
$\mathit{start}_k(\nu,\nu_1)$, $\mathit{wired}(\nu_1,\nu)$,
$\mathit{wired}(\nu,\nu_2)$, and $\mathit{end}_k(\nu_2,\nu)$. The
reason for this peculiar way of encoding the address information
will become apparent in the sequel. Figure~\ref{fig-memorytree}
depicts the structure of the memory tree constructed under each
configuration tree.

\begin{figure}
\begin{center}
\includegraphics[scale=.8]{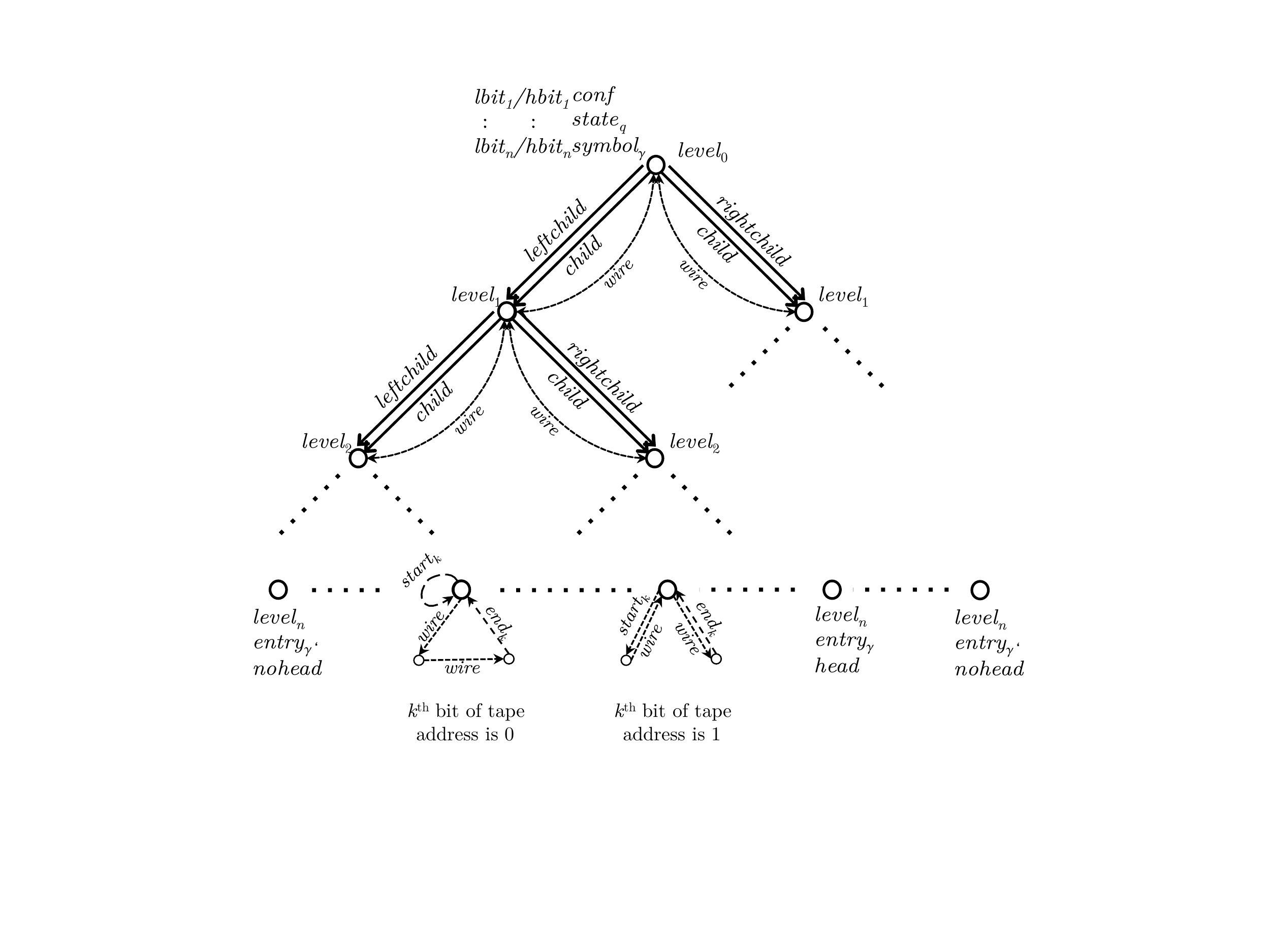}
\end{center}
\caption{Structure of the memory tree in the constructed
model.\label{fig-memorytree}}
\end{figure}

The following rules realize the aforementioned address
representation, exploiting the fact that the $k$th address bit will be $0$ if
the considered leaf node's ancestor on level $k-1$ is connected with
the ancestor on level $k$ via $\mathit{leftchild}$ and it will be
$0$ if the connection is via $\mathit{rightchild}$. Hence we let
$\mathcal{R}_{\mathcal{M},\mathfrak{w}}$ contain the rules (with $k$
ranging from $1$ to $n$ as above):

\begin{eqnarray}
\begin{array}{@{}rll@{}}
\mathit{leftchild}(x_{k-1},x_k) \wedge
\bigwedge_{i=k+1}^{n}(\mathit{child}(x_{i-1},x_i)) \wedge
\mathit{level}_n(x_n) & \to & \\
& & \hspace{-17em} \mathit{start}_k(x_n,x_n) \wedge
\mathit{wired}(x_n,x'_n) \wedge \mathit{wired}(x'_n,x''_n)\wedge \mathit{end}_k(x''_n,x_n)\\
\end{array}\\
\begin{array}{@{}rll@{}}
\mathit{rightchild}(x_{k-1},x_k) \wedge
\bigwedge_{i=k+1}^{n}(\mathit{child}(x_{i-1},x_i)) \wedge
\mathit{level}_n(x_n) & \to & \\
& & \hspace{-17em} \mathit{start}_k(x_n,x'_n) \wedge
\mathit{wired}(x'_n,x_n) \wedge \mathit{wired}(x_n,x''_n)\wedge \mathit{end}_k(x''_n,x_n)\\
\end{array}
\end{eqnarray}

One of the purposes of the previous construction is to mark in each
memory tree the leaf corresponding to the current head position by a
unary predicate $\mathit{head}$ and all other leaves by another unary
predicate $\mathit{nohead}$. To this end, we encode the head
position stored in the configuration elements via the predicates
$\mathit{lbit}_k$ and $\mathit{hbit}_k$ in a ``structural way'',
similar to our encoding in the leaves:

\begin{eqnarray}
\mathit{lbit}_k(x) & \to & \mathit{rootstart}_k(x,x)\\
\mathit{hbit}_k(x) & \to & \mathit{rootstart}_k(x,x') \wedge
\mathit{wired}(x',x)
\end{eqnarray}

For the assignment of $\mathit{head}$ and $\mathit{nohead}$ to leaf
nodes, we now exploit two facts. First, the $k$th bit of the head
address -- stored in a configuration element $\nu_c$ -- and the
$k$th bit of the address of a leaf node $\nu_l$ of the same
configuration element coincide, if there are nodes
$\nu_1,\ldots,\nu_{n-1}$ such that there is a path $$\nu_c
\stackrel{\mathit{rootstart}_k}{\longrightarrow}\nu_1\stackrel{\mathit{wired}}{\longrightarrow}\ldots
\stackrel{\mathit{wired}}{\longrightarrow}
\nu_{n-1}\stackrel{\mathit{end}_k}{\longrightarrow} \nu_{l};$$
moreover, no other two nodes are connected by such a path. Secondly,
the $k$th bit of the two nodes differ if there are nodes
$\nu_1,\ldots,\nu_{n}$ such that there is a path $$\nu_c
\stackrel{\mathit{rootstart}_k}{\longrightarrow}\nu_1\stackrel{\mathit{wired}}{\longrightarrow}\ldots
\stackrel{\mathit{wired}}{\longrightarrow}
\nu_{n}\stackrel{\mathit{end}_k}{\longrightarrow} \nu_{l};$$
moreover, no other two nodes are connected by such a path (to see
this, note that $\mathit{wired}$ goes both ways inside the tree
whence it is possible to make a back-and-forth step where
necessary).

This allows us to assign $\mathit{head}$ to all leaf nodes where a
path of the first kind exists \textbf{for every} $k$ as expressed by
the following rule:

\begin{eqnarray}
\bigwedge_{k=1}^{n}\left(
\mathit{rootstart}_k(x,x_{k,1})\wedge\left(\bigwedge_{i=1}^{n-2}\mathit{wired}(x_{k,i},x_{k,i+1})
\right)\wedge \mathit{end}_k(x_{k,n-1},y)\right)\to
\mathit{head}(y).
\end{eqnarray}

Likewise, we can assign $\mathit{nohead}$ to all leaf nodes where a
path of the second kind exists \textbf{for some} $k$, thus we add
for every $k$ ranging from $1$ to $n$ a separate rule of the
following kind:

\begin{eqnarray}
\mathit{rootstart}_k(x,x_{1})\wedge\left(\bigwedge_{i=1}^{n-1}\mathit{wired}(x_{i},x_{i+1})
\right)\wedge \mathit{end}_k(x_{n},y)\to \mathit{nohead}(y).
\end{eqnarray}

Now that we have an indicator of the head position in the memory
tree, we can enforce that every configuration element is indeed
assigned the $\mathit{symbol}_\gamma$ predicate whenever the symbol
$\gamma$ is found at the current head position:

\begin{eqnarray}
\left(\bigwedge_{i=0}^{n-1}\mathit{child}(x_{i},x_{i+1})
\right)\wedge \mathit{symbol}_\gamma(x_{n})\wedge head(x_{n})\to
\mathit{symbol}_\gamma(x_{0}).
\end{eqnarray}

The last bit of the alternating Turing machine functionality that
needs to be taken care of is memory evolution: a symbol stored in
memory changes according to the transition relation if and only if the head is
at the corresponding position. In our encoding
this means that for all $\mathit{nohead}$-assigned leaf nodes of a
configuration's memory tree, their stored symbol has to be
propagated to the corresponding leaf nodes of all direct successors'
memory trees. Again we exploit structural properties to connect the
corresponding leaf nodes of two subsequent configurations' memory
trees:

Let $\nu_c$ and $\nu'_c$ be two configuration elements such that
$\nu'_c$ represents a direct successor of $\nu_c$. Let $\nu_l$ be a
leaf node of $\nu_c$'s memory tree and let $\nu'_l$ be a leaf node
of $\nu'_c$'s memory tree. Let the $k$th bit of $\nu_l$'s address
and the $k$th bit of $\nu'_l$'s address coincide. Then -- and only
then -- there are nodes $\nu_1,\ldots,\nu_{2n+6}$ such that there is
a path $$\nu_l
\stackrel{\mathit{start}_k}{\longrightarrow}\nu_1\stackrel{\mathit{wired}}{\longrightarrow}\ldots
\stackrel{\mathit{wired}}{\longrightarrow}
\nu_{2n+6}\stackrel{\mathit{end}_k}{\longrightarrow} \nu'_{l}$$
where $\mathit{fw}$ holds for $\nu_{n+3}$.

This justifies to transfer the stored symbol from any non-head-leaf
to all leaf nodes to which it is simultaneously connected by such
paths \textbf{for every} $k$:

\begin{eqnarray}
\begin{array}{rll}
\displaystyle \bigwedge_{k=1}^{n}\!\!\left(\!\!
\mathit{start}_k(x,x_{k,1})\wedge\!\!\left(\bigwedge_{i=1}^{2n+5}\!\!\mathit{wired}(x_{k,i},x_{k,i+1})
\!\!\right)\wedge \mathit{end}_k(x_{k,2n+6},y)\!\!\right)\wedge & & \\
\displaystyle\left(\bigwedge_{k=1}^{n}
\mathit{fwd}(x_{k,n+3})\right) \wedge \mathit{symbol}_\gamma(x)
\wedge \mathit{nohead}(x) & \to & \mathit{symbol}_\gamma(y).
\end{array}
\end{eqnarray}

Of course, we also need to take care to assign the proper symbol
(which is determined by the transition by which the current
configuration has been reached) to the leaf node of the previous
configuration's head position. To this end, we add for every
$\delta=(q,\gamma,q',\gamma',D)\in \Delta$ the rule

\begin{eqnarray}
\begin{array}{r}
\displaystyle \mathit{head}(x)\wedge \!\!
\bigwedge_{k=1}^{n}\!\!\left(\!\!
\mathit{start}_k(x,x_{k,1})\wedge\!\!\left(\bigwedge_{i=1}^{2n+5}\mathit{wired}(x_{k,i},x_{k,i+1})
\!\!\right)\wedge \mathit{end}_k(x_{k,2n+6},z_n)\!\!\right) \wedge \hspace{1cm}\\
\displaystyle \left(\bigwedge_{k=1}^{n}
\mathit{fwd}(x_{k,n+3})\right) \wedge
\mathit{trans}_\delta(z,z_0)\wedge
\left(\bigwedge_{i=0}^{n-1}\mathit{child}(z_{i},z_{i+1}) \right)\to
\mathit{symbol}_\gamma(z_n).
\end{array}
\end{eqnarray}

Finally, we have to ensure that the initial configuration and its
memory tree carry all the necessary information. We have to
initialize the head position address to $0$ by adding to $F$ the facts

\begin{eqnarray}
\{lbit_1(init),\ldots,lbit_n(init)\}.
\end{eqnarray}

Moreover, all tape cells initially contain the blank symbol
$\Box$, which we achieve by extending $\mathcal{R}$ by the rule

\begin{eqnarray}
\mathit{firstconf}(x_0)\wedge
\left(\bigwedge_{k=0}^{n-1}\mathit{child}(x_{k},x_{k+1})\right) \to
\mathit{symbol}_{\Box}(x_n).
\end{eqnarray}

Concluding, we have just built $F$, $\mathcal{R}$ and ${Q}$ with the
desired properties. Moreover, $\mathcal{R}$ consists of only
\RfRrRoR rules and does not contain any constant. This concludes our
argument that the combined complexity of CQ entailment over \RfRrRoR
rules is \ExpExpTime{}-hard, even in the case where no constants
show up in the rules.

\begin{theorem}
Conjunctive query entailment for constant-free \RfRrRoR rules with
bounded predicate arity is \ExpExpTime{}-hard.
\end{theorem}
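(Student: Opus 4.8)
The plan is to argue that the construction of $F$, $\mathcal{R}$ and $Q=\mathit{accept}(init)$ laid out above constitutes a correct polynomial reduction, and then to observe that bounded arity comes for free. First I would reduce entailment to membership in the canonical model: since $\mathcal{R}$ consists of \RfRrRoR rules --- a subclass of \RgRbRtRsR --- forward chaining yields a universal model $\alpha_\infty(F,\mathcal{R})$, so that $F,\mathcal{R}\models Q$ holds exactly if $\mathit{accept}(init)\in\alpha_\infty(F,\mathcal{R})$, the query being the single ground atom $\mathit{accept}(init)$. It therefore suffices to show that this atom is derivable if and only if $\mathcal{M}=\mathcal{M}^*_\mathfrak{w}$ accepts the empty word.

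The core of the argument is a structural faithfulness lemma, proved by induction along the configuration tree of Figure~\ref{fig-conftree}. I would show that the elements satisfying $\mathit{conf}$ correspond to the (indirect) successor configurations of $q_0\epsilon$, and that for each such element $\nu$ the unary predicates faithfully encode the associated configuration: $\mathit{state}_q$ records the ATM state, the predicates $\mathit{hbit}_k/\mathit{lbit}_k$ record the binary address of the head (with the bit-update rules computing each direct successor's address correctly from its parent's), and the memory tree of depth $n$ below $\nu$ stores the tape contents in its $2^n$ leaves. The crucial sub-claims concern the addressing and symbol machinery: that $\mathit{head}$ (resp.\ $\mathit{nohead}$) is assigned to a leaf exactly when \emph{every} (resp.\ \emph{some}) address bit of that leaf agrees with (resp.\ differs from) the stored head address, and that for non-head leaves the stored $\mathit{symbol}_\gamma$ is propagated unchanged to the corresponding leaf of every direct successor, while the head leaf of the predecessor receives the symbol dictated by the applied transition.

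Given the faithfulness lemma, acceptance correspondence follows by a second, well-founded induction mirroring the inductive definition of the accepting configurations: the rules deriving $\mathit{accept}$ exactly implement the existential and universal backward-propagation conditions, so $\mathit{accept}(\nu)\in\alpha_\infty(F,\mathcal{R})$ holds iff the configuration represented by $\nu$ is accepting. Instantiating at $init$ then gives $\mathit{accept}(init)\in\alpha_\infty(F,\mathcal{R})$ iff $q_0\epsilon$ is accepting iff $\mathcal{M}$ accepts $\epsilon$. Finally, $F$, $\mathcal{R}$ and $Q$ are all of size polynomial in $n$, $\mathcal{R}$ is \RfRrRoR and constant-free, and every predicate appearing in the construction is unary or binary; hence predicate arity is bounded by $2$ independently of $n$, and the reduction witnesses \ExpExpTime-hardness even in the bounded-arity case.

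The main obstacle will be verifying the memory-preservation sub-claim precisely. The delicate point is the encoding of address-bit comparison by $\mathit{wired}$-paths: I must show that a leaf of a configuration and a leaf of its successor share their $k$th address bit if and only if there is a path of the prescribed length $2n+6$ from one to the other with $\mathit{fw}$ holding at its midpoint, exploiting that $\mathit{wired}$ is traversable in both directions so that a back-and-forth detour can be inserted exactly when the bits disagree. Establishing that these path-existence conditions hold for the right pairs of leaves and for no others --- so that symbols are copied to precisely the address-matching successor leaf and nowhere else --- is where the bulk of the verification effort lies; the remaining rules (configuration creation, state and head-bit updates, acceptance propagation) are comparatively routine to check against the ATM semantics.
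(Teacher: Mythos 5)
Your proposal takes essentially the same route as the paper: it is the very ATM reduction developed in this section, with correctness argued via the canonical model, a faithfulness induction along the configuration tree and memory trees, backward acceptance propagation, and the same concluding size and arity observations (all predicates unary or binary, so arity $2$ suffices). The paper leaves the faithfulness and memory-preservation verifications implicit, and your plan correctly isolates the one genuinely delicate check it presupposes --- that the $2n{+}6$-step \emph{wired}-paths with the $\mathit{fw}$ midpoint marker connect exactly the address-matching leaf pairs of consecutive configurations, using the bidirectional back-and-forth detour --- so nothing is missing.
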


\subsection{Combined Complexity of Weakly Guarded Frontier-One Rules is \ExpExpTime-hard}

Our last hardness result will be established along the same lines as
the preceding one, namely by a reduction from the word problem of an
alternating Turing machine with exponential space. In fact, we will
also reuse part of the reduction and arguments presented in the
previous section. In particular, we assume everything up to formula
(14) as before except for the following modifications:
\begin{itemize}
\item Remove from Rule (2) all atoms built from the predicates
$\mathit{wire}$ and $\mathit{fw}$.
\item Replace the Rule (5) with the following rules:
\begin{itemize}
\item for every $\delta=(q,\gamma,q',\gamma',D)\in \Delta$ the rule
$$
\mathit{trans}_\delta(x,y) \wedge \mathit{accept}(y) \to
\mathit{accept}_\delta(x)
$$
\item the rule
$$
\bigwedge_{\delta=(q,\gamma,q',\gamma',D)\in \Delta}
\mathit{accept}_\delta(x) \to \mathit{accept}(x)
$$
\end{itemize}
Thereby, we introduce a fresh predicate $\mathit{accept}_\delta$ for
every $\delta=(q,\gamma,q',\gamma',D)\in \Delta$. Clearly this set
of rules has the same consequences as the previous Rule (5), however
it consists merely of \RgRfRrRoR rules.
\end{itemize}

This puts the ATM's ``state space'' and transition relations into
place. Now we turn to the task of encoding the exponential tape. As
opposed to the previous encoding, we now exploit that we can use
predicates of arbitrary arity. Thus, for every $\gamma \in \Gamma$
we introduce an $n{+}1$-ary predicate $\mathit{ontape}_\gamma$ where
the first $n$ positions are used for the binary encoding of a tape
address and the $n{+}1$st position contains the configuration
element that this tape information refers to. Our encoding will
ensure that the first $n$ positions of these predicates are
non-affected. Likewise we will use $n{+}1$-ary predicates
$\mathit{head}$ and $\mathit{nohead}$ to store for each tape
position of a configuration if the ATM's head is currently in that
position or not. For this purpose, we introduce auxiliary constants
to encode whether address bits are high, low, or unknown. Thus we add to $F$ the following atoms:

$$
high(1),\ \ bit(1),\ \ low(0), \ \ bit(0).
$$

Now, for every $k$ we introduce a binary predicate $bit_k$ (whose
second position is non-affected) with the intention to let
$bit_k(x,0)$ hold whenever $lbit_k(x)$ holds and to also have
$hbit_k(x)$ imply $bit_k(x,1)$, which is achieved by the following
two rules:

$$
lbit_k(x) \wedge low(z) \to bit_k(x,z),
$$

$$
hbit_k(x) \wedge high(z) \to bit_k(x,z).
$$

Moreover, we make sure that the binary encoding of the head
position's address that we find attached to the configuration
elements through the $bit_k$ predicates is transferred into the
$head$ predicate as stated above:

$$
\mathit{conf}(x) \wedge \bigwedge_{k=1}^{n} bit_k(x,z_k) \to
head(z_1,\ldots,z_n,x).
$$

Additionally, we make sure that for all \textbf{other} binary
addresses $z_1\ldots z_n$ the according $nohead$ atoms hold, by
adding for every $i$ in the range from $1$ to $n$ the rule

$$
\mathit{conf}(x) \wedge bit_i(x,w) \wedge other(w,z_i)
\bigwedge_{k=1}^{n} bit(z_k) \to nohead(z_1,\ldots,z_n,x).
$$

Furthermore, we have to ensure that the symbol $\gamma$ found at any
configuration's head position (expressed by the corresponding
$ontape_\gamma$ atom) is also directly attached to that
configuration by the according unary $symbol_\gamma$ atom:

$$
head(z_1,\ldots,z_n,x) \wedge ontape_\gamma(z_1,\ldots,z_n,x) \to
symbol_\gamma(x).
$$

Also, the symbol $\gamma'$ written to the tape at the previous
configuration's head position as a result of some transition
$\delta$ can be realized easily:

$$
head(z_1,\ldots,z_n,x)\wedge trans_\delta(x,y) \to
ontape_{\gamma'}(z_1,\ldots,z_n,x)
$$

On the other hand, all previous nohead-positions of the tape will
keep their symbol, as made sure by the rules (for all $\gamma\in
\Gamma$):

$$
nohead(z_1,\ldots,z_n,x)\wedge ontape_{\gamma}(z_1,\ldots,z_n,x)
\wedge trans_\delta(x,y) \to ontape_{\gamma}(z_1,\ldots,z_n,y).
$$

To ensure that the initial configuration and its tape carry all the
necessary information, we initialize the head position address to
$0$ by adding to $F$, as in the previous section, the facts:

\begin{eqnarray}
\{lbit_1(init),\ldots,lbit_n(init)\}.
\end{eqnarray}

To make sure that all tape cells initially contain the blank symbol $\Box$, we extend
$\mathcal{R}$ by the rule:

\begin{eqnarray}
\mathit{firstconf}(x)\wedge \bigwedge_{k=1}^{n} bit(z_k) \to
\mathit{ontape}_{\Box}(z_1,\ldots,z_n,x).
\end{eqnarray}

Concluding, we have just built $F$, $\mathcal{R}$ and ${Q}$ with the
desired properties. Moreover, $\mathcal{R}$ consists of only
\RwRgRfRrRoR rules. Thus we arrive at the desired theorem.

\begin{theorem}The combined complexity of CQ
entailment over constant-free \RwRgRfRrRoR rules of unbounded arity
is \ExpExpTime{}-hard.
\end{theorem}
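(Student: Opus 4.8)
The plan is to prove \ExpExpTime{}-hardness by a polynomial-time reduction from the word problem of an exponentially space-bounded alternating Turing machine, reusing the reduction of the preceding section up to formula~(14) with the modifications already announced. Given such a machine, I would first pass to the machine $\mathcal{M}^*_\mathfrak{w}$ that accepts the empty word exactly when $\mathcal{M}^*$ accepts $\mathfrak{w}$, and then construct $F$, $\mathcal{R}$ and $Q=\mathit{accept}(\mathit{init})$ of size polynomial in $n=|\mathfrak{w}|$. The configuration tree (the $\mathit{trans}_\delta$/$\mathit{conf}$/$\mathit{state}_q$ rules), the backward propagation of acceptance --- now routed through the auxiliary predicates $\mathit{accept}_\delta$ so as to stay frontier-one --- and the head-position arithmetic on the $\mathit{lbit}_k/\mathit{hbit}_k$ bits are carried over verbatim. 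The essential new ingredient, made possible by allowing unbounded arity, is to store the exponential tape \emph{directly}: an $(n{+}1)$-ary atom $\mathit{ontape}_\gamma(z_1,\dots,z_n,x)$ asserts that in the configuration represented by $x$ the cell with binary address $z_1\cdots z_n$ holds the symbol $\gamma$, and analogous $(n{+}1)$-ary atoms $\mathit{head}$ and $\mathit{nohead}$ mark the head cell and all other cells. This replaces the polynomial-depth binary memory tree of the previous section entirely.

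Two obligations are routine. Polynomiality and computability of $F$ and $\mathcal{R}$ are immediate from the construction. For membership in \RwRgRfRrRoR{}, the key observation I would make explicit is that the $n$ address positions of $\mathit{ontape}_\gamma$, $\mathit{head}$, $\mathit{nohead}$ and the second position of each $\mathit{bit}_k$ are \emph{non-affected}: they are only ever filled by the constants $0$ and $1$, propagated from the facts $\mathit{bit}(0)$, $\mathit{bit}(1)$, $\mathit{low}(0)$ and $\mathit{high}(1)$, and never by an existentially generated term. Hence the body variables $z_1,\dots,z_n$ sitting in those positions are non-affected and need not be guarded, and the only affected variable occurring in a rule's frontier is the configuration element it talks about, which is always covered by a body atom (the weak guard). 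Thus every rule has a single affected frontier variable and a weak guard, i.e. it is weakly guarded and weakly frontier-one, as required --- and this is precisely why the high-arity address encoding lands in \RwRgRfRrRoR{} rather than in \RfRrRoR{}.

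The substance of the proof is the correctness of the reduction: $F,\mathcal{R}\models\mathit{accept}(\mathit{init})$ iff $\mathcal{M}^*_\mathfrak{w}$ accepts $\epsilon$. Since $\mathcal{R}$ is Horn, $(F,\mathcal{R})$ has a canonical (least) model, and by soundness and completeness of forward chaining it suffices to analyse the saturation. I would prove, by induction along the configuration tree, that this model faithfully mirrors the computation: (i) the $\mathit{trans}_\delta$ rules generate exactly one element per reachable configuration together with its direct successors; (ii) for every configuration element $x$ and every address $\bar z\in\{0,1\}^n$ there is exactly one $\gamma$ with $\mathit{ontape}_\gamma(\bar z,x)$, equal to the symbol on that tape cell; (iii) $\mathit{head}(\bar z,x)$ holds precisely for the address equal to the head position stored in $x$ through the $\mathit{bit}_k$ atoms (and $\mathit{nohead}(\bar z,x)$ for every other address, using the $\mathit{other}$ facts to witness a differing bit); and (iv) $\mathit{accept}(x)$ holds iff the configuration of $x$ is accepting. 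Instantiating~(iv) at $\mathit{init}$ then gives the theorem.

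I expect the inductive maintenance of invariant~(ii), faithful memory evolution, to be the main obstacle. One must show that on passing from $x$ to a successor $y$ the preservation rule fires for $(\bar z,y)$ exactly when $\bar z$ is a $\mathit{nohead}$ address of $x$ carrying some symbol $\gamma$, that the head cell of the previous configuration receives precisely the transition symbol $\gamma'$, and --- crucially --- that no cell ever acquires two distinct symbols. The bit-by-bit matching of addresses is handled by the argument-based $\mathit{bit}_k$ encoding together with the $\mathit{head}/\mathit{nohead}$ rules; I would argue that the body of the preservation rule is satisfiable with target $(\bar z,y)$ if and only if the source address coincides and is not the head address. Because the encoding is purely positive, the delicate direction is soundness (no over-derivation of $\mathit{ontape}_\gamma$ atoms), which is settled by minimality of the canonical model, while completeness is obtained by exhibiting the firing homomorphisms explicitly. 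Once~(ii)--(iii) are in place, invariant~(iv) follows directly from the (unchanged) acceptance-propagation rules and the definition of accepting configurations.
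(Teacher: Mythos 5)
Your proposal takes essentially the same route as the paper: the identical reduction from an exponentially space-bounded ATM, reusing the \RfRrRoR{} construction up to formula (14) with the removal of the $\mathit{wire}$/$\mathit{fw}$ atoms and the $\mathit{accept}_\delta$ splitting of the universal-acceptance rule, and the same unbounded-arity tape encoding via $(n{+}1)$-ary $\mathit{ontape}_\gamma$, $\mathit{head}$ and $\mathit{nohead}$ predicates whose address positions are non-affected because they are only ever filled by the constants $0$ and $1$ propagated from $\mathit{bit}$/$\mathit{low}$/$\mathit{high}$ facts. Your explicit correctness invariants (i)--(iv) and the verification that every rule is weakly guarded with a single affected frontier variable only spell out what the paper leaves implicit, so the proposal is correct.
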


\section{Body-Acyclic \RfRgR and \RfRrRoR Rules} \label{sec:ba} %


In this section, we study the complexity of frontier-guarded rules with an acyclic body. The acyclicity notion considered here is a slight adaptation of \emph{hypergraph acyclicity} stemming from database theory. We will show that body-acyclic \RfRgR rules coincide with guarded rules: indeed, a body-acyclic \RfRgR rule can be linearly rewritten as a set of guarded rules, and a guarded rule is a special case of body-acyclic rule.

%

Let us consider the \emph{hypergraph} naturally associated with a set of atoms $S$: its set of nodes is in bijection with $\terms{S}$ and its multiset of hyperedges is in bijection with $S$, with each hyperedge being the subset of nodes assigned to the terms of the corresponding atom.

To simplify the next notions, we first proceed with some normalization of a set of \RfRgR rules, such that all rules have an empty frontier (so-called ``disconnected rules'' \citep{blm:10}) or a ``variable-connected'' body:
\begin{enumerate}
\item Let $R$ be a \RfRgR rule  with a non-empty frontier and let $\mathcal B$ be the hypergraph associated with $body(R)$. Split each node in $\mathcal B$ assigned to a constant into as many nodes as hyperedges it belongs to (thus each constant node obtained belongs to a single hyperedge); let $\mathcal B'$ be the hypergraph obtained; let $C_f$ be the connected component of $\mathcal B'$ that contains the frontier guard(s); if there are several frontier guards, they are all in $C_f$.
\item Let $R_0 = \mathcal B' \setminus C_f \rightarrow p_0$, where $p_0$ is a new nullary predicate.
\item Let $R_f = C_f \cup \{p_0\} \rightarrow head(R)$.
\end{enumerate}

Let $(F, \mathcal R, Q)$ be an instance of the entailment problem, where $\mathcal R$ is a set of \RfRgR rules. All non-disconnected rules from $\mathcal R$ are processed as described above, which yields an equivalent set of \RfRgR rules. Let us denote this set by $\mathcal R_{disc} \cup \mathcal R'$, where $R_{disc}$ is the set of disconnected rules, i.e., initial disconnected rules and obtained rules of form $R_0$. The rules in  $ \mathcal R_{disc}$ are integrated into $F$ as described in \citep{blm:10}, which can be performed with $|\mathcal R_{disc}|$ calls to an oracle solving the entailment problem for \RfRgR rules. Briefly, for each $R_d = (B_d, H_d) \in \mathcal R_{disc}$, it is checked whether $F, \mathcal R' \models B_d$: if yes, $H_d$ is added to $F$ and $R_d$ is removed from $\mathcal R_{disc}$; the process is repeated until stability of $\mathcal R_{disc}$. Let $F'$ the fact obtained: for any BCQ $Q$, $F, \mathcal R \models Q$ iff $F',  \mathcal R'\models Q$.
 From now on, we thus assume that all \RfRgR rules have a non-empty frontier and their body is ``variable-connected'', i.e., the associated hypergraph is connected and cannot be disconnected by the above step 1.


The acyclicity of a hypergraph $\mathcal H$ is usually defined with respect to its so-called dual graph, whose nodes are the hyperedges of $\mathcal H$ and edges encode non-empty intersections between  hyperedges of $\mathcal H$. We define below a notion close to the dual graph, that we call \emph{decomposition graph} of a set of atoms. In a decomposition graph, guarded atoms are grouped together with one of their guard into a single node, and constants are not taken into account in atom intersections (it follows that the associated acyclicity notion is slightly more general than hypergraph acyclicity).

\begin{definition}[Decomposition Graph]
Let $S$ be a set of atoms. A \emph{decomposition graph} of $S$ is an undirected labeled graph $D_S=(V,E,\fun{atoms},\fun{vars})$, where $V$ is the set of nodes, $E$ is the set of edges, $\fun{atoms}$ and $\fun{vars}$ are labeling mappings of nodes and of edges respectively, such that:
\begin{itemize}
\item Let $\{C_1, \ldots C_p\}$ be a partition of $S$ such that in each $C_i$ there is an atom that guards the other atoms of $C_i$, with $p$ being minimal for this property. Then $V = \{v_1, \ldots v_p\}$ and for $1 \leq i \leq p$,   $atoms(v_i) = C_i$.
\item For $1 \leq i,j \leq p$, $i \neq j$, there is an edge $v_iv_j$ if $C_i$ and $C_j$ share a variable; $vars(v_iv_j) = \vars{C_i} \cap \vars{C_j}$.
\end{itemize}
\end{definition}

Several decomposition graphs can be assigned to $S$, however they have all the same structure and the same labeling on edges.  The only difference between them comes from the choice of a guard when an atom is guarded by several guards with incomparable sets of variables. Now, considering the decomposition graph instead of the dual graph, the acyclicity of a set of atoms is then defined similarly to that of a hypergraph.

\begin{definition}[Acyclicity of an Atom Set, Body-Acyclic \RfRgR Rule]
Let $S$ be a set of atoms and $D_S$ be a decomposition graph of $S$. An edge $v_iv_j$ in $D_S$ is said to be \emph{removable} if there is another path $\lambda$ between $v_i$ and $v_j$ in $D_S$  such that for each each edge $v_kv_l$ in $\lambda$, $\vars{v_iv_j} \subseteq \vars{v_kv_l}$. An \emph{acyclic covering} of $S$ is a forest obtained from $D_S$ by removing removable edges only. $S$ is said to be \emph{acyclic} if has an acyclic covering. An \RfRgR rule $R$ is said to be \emph{body-acyclic} \emph{(ba)} if $\body{R}$ is acyclic.
\end{definition}

\begin{example} \label{ex-dec} Let $S = \{p_1(x), p_2(x,u), p_2(y,z), p_3(y, z, u), p_2(u,v), p_3(u,v,x)\}$. $D_S$ has set of nodes $\{v_1, v_2, v_3\}$, with $\atoms{v_1} =\{p_1(x), p_2(x,u)\}$, $\atoms{v_2} =\{p_2(y,z), p_3(y, z, u)\}$ and $\atoms{v_3} =\{p_2(u,v), p_3(u,v,x)\}$, and all edges between these nodes, with $\vars{v_1v_2} = \vars{v_2v_3} = \{u\}$ and $\vars{v_1v_3} = \{x,u\}$. An acyclic covering of $S$ is obtained by removing edge $v_1v_2$ or $v_2v_3$.
\end{example}


Let us point out that a set of atoms is acyclic according to the above definition if and only if the associated existentially closed conjunctive formula belongs to the guarded fragment of first-order logic (see \emph{acyclic guarded covering} in \citep{kerdiles:01} and \citep{chein-mugnier:09} for details about this equivalence). Note also that the decomposition graph associated with the body of a guarded rule is restricted to a single node. Thus, guarded rules are trivially ba-\RfRgR rules.

Given a set of atoms $S$, checking whether it is acyclic, and if so, outputing one of its acyclic coverings can be performed in linear time (from \cite{Maier} about the computation of a join tree in databases).

Let $R$ be a variable-connected ba-\RfRgR rule and let $T$ be an acyclic covering of \emph{body(R)}, which is thus a tree. Let $\{v_1, ..., v_p\}$ be the nodes in $T$ and let $v_r$ be a node such that $\atoms{v_r}$ contains a frontier guard. $T$ is considered as rooted in $v_r$, which yields a direction of its edges from children to parents: a directed edge $(v_i,v_j)$ is from a child to its parent.  $R$ is translated into a set of guarded rules $\{R_1, \ldots R_p\}$ as follows:
\begin{itemize}
\item  To each edge $(v_i,v_j)$ is assigned the atom $a_i=q_i(\vars{v_iv_j})$, where $q_i$ is a new predicate;
\item To each node $v_i$, $i \neq r$, is assigned the rule:\\
$R_i = \atoms{v_i} \cup \{ a_k | v_k$ child of $v_i \} \rightarrow a_i$
\item To the node  $v_r$ is assigned the rule: \\
$R_r = \atoms{v_r }\cup \{ a_k | v_k$ child of $v_r \}  \rightarrow head(R)$
\end{itemize}

Note that this translation is the identity on guarded rules.

\paragraph{Example \ref{ex-dec}}(Contd.) Let $R = p_1(x) \wedge p_2(x,u) \wedge p_2(y,z) \wedge p_3(y, z, u) \wedge p_2(u,v) \wedge p_3(u,v,x) \rightarrow \head{R}$, with $\fr{R} = \{u,v\}$. Consider the acyclic covering with set of nodes  $\{v_1, v_2, v_3\}$, with $\atoms{v_1} =\{p_1(x), p_2(x,u)\}$,  $\atoms{v_2} =\{p_2(y,z), p_3(y, z, u)\}$ and  $\atoms{v_3} =\{p_2(u,v), p_3(u,v,x)\}$, and set of edges $\{v_1v_3, v_2v_3\}$. On has $\vars{v_1v_3}= \{u\}$ and $\vars{v_1v_3} = \{x,u\}$. $v_3$ is the root. The obtained guarded rules are: \\
$R_1 = p_1(x) \wedge p_2(x,u) \rightarrow q_1(x,u)$\\
$R_2 = p_2(y,z) \wedge p_3(y, z, u)  \rightarrow q_2(u)$\\
$R_3 =  p_2(u,v) \wedge p_3(u,v,t)  \wedge q_1(x,u) \wedge q_2(u)  \rightarrow \head{R}$\\

 \begin{property} Guarded rules and ba-\RfRgR rules are semantically equivalent in the following sense: a guarded rule is a ba-\RfRgR rule and any set of ba-\RfRgR rules can be translated into a semantically equivalent set of guarded rules.
 \end{property}


The above translation is polynomial in the size of the rule and arity-preserving. Thus, complexity results on guarded rules apply to ba-\RfRgR rules. In particular ba-\RfRgR rules are \ExpTime-complete for bounded-arity combined complexity, while \RfRgR rules are \ExpExpTime-complete.

Actually the  \ExpTime{} lower bound already holds for combined complexity with arity bounded by 2. Indeed, standard reasoning in the much weaker description logic reduced normalized Horn-$\mathcal{ALC}$ (cf. Section~\ref{sec:HornALC}), which is a fragment of ba-\RgRfRrRoR rules with maximal arity of $2$, is already \ExpTime-Hard, as cited in Theorem~\ref{thm-HornALC}. It follows that ba-\RgRfRrRoR rules  are  \ExpTime-Hard for bounded-arity in combined complexity. One could have expected ba-frontier-1 rules to be simpler than ba-\RfRgR rules. In fact, they have the same data complexity and the same  bounded-arity combined complexity. The only remaining question, for which we have no answer yet, is whether they are simpler in the unbounded arity case.

%
%


Finally, let us point out that the acyclicity of rule bodies alone is not enough to guarantee a lower complexity, and even decidability: that the head of a rule shares variables with only one node of the decomposition graph (thus, that the frontier is guarded) is crucial. Without this assumption, the entailment problem remains undecidable. \footnote{See for instance \citep{baget-mugnier:02},  which provides a reduction from the word problem in a semi-Thue system, known to be undecidable, to the CQ entailment problem with existential rules (in a conceptual graph setting):  this reduction yields existential rules with predicate arity bounded by 2, a body restricted to a path and a frontier of size 2.}

\section{Related Work} 
\label{sec-related}

In this section, discuss the relationship of the existential rule fragments considered here with another major paradigm in logic-based knowledge representation: description logics. Also, we will point out similarities of the techniques applied in the presented algorithm with reasoning approaches established for other logics.


\subsection{Relationships to Horn Description Logics and their Extensions}

The relationship of description logics and existential rules has often been recognized. In particular Horn-DLs \cite{HMS:hornshiq,KRH:hornAAAI,KRH:HornDLs2013} share many properties with existential rules such as the existence of a (homomorphically unique) canonical model. Crucial differences between the two approaches are that (1) as opposed to DLs, existential rules allow for predicates of arity greater than two as well as for the description of non-tree shaped terminological information and (2) as opposed to existential rules, expressive DLs allow for a tighter integration of cardinality constraints to a degree (at least currently) unachieved by existential rules.

In the following, we will point out which Horn-DLs are subsumed by which existential rules fragments. We will refrain from providing full translations and restrict ourselves to examples that provide the underlying intuition.

The description logic $\mathcal{EL}$ essentially allows for encoding implications of tree-shaped substructures in a model. For instance the statement ``Everybody who has a caring mother and a caring father has a nice home'' can be expressed by the $\mathcal{EL}$ axiom $\exists \mathit{hasMother}.\mathit{Caring} \sqcap \exists \mathit{hasFather}.\mathit{Caring} \sqsubseteq \exists \mathit{hasHome}.\mathit{Nice}$ which is equivalent to the existential rule $$\mathit{hasMother}(x,y) \wedge \mathit{Caring}(y) \wedge \mathit{hasFather}(x,z) \wedge \mathit{Caring}(z) \to \mathit{hasHome}(x,w) \wedge \mathit{Nice}(w).$$ Horn-$\mathcal{ALC}$ is more expressive than $\mathcal{EL}$ in that it allows to express some sort of universal quantification such as in ``Whenever some caring person has children, all of them are happy'' denoted by $\mathit{Caring} \sqsubseteq \forall\mathit{hasChild}$ which corresponds to the existential rule $\mathit{Caring}(x) \wedge \mathit{hasChild}(x,y) \to \mathit{Happy}(y)$.

It is not hard to see that the Horn-DLs $\mathcal{EL}$ and Horn-$\mathcal{ALC}$ are captured by \RfRrRoR rules; they can even be linearly rewritten into \RgRfRrRoR rules when auxiliary predicates are allowed (as it is often done when normalizing DL knowledge bases). This still holds when these DL languages are extended by inverses (indicated by adding an $\mathcal{I}$ to the name of the DL: $\mathcal{ELI}$, Horn-$\mathcal{ALCI}$) and/or nominals (indicated by adding an $\mathcal{O}$). In the latter case, constants must occur in the rules). For instance, the $\mathcal{ELIO}$ proposition ``Everybody who is born in Germany likes some soccer team which has a member who is also a member of the the German national team'', written in DL notation $$\exists \mathit{bornin}.\{germany\}\sqsubseteq \exists\mathit{likes}.(\mathit{SoccerTeam}\sqcap \exists hasMember.\exists hasMember^-.\{gnt\})$$, in existential rules form $$\mathit{bornin}(x,germany) \to \mathit{likes}(x,y) \wedge \mathit{SoccerTeam}(y) \wedge hasMember(y,z) \wedge hasMember(gnt,z)$$

\emph{Role hierarchies} ($\mathcal{H}$), allow to express generalization/specialization relationships on binary predicates (such as $\mathit{fatherOf}$ implying $\mathit{parentOf}$). Adding this feature to any of the abovementioned description logics requires to use rules with frontier size 2 and thus leads outside the frontier-one fragment. Still a linear translation into $\RgR$ rules remains possible.\footnote{It might, however, be noteworthy that it is possible to come up with a polynomial translation into \RgRfRrRoR rules by materializing the subsumption hierarchy of the binary predicates upfront and, whenever a binary atom is created by a rule ``creating'' all the subsumed atoms at the same time.}. Going further to DLs that feature functionality or transitivity of binary predicates leads to existential rule fragments which are not longer guarded in any way considered here. The definition of existential rule fragments capturing these expressive description logics is subject of ongoing research.

Diverse proposals have been made to overcome the structural restrictions of DLs, i.e. to allow to express non-tree-shaped relationships in the terminological knowledge. \emph{Description graphs} \cite{DBLP:journals/ai/MotikGHS09} constitute one of these endeavors, where the existentially quantified structure in the head of a DL axiom is allowed to be arbitrarily graph-shaped and, additionally, there are datalog rules operating locally on these graph structures. It is straight-forward that the extension of any DL up to Horn-$\mathcal{ACLHIO}$ by description graphs can be coded into $\RgR$ existential rules.

Another suggestion made to allow for non-tree shaped structures in both body and head of a DL axiom is to introduce \emph{DL-safe variables}, that is, variables that are only allowed to be bound to ``named individuals'' (i.e., domain elements denoted by constants). In a setting where each statement can carry either exclusively safe variables or exclusively non-safe ones, this can be captured by the notion of \emph{DL-safe rules} \cite{DLsafe}. A more liberal approach is that of the so-called \emph{nominal schemas} \cite{DBLP:conf/www/KrotzschMKH11,DBLP:conf/kr/KrotzschR14}, where the two types of variables can occur jointly in the same statement. In both cases, the \RwRgR fragment captures these extensions when applied to DLs up to Horn-$\mathcal{ALCHIO}$. Note that there is a direct correspondence between non-affected positions in existential rules and DL-safe variables: both can only carry domain elements corresponding to elements present in the initial data.

%
%
%
%
%
%
%

\subsection{Pattern- or Type-Based Reasoning}

For many logics, reasoning algorithms as well as related complexity arguments are based on the notion of types.
On an intuitive level, types represent ``configurations'' which may occur in a model. In the easiest case (as in some description logics), such configurations might be sets of unary predicates $\{p_1,\ldots,p_n\}$, where $\{p_1,\ldots,p_n\}$ occurring in a model just means that there is an individual $a$ that is in the extension of every $p_i$. More complex notions of types may refer to more than just one individual, leading to notions like 2-types, also known as dominoes \cite{DBLP:journals/corr/abs-1202-0914}, star-types \cite{PH:C2complex}, mosaics \cite{mlhandbook}, or types based on trees \cite{RuGl10a}. Often, reasoning in a logic can be carried out by only considering the (multi-)set of types that are realized in a model. Typical reasoning strategies may then compute the set of these types bottom-up (as in tableaux with anywhere-blocking ), top-down (as in type-elimination-based procedures), or describe their multiplicity by means of equational systems. The applicability of such strategies guarantees decidability whenever the overall set of possible types is finite.
It is not hard to see that, in our case, abstract patterns can be seen as ``graph types'' where the bound on the tree-width and the finiteness of the vocabulary guarantee the finiteness of the set of types, and therefore the effectiveness of the applied blocking strategy.

\subsection{Consequence-Driven Reasoning}

%
%
%

Our idea of the saturation of pattern rules in Section ... has many similarities with the approach of consequence-driven reasoning in description logics. In both cases, logical sentences that are consequences of a given theory are materialized. To see this, one should be aware that every evolution rule $\mathbb{P}_1 \rightsquigarrow \mathbb{P}_2$ corresponds to an existential rule $$\bigwedge_{(G,\pi)\in \mathbb{P}_1} \pi^\mathrm{safe}(G) \to \bigwedge_{(G,\pi)\in \mathbb{P}_2} \pi^\mathrm{safe}(G)$$ and every creation rule $\mathbb{P}_1 \rightsquigarrow \lambda.\mathbb{P}_2$ corresponds to an existential rule $$\bigwedge_{(G,\pi)\in \mathbb{P}_1} \pi^\mathrm{safe}(G) \to \bigwedge_{(G,\pi)\in \mathbb{P}_2} \pi^\mathrm{safe}(\lambda(G)).$$ It can then be readily checked that the deduction calculus presented in Fig.~\ref{fig-deductioncalculus} is indeed sound. As such, the presented algorithm has indeed similarities with type-based consequence driven reasoning approaches as, e.g., layed out by \cite{DBLP:conf/ijcai/Kazakov09} and \cite{DBLP:conf/kr/OrtizRS10,DBLP:conf/ijcai/OrtizRS11}. The crucial difference here is that the mentioned works use only 1-types, whereas the patterns defined characterize larger ``clusters'' of elements.

\subsection{Tableaux and Blocking}

It is well-known that the chase known from databases has many commonalities with
the semantic tableau method in FOL \cite{TableauxBeth,smullyan1995first}, which has also been used in many other logics, most notably DLs \cite{TableauxInDLs,DBLP:journals/jar/HorrocksS07}. Note that the generic semantic tableaux for first order logic only gives rise to a semidecision procedure. In order to obtain a decision procedure for a restricted logic, termination needs to be guaranteed, typically through establishing a tree(-like) model property and the detection of repetitions in the course of the tableaux construction, leading to the idea of \emph{blocking} as soon as repeating types occur (depending on the expressivity of the logic, 1-types, 2-types or even larger types have to be considered). Clearly, the blocking technique used by us in the construction of the full blocked tree can be seen as a pattern-based anywhere blocking.

\subsection{Relationships to other work on guarded existential rules}

As already mentioned, guarded and weakly-guarded rules were introduced in \cite{cali-gottlob-kifer:08,Cali2009,cgk:13}.
A fundamental notion used to bound the depth of the breadth-first saturation (with a bound depending on $\mathcal R$ and $Q$)
is that of the type of a guard atom in the saturation (a guard atom in $\alpha_\infty(F, \mathcal R)$ is the image of a rule guard by a rule application, and the type of an atom $a$ is the set all atoms in $\alpha_\infty(F, \mathcal R)$ with arguments included in $\terms{a}$). This notion has some similarities with our bag patterns, without being exactly the restriction of bag patterns to the guarded case.

The notion of affected position/variable was
refined into that of \emph{jointly affected} position/variable in
\citep{kr:11}. This yields the new classes of \emph{jointly guarded}
rules and \emph{jointly frontier-guarded} rules, which respectively
generalize \RwRgR rules and \RwRfRgR rules. Since these new classes
are \RgRbRtRsR, our results apply to them. In particular, the data
and combined complexities  of jointly frontier-guarded rules
directly follow from those of \RgRbRtRsR and \RwRfRgR:
\ExpTime-complete data complexity, and \ExpExpTime-complete combined
complexity (in both bounded and unbounded predicate arity cases).
Since \RwRgR rules have the same complexities as \RwRfRgR rules for
data complexity and combined complexity with bounded arity, these
complexities also apply to jointly guarded rules; for combined
complexity with unbounded arity, the bounds are not tight
(\ExpTime-hardness from the result on \RwRgR and
\ExpExpTime-membership from the result on \RwRfRgR). Note that
\citep{kr:11} also provides a further generalization, namely
\emph{glut frontier-guarded}, which is \RbRtRsR, but not \RgRbRtRsR
nor \RfReRsR.

Combinations of the \RgRbRtRsR family with other families of rules have been proposed, by restricting possible interactions between rules of the different kinds.  In \cite{blms:11}, conditions expressed on the strongly connected components of a graph of rule dependencies allow to combine \RgRbRtRsR, \RfReRsR and \RfRuRsR sets of rules. In \cite{tplp-13-gmp}, a notion called tameness allows to combine guarded rules with sticky rules (an expressive \RfReRsR concrete class of rules) by restricting the interactions between the sticky rules and the guard atoms in the guarded rules.

Finally, let us cite some very recent work related to the guarded family of existential rules. The expressiveness of the \RwRgR and \RwRfRgR fragments was studied in \cite{pods-14-grs}. Other work has analyzed the complexity of entailment with guarded rules extended with disjunction \cite{bmp:13} or with stable negation \cite{ghkl:14}.

\subsection{Combined Approach}

The combined approach \cite{lutz:09,kr-10-kltwz}, designed for $\mathcal{EL}$ and the DL-Lite familly, share some similarities with our approach. The combined approach is a two-step process. First, some materialization is performed. In order to ensure finiteness of this step, an over-specialization of the canonical model is thus computed. This over-specialization requires thus a rewriting of the query, in order to recover soundness. This rewriting may require the ontology. By comparison, our approach computes a materialization that is less specific than the saturation. It is thus the completeness that has to be recovered, through a change of the querying operation, which is not a simple homomorphism anymore, but is based on the notion of APT-homomorphism.



\section{Conclusion} 

We have introduced the notion of greedy bounded-treewidth sets of
existential rules that subsumes guarded rules, as well as their known
generalizations, and gives rise to a generic and worse-case optimal
algorithm for deciding conjunctive query entailment. Moreover, we
have classified known \RgRbRtRsR subclasses with respect to their
combined and data complexities.

The existential rule framework is young and combines techniques from different research fields (such as databases, description logics, rule-based languages). 
A lot of work is still to be done to deepen its understanding, design efficient algorithms and develop its applications.  

It remains an open question whether
 \RgRbRtRsR is recognizable. We conjecture that the answer is yes. 
  However, even if this question is
interesting from a theoretical viewpoint, we recall that  \RgRbRtRsR is not more expressive than  \RwRfRgR,
and furthermore, any \RgRbRtRsR set of rules can be \emph{polynomially} translated into a  \RwRfRgR set of rules,
while preserving entailment (Section
\ref{sec:gbts}). Moreover, we built a reduction from the entailment problem, where
$\mathcal{R}$ is \RfRgR and constant-free, to the problem 
of checking if some rule set $\mathcal{R}'$ is \RgRbRtRsR (this reduction is not included in this paper, since it is only one step in the study of the recognizability issue).
Hence, we know that determining if some rule set is \RgRbRtRsR 
is significantly harder than for \RwRfRgR, 
 where this check can be done in polynomial time.

Future work will aim at adding rules expressing restricted forms of
equality and of useful properties such as transitivity into this
framework, while preserving decidability, and the desirable \PTime{}
data complexity of \RfRgR rules. 

We have shown in Section \ref{sec-subclasses} that the PatSat algorithm can be adapted to run with optimal worst-case complexity for fragments of the  \RgRbRtRsR family. An important research line is the optimization, implementation and practical evaluation of this algorithm and its variants. To the best of our knowledge, currently existing prototypes processing existential rules follow a backward chaining  approach, which involves rewriting the query into a union of CQs; hence, termination of the query rewriting process is ensured on  \RfRuRsR rules \cite{tods-14-gop,klmt:14}




\section*{Acknowledgments}

Micha\"el Thomazo aknowledges support from the Alexander von Humboldt foundation.

\bibliographystyle{theapa}
\bibliography{biblio}

\end{document}